\newtheorem{theorem}{Theorem}
\newtheorem{lemma}{Lemma}
\title{Partial Model Averaging in Federated Learning: Performance Guarantees and Benefits}
\author{
    Sunwoo Lee\textsuperscript{\rm 1}, Anit Kumar Sahu\textsuperscript{\rm 2}, Chaoyang He\textsuperscript{\rm 1}, Salman Avestimehr\textsuperscript{\rm 1}
}
\begin{document}

\maketitle

\begin{abstract}
  Local Stochastic Gradient Descent (SGD) with periodic model averaging (FedAvg) is a foundational algorithm in Federated Learning. The algorithm independently runs SGD on multiple workers and periodically averages the model across all the workers. When local SGD runs with many workers, however, the periodic averaging causes a significant model discrepancy across the workers making the global loss converge slowly. While recent advanced optimization methods tackle the issue focused on non-IID settings, there still exists the model discrepancy issue due to the underlying periodic model averaging. We propose a partial model averaging framework that mitigates the model discrepancy issue in Federated Learning. The partial averaging encourages the local models to stay close to each other on parameter space, and it enables to more effectively minimize the global loss. Given a fixed number of iterations and a large number of workers (128), the partial averaging achieves up to $2.2\%$ higher validation accuracy than the periodic full averaging.
\end{abstract}

\section {Introduction} \label{sec:intro}

Local Stochastic Gradient Descent (Local SGD) with periodic model averaging has been recently shown to be a promising alternative to vanilla synchronous SGD \cite{robbins1951stochastic}. The algorithm runs SGD on multiple workers independently and averages the model parameters across all the workers periodically. FedAvg \cite{mcmahan2017communication} is built around local SGD and has been shown to be effective in Federated Learning to solve problems involving non-Independent and Identically Distributed (non-IID) data. Several studies have shown that local SGD achieves linear speedup with respect to number of workers for convex and non-convex problems \cite{stich2018local,yu2019parallel,yu2019linear,wang2018cooperative,haddadpour2019local}.

While the periodic model averaging dramatically reduces the communication cost in distributed training, it causes model discrepancy across all the workers.
Due to variance of stochastic gradients and data heterogeneity, the independent local training steps can disperse the models over a wide region in the parameter space.
Averaging a large number of such different local models can significantly distract the convergence of global loss as compared to synchronous SGD that only has one global model.
The model discrepancy can adversely affect the convergence both in IID and non-IID settings.
To scale up the training to hundreds, thousands, or even millions of workers in Federated Learning, it is crucial to address this issue.

Many researchers have put much effort into addressing the model discrepancy issue in non-IID settings.
Variance Reduced Local-SGD (VRL-SGD) \cite{liang2019variance} and SCAFFOLD \cite{karimireddy2020scaffold} make use of extra control variates to accelerate the convergence by reducing variance of stochastic gradients.
FedProx \cite{li2018federated} adds a proximal term to each local loss to suppress the distance among the local models.
FedNova \cite{wang2020tackling} normalizes the magnitude of local updates across the workers so that the model averaging less distracts the global loss.
All these algorithms employ the periodic model averaging as a backbone of the model aggregation.
Thus, although they mitigate the model discrepancy caused by the data heterogeneity, the issue still exists due to the underlying periodic model averaging scheme.

Breaking the convention of periodic full model averaging, we propose a \textit{partial model averaging} framework to tackle the model discrepancy issue in Federated Learning.
Instead of allowing the workers independently update the full model parameters within each communication round, our framework synchronizes a distinct subset of the model parameters every iteration.
Such frequent synchronizations encourage all the local models to stay close to each other on parameter space, and thus the global loss is not strongly distracted when averaging many local models.
Our empirical study shows that the partial model averaging effectively suppresses the degree of model discrepancy during the training, and it results in making the global loss converge faster than the periodic averaging.
Within a fixed iteration budget, the faster convergence of the loss most likely results in achieving a higher validation accuracy in Federated Learning.
We also theoretically analyze the convergence property of the proposed algorithm for smooth and non-convex problems considering both IID and non-IID data settings.

We focus on how the partial model averaging affects the classification performance when it replaces the underlying periodic model averaging scheme in Federated Learning.
We evaluate the performance of the proposed framework across a variety of computer vision and natural language processing tasks.
Given a fixed number of iterations and a large number of workers (128), the partial averaging shows a faster convergence and achieves up to $2.2\%$ higher validation accuracy than the periodic averaging.
In addition, the partial averaging consistently accelerates the convergence across various degrees of the data heterogeneity.
These results demonstrate that the partial averaging effectively mitigates the adverse impact of the model discrepancy on the federated neural network training.
The partial averaging method has the same communication cost as the periodic averaging and does not require extra computations.

\textbf{Contributions} -- We highlight our contributions below.
\begin{enumerate}
    \item {We propose a novel partial model averaging framework for large-scale Federated Learning. The framework tackles the model discrepancy in a foundational model averaging level. Our theoretical analysis provides a convergence guarantee for non-convex problems, achieving linear speedup with respect to the number of workers.}
    \item {We explore benefits of the proposed partial averaging framework. Our empirical study demonstrates that the global loss is not strongly distracted when partially averaging the local models, which results in a faster convergence. We also report extensive experimental results across various benchmark datasets and models.}
    \item {The partial averaging framework is readily applicable to any Federated Learning algorithms. Our study introduces promising future works regarding how to harmonize the layer-wise model aggregation scheme with many Federated Learning algorithms such as FedProx, FedNova, SCAFFOLD, and adaptive averaging interval methods.}
\end{enumerate}
\section {Background} \label{sec:back}

\textbf{Local SGD with Periodic Model Averaging} --
We consider federated optimization problems of the form
\begin{align}
    \underset{\mathbf{x} \in \mathbb{R}^d}{\min}\left[F(\mathbf{x}) := \sum_{i=1}^{m} p_i F_i(\mathbf{x}) \right] \label{eq:cost},
\end{align}
where $p_i = n_i / n$ is the ratio of local data to the total dataset, and $F_i(\mathbf{x}) = \frac{1}{n_i} \sum_{\xi \in \mathcal{D}} f_i(\mathbf{x}, \xi)$
is the local objective function of client $i$.
$n$ is the global dataset size and $n_i$ is the local dataset size.


The model averaging can be expressed as follows:
\begin{equation}
\label{eq:average}
    \mathbf{u}_k = \sum_{i=1}^{m} p_i \mathbf{x}_k^i,
\end{equation}
where $m$ is the number of workers (local models), $\mathbf{x}_k^i$ is the local model of worker $i$ at iteration $k$, and $\mathbf{u}_k$ is the averaged model.
Note that, $p_i = 1 / m$ when the data is IID.
The parameter update rule of local SGD with periodic averaging (FedAvg) is as follows.
\begin{equation}
    \mathbf{x}_{k+1}^i = 
    \begin{cases}
    \sum_{i=1}^{m} p_i [\mathbf{x}_k^i - \mu g(\mathbf{x}_k^i)], & k \textrm{ mod } \tau \textrm{ is } 0\\
    \mathbf{x}_k^i -\mu g(\mathbf{x}_k^i), & \textrm{otherwise}
    \end{cases}
\end{equation}
where $\tau$ is the model averaging interval and $g(\cdot)$ is a stochastic gradient computed from a random training sample $\boldsymbol{\xi}$.
This update rule allows all the workers to independently update their own models for every $\tau$ iterations.

\textbf{Model Discrepancy} --
Assuming the local optimizers are stochastic optimization methods, the most typical training algorithm for neural network training, all $m$ local models can move toward different directions on parameter space due to the variance of the stochastic gradients.
In Federated Learning, the data heterogeneity also makes such an effect more significant.
We call the difference between the local models and the global model \textit{model discrepancy}.
If the degree of model discrepancy is large, the local models are more likely attracted to different minima adversely affecting the convergence of global loss.
Note that synchronous SGD does not have such an issue since it guarantees all the workers always view the same model parameters.

\section {Partial Model Averaging Framework}

\begin{algorithm}[t]
    \caption{Local SGD with partial model avg.}\label{alg:proposed}
    \SetKwInOut{Input}{Input}
    \Input{Initial parameters $\mathbf{x}_0$, learning rate $\eta$, and model averaging interval $\tau$}
    \For{$k=1$ to $K$}{
        A local SGD step: $\mathbf{x}_{k}^i=\mathbf{x}_{k-1}^i-\mu g(\mathbf{x}_{k-1}^i)$\;
        $j \leftarrow k \text{ mod } \tau$\;
        Average $j^{th}$ subset of the model across all $m$ workers:
        $\mathbf{u}_{(j,k)} = \frac{1}{m}\sum_{i=1}^{m} \mathbf{x}_{(j,k)}^i$\;
        Each worker updates $j^{th}$ subset of the local model: $\mathbf{x}_{(j,k)}^i = \mathbf{u}_{(j,k)}$ \;
    }
    Return $\mathbf{u}_K = \frac{1}{m}\sum_{i=1}^m \mathbf{x}_K^i$
\end{algorithm}


Algorithm \ref{alg:proposed} presents local SGD with partial model averaging.
Each worker independently runs SGD until the stop condition is satisfied ($K$ iterations).
After every SGD step, the algorithm averages a distinct subset of model parameters across all $m$ workers.
Each subset consists of $\frac{d}{\tau}$ parameters, where $d$ is the total number of model parameters and $\tau$ is the model averaging interval.
In this setting, each subset is averaged after every $\tau$ iterations.
At the end of the training, Algorithm \ref{alg:proposed} returns the fully-averaged model $\mathbf{u}_K$.

Figure \ref{fig:schematic} shows schematic illustrations of the periodic averaging (\textbf{a}) and the partial averaging (\textbf{b}).
They show the expected movement of two local models on the parameter space within one communication round ($\tau=3$).
While the periodic averaging allows fully-independent local updates, the partial averaging frequently synchronizes a part of the model parameters suppressing the model discrepancy.

In this work, we use mini-batch SGD as a local solver for simplicity.
The framework can be applied to any advanced optimizers by simply changing the parameter update rule at line 2.
For instance, FedProx~\cite{li2018federated} can be applied by replacing the $g(\mathbf{x}_k^i)$ term with the gradient computed from the FedProx loss function.

Note that Algorithm \ref{alg:proposed} does not specify how to partition the model parameters to $\tau$ subsets.
As long as the entire parameters are synchronized at least once within $\tau$ iterations, it is theoretically guaranteed to have the same maximum bound of the convergence rate.
We discuss the impact of the model partitioning on the training results in Appendix.

\begin{figure}[t]
\centering
\includegraphics[width=\columnwidth]{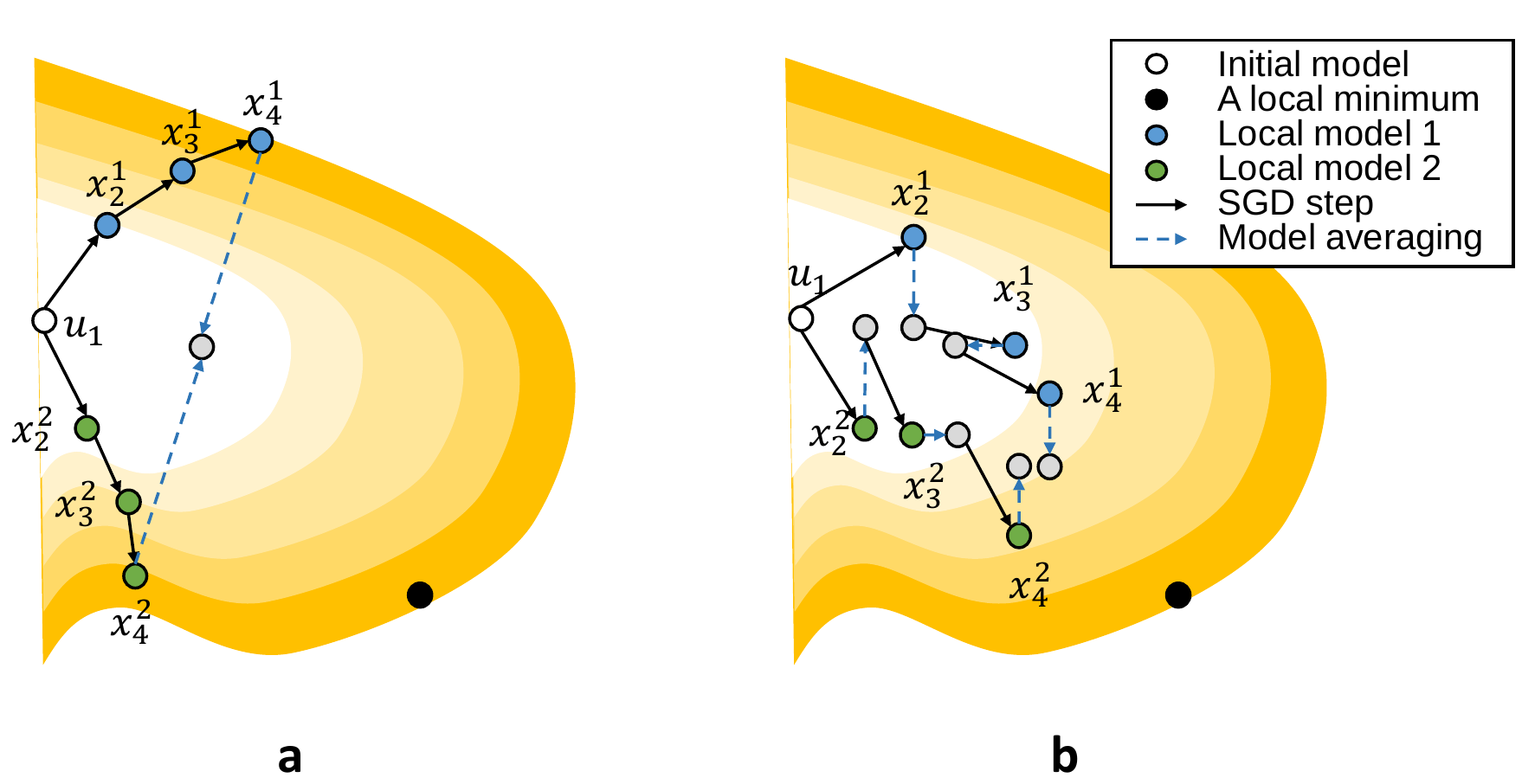}
\caption{
    Example illustrations of \textbf{a)}: periodic averaging and \textbf{b)}: partial averaging with two workers ($\tau=3$).
    While the periodic averaging allows fully-independent local updates, the partial averaging frequently synchronize a part of model suppressing the model discrepancy.
}
\label{fig:schematic}
\end{figure}
\section {Convergence Analysis}

\subsection {Preliminaries}

\textbf{Notations} -- 
To consider the model partitions in the convergence analysis, we borrow partition-wise notations and assumptions from \cite{you2019large}.
All vectors in this paper are column vectors.
$\mathbf{x} \in \mathbb{R}^d$ denotes the parameters of one local model and $m$ is the number of workers.
The model is partitioned into $\tau$ subsets such that $\mathbf{x}_{j} \in \mathbb{R}^{d_j}$ for $j \in \{ 1, \cdots, \tau \}$, where $\sum_{j=1}^{\tau} d_j = d$.
We use $g_j(\mathbf{x, \xi})$ to denote the gradient of $f(\cdot)$ with respect to $\mathbf{x}_{j}$, where $\xi$ is a single training sample.
For convenience, we use $g_j(\mathbf{x})$ instead.
The gradient computed from the whole training samples with respect to $\mathbf{x}_{j}$ is denoted by $\nabla_j F(\mathbf{x})$.
$L_j$ is Lipschitz constant of $f(\cdot)$ with respect to $\mathbf{x}_j$.
$L_{\max}$ indicates the maximum Lipschitz constant among all $\tau$ model partitions: $\textrm{max}(L_j), j \in \{1, \cdots, \tau \}$.
Likewise, $\sigma^2 = \sum_{j = 1}^{\tau} \sigma_j^2$.
We provide all the proofs in Appendix.

\subsection {Convergence Analysis for IID Data}

\textbf{Assumptions} --
We analyze the convergence rate of Algorithm \ref{alg:proposed} under the following assumptions.
\begin{enumerate}
    \item Smoothness: $f(\cdot)$ is $L_j$-smooth for all $\mathbf{x}_j$;
    \item Unbiased gradient: $\mathop{\mathbb{E}}_{\xi}[g_j(x)] = \nabla_j F(x)$;
    \item Bounded variance: $\mathop{\mathbb{E}}_{\xi}[ \| g_j(x) - \nabla_j F(x)) \|^2 ] \leq \sigma_j^2$, where $\sigma_j^2$ is a positive constant;
\end{enumerate}

\begin{theorem}
\label{theorem:iid}
Suppose all $m$ local models are initialized to the same point $\mathbf{u}_1$. Under Assumption $1 \sim 3$, if Algorithm \ref{alg:proposed} runs for $K$ iterations using the learning rate $\eta$ that satisfies $L_{\max}^2 \eta^2 \tau (\tau - 1) + \eta L_{\max} \leq 1$, then the average-squared gradient norm of $\mathbf{u}_k$ is bounded as follows
\begin{align}
    \mathop{\mathbb{E}}\left[\frac{1}{K}\sum_{i=1}^{K} \| \nabla F(\mathbf{u}_k) \|^2\right] & \leq \frac{2}{\eta K} \mathop{\mathbb{E}} \left[ F(\mathbf{u}_1) - F(\mathbf{u}_{inf}) \right] \nonumber \\
    & \quad\quad + \frac{\eta}{m} \sum_{j = 1}^{\tau} L_j \sigma_j^2 \label{eq:theorem1} \\
    & \quad\quad + \eta^2 (\tau - 1) \sum_{j = 1}^{\tau} L_j^2 \sigma_j^2 \nonumber
\end{align}
\end{theorem}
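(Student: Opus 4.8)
The plan is to reduce the partial-averaging dynamics to a single virtual centralized iterate and then run a standard non-convex descent argument against it. The linchpin observation is that partial averaging is mean-preserving: since averaging the $j$-th subset replaces each worker's block-$j$ coordinates by their common mean while leaving all other blocks untouched, the full average $\mathbf{u}_k = \frac{1}{m}\sum_i \mathbf{x}_k^i$ is unchanged by the aggregation step. Consequently, writing $\hat{\mathbf{x}}_k^i = \mathbf{x}_{k-1}^i - \eta g(\mathbf{x}_{k-1}^i)$ for the post-SGD, pre-averaging models, I would first prove the clean recursion $\mathbf{u}_k = \mathbf{u}_{k-1} - \frac{\eta}{m}\sum_{i=1}^m g(\mathbf{x}_{k-1}^i)$, which holds regardless of which subset is synchronized. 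This lets me treat $\mathbf{u}_k$ as if it took one step along the averaged stochastic gradient, and it is exactly why the partitioning rule is irrelevant to the worst-case rate.

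Next I would apply a block-wise descent lemma. Because Assumption 1 gives $L_j$-smoothness in each block $\mathbf{x}_j$, I expand $F(\mathbf{u}_k) \le F(\mathbf{u}_{k-1}) + \sum_{j=1}^{\tau}\langle \nabla_j F(\mathbf{u}_{k-1}), (\mathbf{u}_k - \mathbf{u}_{k-1})_j\rangle + \sum_{j=1}^{\tau}\frac{L_j}{2}\|(\mathbf{u}_k-\mathbf{u}_{k-1})_j\|^2$ and substitute the recursion. Taking the expectation conditioned on the history, Assumption 2 removes the noise from the linear term, while Assumption 3 together with the independence of the $m$ workers' gradients controls the quadratic term, producing the $\frac{\eta}{m}\sum_j L_j\sigma_j^2$ contribution; the $1/m$ factor is precisely the variance reduction from averaging $m$ independent stochastic gradients and is what yields linear speedup. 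The surviving linear term is $-\eta\sum_j\langle \nabla_j F(\mathbf{u}_{k-1}), \frac{1}{m}\sum_i \nabla_j F(\mathbf{x}_{k-1}^i)\rangle$, which I would convert with the polarization identity $\langle a,b\rangle = \frac{1}{2}(\|a\|^2+\|b\|^2-\|a-b\|^2)$ to isolate the wanted $-\frac{\eta}{2}\|\nabla F(\mathbf{u}_{k-1})\|^2$ plus a discrepancy term $\frac{\eta}{2}\sum_j\|\nabla_j F(\mathbf{u}_{k-1}) - \frac{1}{m}\sum_i\nabla_j F(\mathbf{x}_{k-1}^i)\|^2$.

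The technical core, and the step I expect to be the main obstacle, is bounding this per-block discrepancy. By Jensen's inequality and $L_j$-smoothness it is at most $\frac{L_j^2}{m}\sum_i\|(\mathbf{u}_{k-1})_j - (\mathbf{x}_{k-1}^i)_j\|^2$, so everything reduces to controlling the block-wise model drift. Here the staggered synchronization schedule is essential: block $j$ is averaged exactly once every $\tau$ iterations (whenever $k\bmod\tau=j$), so at any iteration its workers have taken at most $\tau-1$ unsynchronized local steps since they last agreed on that block. Unrolling the local updates from the most recent synchronization of block $j$ and applying Assumption 3 to the accumulated gradient noise, I would show the summed drift scales like $\eta^2(\tau-1)\sigma_j^2$ up to a gradient-dependent remainder. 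The care needed is that each block carries a different phase, so the drift must be tracked per block and the variance accumulated over its own at-most-$(\tau-1)$-step window; this is what replaces the $\eta^2\tau^2$-type term of full periodic averaging by the tighter $\eta^2(\tau-1)\sum_j L_j^2\sigma_j^2$ with block-specific constants.

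Finally I would telescope the conditional descent inequality over $k=1,\dots,K$, use $F(\mathbf{u}_k)\ge F(\mathbf{u}_{inf})$ to bound the telescoped objective gap by $F(\mathbf{u}_1)-F(\mathbf{u}_{inf})$, and collect terms. The gradient-dependent remainder from the drift bound contributes a term controlled by $\eta^3 L_{\max}^2\tau(\tau-1)$ times the gradient norms; the stated step-size condition $L_{\max}^2\eta^2\tau(\tau-1)+\eta L_{\max}\le 1$ is exactly what guarantees this remainder, together with the quadratic smoothness term, can be absorbed so that the coefficient of $\frac{1}{K}\sum_k\|\nabla F(\mathbf{u}_k)\|^2$ remains at least $\frac{\eta}{2}$. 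Dividing through by $\eta K/2$ then yields the three-term bound in the theorem statement.
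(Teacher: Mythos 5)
Your overall route coincides with the paper's own proof: exploit the mean-preserving property of partial averaging to treat $\mathbf{u}_k$ as a virtual iterate (the paper encodes this as the matrix identity $\mathbf{J}_j\mathbf{P}_{(j,k)}=\mathbf{J}_j$), run a block-wise descent lemma under the $L_j$-smoothness, unbiasedness and bounded-variance assumptions (the paper's Lemma 1), bound the per-block drift by unrolling back to the most recent synchronization of block $j$ so that the noise contributes $\eta^2(\tau-1)\sigma_j^2$ and a gradient-dependent remainder of order $\eta^2\tau(\tau-1)$ survives (the paper's Lemma 2, carried out there with time-varying averaging matrices, but equivalent to your direct unrolling), and finally absorb the remainder via the step-size condition and telescope. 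There is, however, one concrete flaw in your cancellation bookkeeping. You apply the polarization identity to the pair $\left(\nabla_j F(\mathbf{u}_{k-1}),\ \tfrac{1}{m}\sum_{i}\nabla_j F(\mathbf{x}_{k-1}^i)\right)$, so the negative term you retain is $-\tfrac{\eta}{2}\left\|\tfrac{1}{m}\sum_{i}\nabla_j F(\mathbf{x}_{k-1}^i)\right\|^2$, the squared norm of the \emph{averaged} local gradient. But the gradient-dependent remainder coming out of the drift bound is $\tfrac{\eta^2\tau(\tau-1)L_j^2}{m}\sum_{i}\|\nabla_j F(\mathbf{x}_k^i)\|^2$, a \emph{mean of squared norms}. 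Jensen's inequality runs the wrong way here, since $\left\|\tfrac1m\sum_i v_i\right\|^2\le\tfrac1m\sum_i\|v_i\|^2$: when the local gradients nearly cancel each other, your retained negative term is close to zero while the remainder stays large, and under the IID assumptions there is no bounded-dissimilarity hypothesis to bridge the two. Your fallback of instead preserving the coefficient of $\|\nabla F(\mathbf{u}_k)\|^2$ at $\tfrac{\eta}{2}$ also fails as stated: converting $\|\nabla_j F(\mathbf{x}_k^i)\|^2$ into $\|\nabla_j F(\mathbf{u}_k)\|^2$ requires an extra smoothness step with a factor of two (this is in effect the paper's non-IID argument, Lemma 3 and Theorem 2), which produces strictly weaker constants (leading coefficients $4$, $2$, $3$) than the stated Theorem 1 (coefficients $2$, $1$, $1$).

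The repair is small and is precisely what the paper does in its Lemma 1: write $\left\langle\nabla_j F(\mathbf{u}_k),\ \tfrac1m\sum_i\nabla_j F(\mathbf{x}_k^i)\right\rangle=\tfrac1m\sum_i\left\langle\nabla_j F(\mathbf{u}_k),\ \nabla_j F(\mathbf{x}_k^i)\right\rangle$ and polarize each summand separately, so that the retained negative term is $-\tfrac{\eta}{2m}\sum_i\|\nabla_j F(\mathbf{x}_k^i)\|^2$, in per-worker form; the quadratic smoothness term is also put in this form by Jensen. Then, after telescoping and combining with the drift bound, every local-gradient term carries the coefficient $\frac{L_j^2\eta^2\tau(\tau-1)+\eta L_j-1}{mK}$, which the hypothesis $L_{\max}^2\eta^2\tau(\tau-1)+\eta L_{\max}\le 1$ makes non-positive; those terms are discarded wholesale, the coefficient of $\|\nabla F(\mathbf{u}_k)\|^2$ is never touched, and the three-term bound follows with exactly the stated constants. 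With this one change your argument matches the paper's proof.
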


\textbf{Remark 1.} For non-convex smooth objective functions and IID data, local SGD with the partial model averaging ensures the convergence of the model to a stationary point.
Particularly, the convergence rate is not dependent on the partition size or the synchronization order across the partitions. 
That is, as long as the entire model parameters are covered at least once in $\tau$ iterations, Algorithm \ref{alg:proposed} guarantees the convergence. 

\textbf{Remark 2.}
(linear speedup) If the learning rate $\eta = \frac{\sqrt{m}}{\sqrt{K}}$, the complexity of (\ref{eq:theorem1}) becomes
\begin{align}
    \mathcal{O} \left( \frac{1}{\sqrt{mK}} \right) + \mathcal{O} \left( \frac{m}{K} \right) \nonumber,
\end{align}
where all the constants are removed by $\mathcal{O}$.
Thus, if $K > m^3$, the first term dominates the second term achieving linear speedup.
Note that the partial averaging has the same complexity of the convergence rate as the periodic averaging method \cite{wang2018cooperative}.

\subsection {Convergence Analysis for Non-IID Data}
For non-IID convergence analysis, we use an assumption on the data heterogeneity that is presented in \cite{wang2020tackling}.

\textbf{Assumptions} -- Our analysis is based on the following assumptions.
\begin{enumerate}
    \item Smoothness: $f(\cdot)$ is $L_j$-smooth for all $\mathbf{x}_j$;
    \item Unbiased gradient: $\mathop{\mathbb{E}}_{s}[g_{(i,j)}(x)] = \nabla_j F_i(x)$;
    \item Bounded variance: $\mathop{\mathbb{E}}_{s}[ \| g_{(i,j)}(x) - \nabla_j F_i(x)) \|^2 ] \leq \sigma_j^2$, where $\sigma_j^2$ is a positive constant;
    \item {Bounded Dissimilarity: For any sets of weights $\{p_i \geq 0\}_{i=1}^{m}, \sum_{i=1}^{m} p_i = 1$, there exist constants $\beta^2 \geq 1$ and $\kappa^2 \geq 0$ such that $\sum_{i=1}^{m} p_i \| \nabla F_i(\mathbf{x}) \|^2 \leq \beta^2 \| \sum_{i=1}^{m} p_i \nabla F_i (\mathbf{x}) \|^2 + \kappa^2$;}
\end{enumerate}

\begin{theorem}
\label{theorem:niid}
Suppose all $m$ local models are initialized to the same point $\mathbf{u}_1$. Under Assumption $1 \sim 4$, if Algorithm \ref{alg:proposed} runs for $K$ iterations and the learning rate satisfies $\eta \leq \frac{1}{L_{\max} } \min \left\{\frac{1}{2}, \frac{1}{ \sqrt{2\tau (\tau - 1) (2\beta^2 + 1)}} \right\}$, the average-squared gradient norm of $\mathbf{u}_k$ is bounded as follows
\begin{align}
    \mathop{\mathbb{E}}\left[\frac{1}{K}\sum_{i=1}^{K} \| \nabla F(\mathbf{u}_k) \|^2\right] & \leq \frac{4}{\eta K}\left( \mathop{\mathbb{E}}\left[ F(\mathbf{u}_{1}) - F(\mathbf{u}_{inf}) \right] \right) \nonumber \\
    & \quad\quad + 4\eta \sum_{i=1}^{m} p_i^2 \sum_{j = 1}^{\tau} L_j \sigma_j^2 \nonumber \\
    & \quad\quad + 3 \eta^2 (\tau - 1) \sum_{j = 1}^{\tau} L_j^2 \sigma_j^2 \nonumber \\
    & \quad\quad + 6 \eta^2 \tau (\tau - 1) \sum_{j = 1}^{\tau} L_j^2 \kappa_j^2. \nonumber
\end{align}
\end{theorem}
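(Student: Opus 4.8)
The plan is to track the virtual fully-averaged sequence $\mathbf{u}_k = \sum_{i=1}^m p_i \mathbf{x}_k^i$ through a perturbed-iterate / descent-lemma argument, concentrating the non-IID difficulty in a per-partition model-discrepancy bound. The first observation that makes this tractable is that partial averaging leaves the virtual average unchanged: resetting the $j$-th partition to its own across-worker mean does not alter that mean, so $\mathbf{u}_k$ obeys the clean partition-wise recursion $\mathbf{u}_{(j,k+1)} = \mathbf{u}_{(j,k)} - \eta \sum_i p_i g_{(i,j)}(\mathbf{x}_k^i)$, where the weighted average is the one consistent with $F = \sum_i p_i F_i$. This reduces the analysis to controlling how far each local $\mathbf{x}_k^i$ drifts from $\mathbf{u}_k$.

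First I would apply $L_j$-smoothness per partition to obtain a one-step descent inequality, substitute the recursion, and take the conditional expectation using Assumption 2. The cross term becomes $-\eta \langle \nabla_j F(\mathbf{u}_k), \sum_i p_i \nabla_j F_i(\mathbf{x}_k^i)\rangle$, which I would split with the polarization identity $-\langle a,b\rangle = \tfrac12(\|a-b\|^2 - \|a\|^2 - \|b\|^2)$, using $\nabla_j F(\mathbf{u}_k) = \sum_i p_i \nabla_j F_i(\mathbf{u}_k)$ so that $\|a-b\|^2 \le \sum_i p_i L_j^2 \|\mathbf{u}_k - \mathbf{x}_k^i\|^2$ by Jensen and gradient-Lipschitzness. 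For the second-moment term I would separate stochastic noise from its mean, use worker-wise independence to collapse the noise into $\eta^2 \sum_i p_i^2 \sigma_j^2$, and retain the mean term $\|\sum_i p_i \nabla_j F_i(\mathbf{x}_k^i)\|^2$. The bound $\eta \le 1/(2L_{\max})$ then makes this mean term cancel between the cross term and the second-moment term (since $\tfrac\eta2(\eta L_j - 1)\le 0$), leaving a descent of $-\tfrac\eta2\|\nabla F(\mathbf{u}_k)\|^2$ plus a discrepancy contribution and the $\sigma_j^2$ noise.

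The crux is bounding $\sum_i p_i \mathbb{E}\|\mathbf{u}_{(j,k)} - \mathbf{x}_{(j,k)}^i\|^2$, and here the partial-averaging structure is essential: partition $j$ is synchronized at every iteration $\equiv j \pmod \tau$, so at step $k$ it has drifted for at most $\tau-1$ local steps since its last reset $k'$ — exactly the same maximum drift as full periodic averaging, which is why the rate matches. I would expand $\mathbf{u}_{(j,k)} - \mathbf{x}_{(j,k)}^i = -\eta \sum_{t=k'}^{k-1}\big(\sum_l p_l g_{(l,j)}(\mathbf{x}_t^l) - g_{(i,j)}(\mathbf{x}_t^i)\big)$, split each summand into noise and true-gradient parts, bound the noise via Assumption 3 to yield the $\eta^2(\tau-1)\sum_j L_j^2\sigma_j^2$ term (after the $L_j^2$ factor from the cross term), and control the true-gradient parts by Jensen over $\le \tau-1$ terms together with the bounded-dissimilarity Assumption 4. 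That assumption produces both the additive $\kappa_j^2$ term, $6\eta^2\tau(\tau-1)\sum_j L_j^2\kappa_j^2$, and a term proportional to $\|\nabla F(\mathbf{u}_k)\|^2$ carrying the factor $\beta^2$.

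The main obstacle is therefore this $\beta^2$-coupling: the discrepancy bound feeds a term of order $\eta^2\tau(\tau-1)\beta^2 L_j^2\|\nabla F(\mathbf{u}_k)\|^2$ back into the descent, which must be dominated by the $-\tfrac\eta2\|\nabla F(\mathbf{u}_k)\|^2$ gain rather than overwhelming it. I expect this to force precisely the stated constraint $\eta \le \tfrac{1}{L_{\max}}\big(2\tau(\tau-1)(2\beta^2+1)\big)^{-1/2}$, chosen so the coupling term is at most $\tfrac\eta4\|\nabla F(\mathbf{u}_k)\|^2$ and a net $\tfrac\eta4\|\nabla F(\mathbf{u}_k)\|^2$ of descent survives — this is what promotes the leading constant from $2$ (IID case) to $4$. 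Finally I would telescope the one-step inequality over $k=1,\dots,K$, use $F(\mathbf{u}_{K+1}) \ge F(\mathbf{u}_{inf})$, divide by the surviving coefficient $\tfrac{\eta K}{4}$, and collect the noise and discrepancy sums into the four terms of the statement.
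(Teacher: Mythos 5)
Your proposal is correct and follows the same overall architecture as the paper's proof: a per-partition descent lemma on the averaged iterate, a model-discrepancy bound exploiting that each partition drifts for at most $\tau-1$ local steps between its synchronizations, Assumption 4 to convert the drift into an additive $\kappa_j^2$ term plus a $\beta^2$-weighted feedback term in $\| \nabla_j F(\mathbf{u}_k) \|^2$, and the step-size condition $2\eta^2\tau(\tau-1)L_j^2 \leq 1/(2\beta_j^2+1)$ to absorb that feedback and promote the leading constant from $2$ to $4$. Where you differ is in execution, and your route is somewhat more elementary: the paper vectorizes the local models, works with the time-varying averaging matrices $\mathbf{P}_{(j,k)}$ and $\mathbf{J}_j$, expands the recursion all the way back to initialization, and then splits the $K$ iterations into three blocks (first $j$, middle $K-\tau$, last $\tau-j$) to show that only within-round terms survive; your direct expansion from the last reset $k'$ of partition $j$ encodes the same fact without the matrix machinery. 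You also carry general weights $p_i$ throughout, so the noise collapses to $\sum_i p_i^2 \sigma_j^2$, which matches the theorem statement more faithfully than the paper's own appendix proof (written for uniform $1/m$ weights). One step you should make explicit in a full write-up: Assumption 4 applies only to gradients evaluated at a \emph{common} point, so before invoking it you must recenter each $\nabla_j F_i(\mathbf{x}_t^i)$ at $\mathbf{u}_t$ via smoothness, i.e. $\| \nabla_j F_i(\mathbf{x}_t^i) \|^2 \leq 2L_j^2 \| \mathbf{x}_{(j,t)}^i - \mathbf{u}_{(j,t)} \|^2 + 2 \| \nabla_j F_i(\mathbf{u}_t) \|^2$; this re-introduces the discrepancy on the right-hand side, making the discrepancy bound self-referential, and solving that inequality is what produces the $1/(1-A_j)$ factors with $A_j = 2\eta^2\tau(\tau-1)L_j^2$. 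Your final constants (the $3$, the $6$, and the $2\beta^2+1$ inside the step-size condition) are exactly the ones this recentering-and-solving step generates, so the bookkeeping is implicitly right in your sketch, but as written the phrase ``Jensen together with Assumption 4'' glosses over the only genuinely delicate part of the argument.
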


\textbf{Remark 3.} For non-convex smooth objective functions and non-IID data, local SGD with the partial model averaging ensures the convergence to a stationary point.
Likely to IID data, the partition size or the synchronization order across the partitions do not affect the bound. 

\textbf{Remark 4.}
(linear speedup) If the learning rate $\eta = \frac{\sqrt{m}}{\sqrt{K}}$ and $p_i = \frac{1}{m}, \forall i \in \{1, \cdots, m\}$, the complexity of the above maximum bound becomes
 \begin{align}
     \mathcal{O} \left( \frac{1}{\sqrt{mK}} \right) + \mathcal{O} \left( \frac{m}{K} \right), \nonumber
 \end{align}
 where all the constants are removed by $\mathcal{O}$.
 Thus, if $K^3 > m$, the first first term becomes dominant, and it achieves linear speedup.
 Although the exact bounds cannot be directly compared due to the different assumptions, our analysis shows that the partial averaging method has the same complexity of the convergence rate as the periodic averaging method \cite{wang2020tackling}.

\subsection{Impact of Partial Model Averaging on Local Models}

We empirically analyze the impact of the partial averaging on the statistical efficiency of local SGD.
Figure \ref{fig:disc} shows the squared distance between the global model $\mathbf{u}_k$ and the local model $\mathbf{x}_k^i$ averaged across all $m$ workers.
The distance is collected from CIFAR-10 (ResNet20) training with $m=128$ workers.
The left chart shows the distance comparison between the periodic averaging and the partial averaging at the first $500$ iterations and the right chart shows the comparison in the middle of training (iteration $3000 \sim 3500$).
It is clearly observed that the partial averaging effectively suppresses the maximum degree of model discrepancy.
While the periodic averaging has a wide spectrum of the distance within each communication round, the partial averaging shows a stable distance across the iterations.

When analyzing the convergence stochastic optimization methods, the difference between the local gradients and the global gradients is usually bounded by the distance between the corresponding model parameters under a smoothness assumption on the objective function.
The shorter distance among the models bounds the gradient difference more tightly, and it makes the loss more efficiently converge.
We verify such an effect by comparing the local loss and the global loss curves.
We collect full-batch training loss of all individual local models (local loss) and compare it to the loss of the global model (global loss) at the end of each communication round.
Figure \ref{fig:lossdiff}.\textbf{a} and \ref{fig:lossdiff}.\textbf{b} show the loss curves of the periodic averaging and the partial averaging, respectively.
While the periodic averaging makes the global loss frequently spikes, the partial averaging shows the global loss that goes down more smoothly along with the minimum local loss.
Figure \ref{fig:loss} shows the loss curves of four different datasets.
The partial averaging achieves a faster convergence than the periodic averaging in all the experiments.
This empirical analysis demonstrates that the partial averaging accelerates the convergence of the global loss by mitigating the degree of model discrepancy.

\begin{figure}[t]
\centering
\includegraphics[width=\columnwidth]{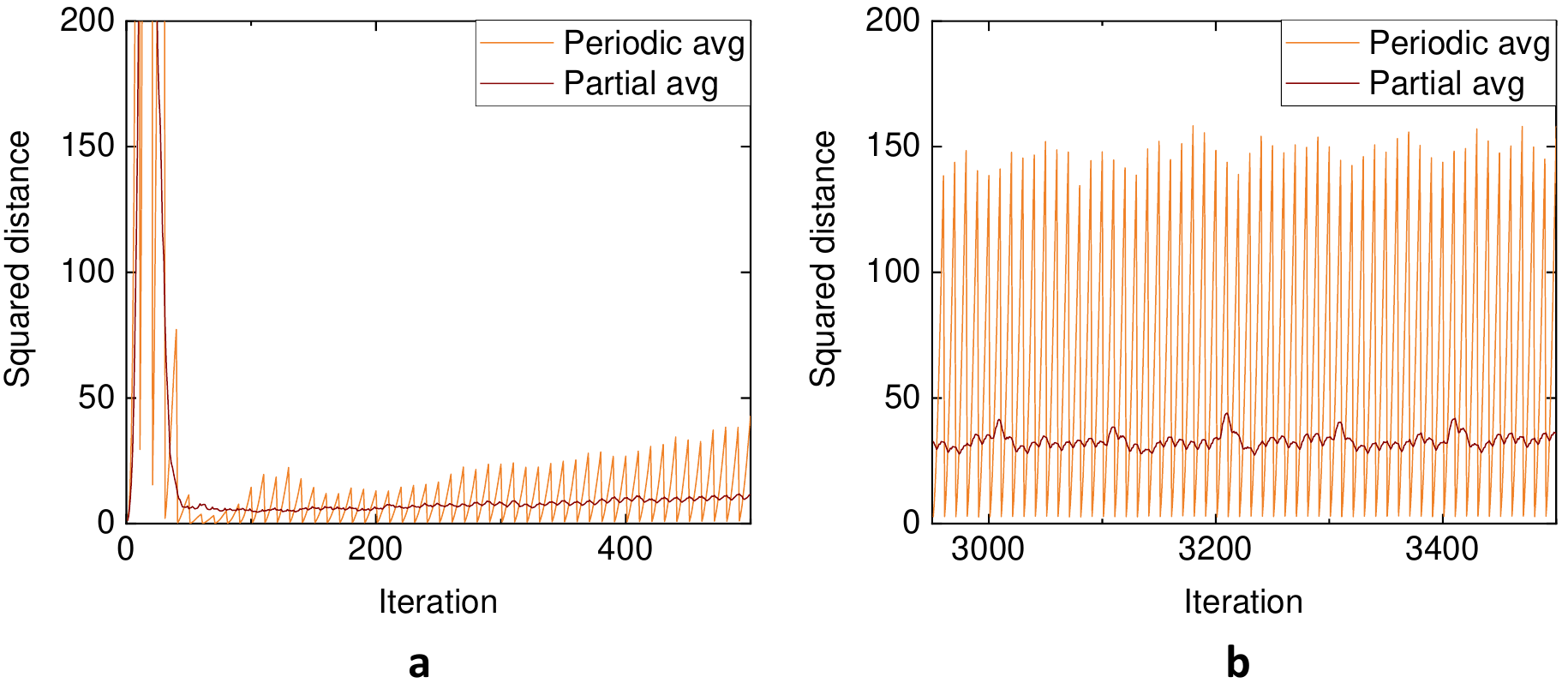}
\caption{
    The model discrepancy (the squared distance between the global model $\textbf{u}_k$ and the local model $\textbf{x}_k^i$ averaged across all $m$ workers) comparison between the periodic averaging and the partial averaging.
    The curves are collected from ResNet20 (CIFAR-10) training. \textbf{a)}: the curves for the first 500 iterations.
    \textbf{b)}: the curves for the iteration $3000 \sim 3500$.
}
\label{fig:disc}
\end{figure}

\begin{figure}[t]
\centering
\includegraphics[width=0.95\columnwidth]{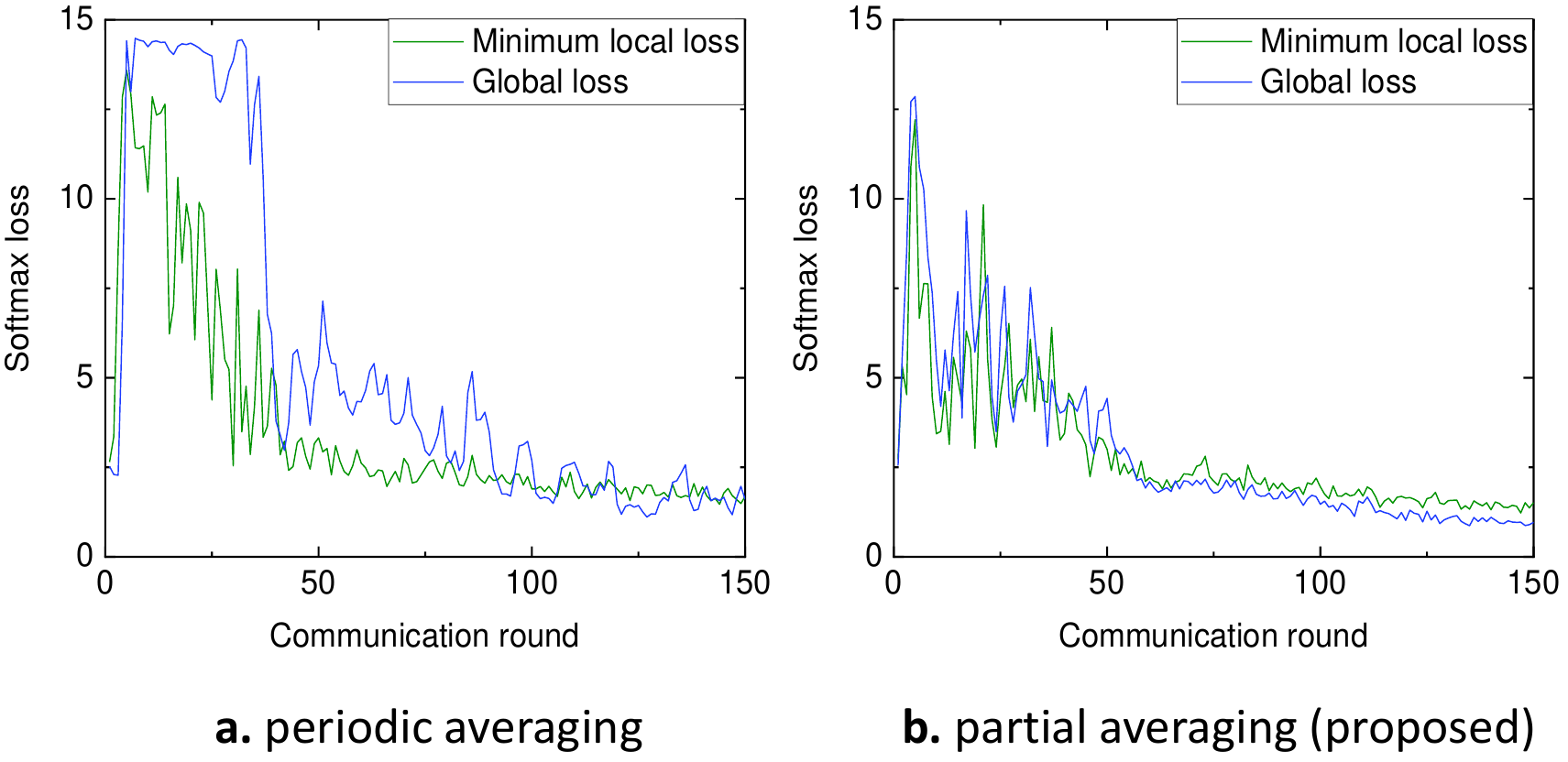}
\caption{
    The comparison between the minimum local loss among all the workers and the global loss.
    The curves are collected from ResNet20 (CIFAR-10) training for 150 communication rounds. \textbf{a)}: The full-batch training loss curves of the periodic averaging. We compare the minimum local loss and the global loss curves.
    \textbf{b)}: The same curves of the partial averaging.
}
\label{fig:lossdiff}
\end{figure}

\subsection {Communication Cost} \label{sec:comm}

For large-scale deep learning applications on High-Performance Computing (HPC) systems, the fully-distributed communication model is typically used.
The most popular communication pattern for model averaging is \textit{allreduce} operation.
In Federated Learning, server-client communication model is more commonly used.
Considering the independent and heterogeneous client-side compute nodes, the individual communication pattern (\textit{send} and \textit{receive} operations) can be a better fit for model averaging.
Regardless of the communication patterns, the periodic averaging and the partial averaging have the same total communication cost.
Given the local model size $d$, the periodic averaging method requires one inter-process communication of all the $d$ parameters after every $\tau$ iterations.
The proposed partial averaging performs one communication at every iteration, but only aggregates $\frac{d}{\tau}$ parameters at once.
Thus, if $\tau$ is the same, the two averaging methods have the same total communication cost.

One potential drawback of the proposed method is the increased number of inter-process communications.
While having the same total data size to be transferred, the partial averaging method requires more frequent communications than the periodic full averaging method, and it results in increasing the total latency cost.
One may consider adjusting the number of model partitions to reduce the latency cost while degrading the expected classification performance.
We consider making a practical trade-off between the latency cost and the statistical efficiency as an important future work.

\begin{figure*}[t]
\centering
\includegraphics[width=2\columnwidth]{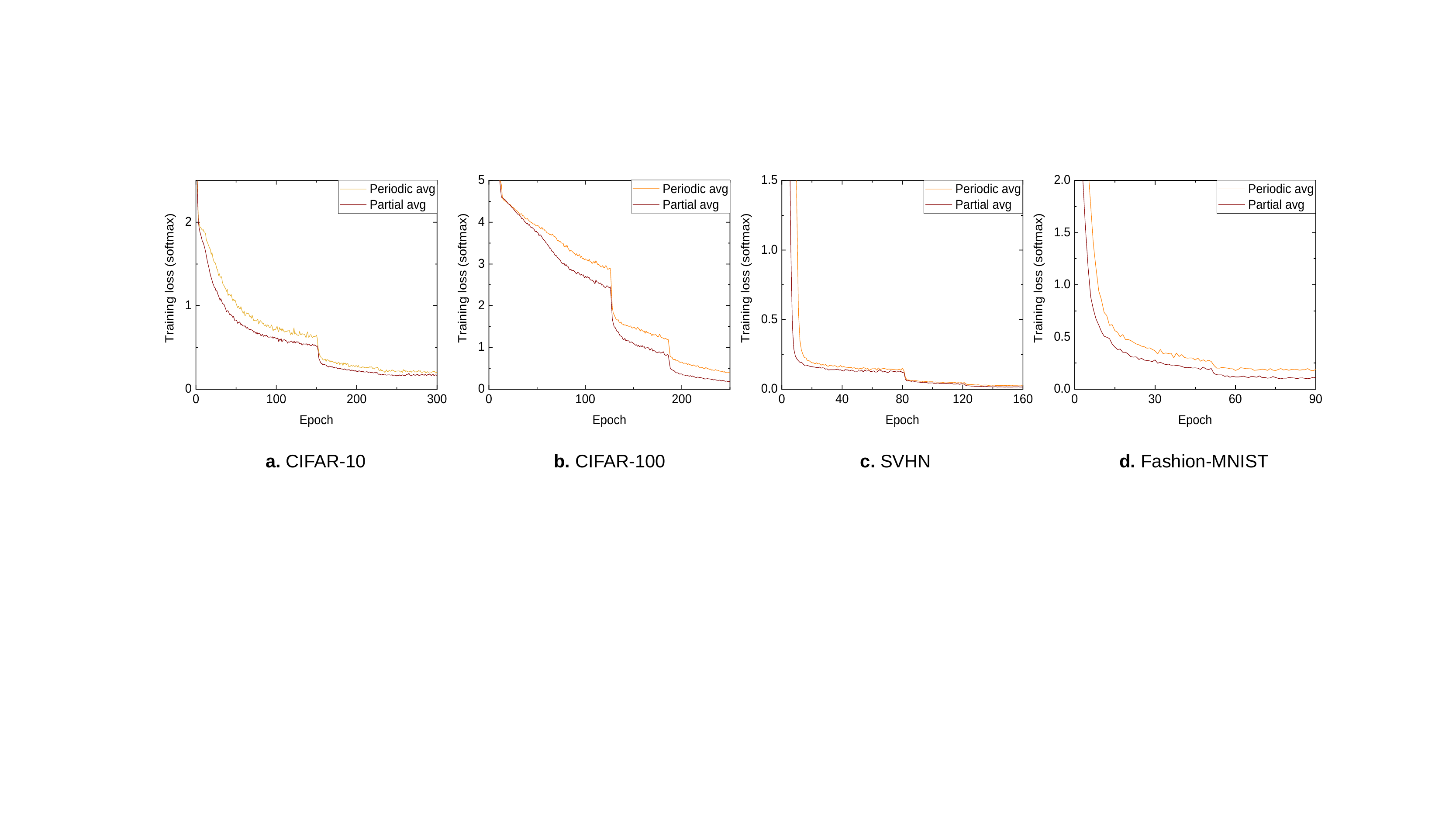}
\caption{
    (IID data) The training loss comparison between the periodic averaging and the partial averaging across four different datasets: \textbf{a.} CIFAR-10 (ResNet-20), \textbf{b.} CIFAR-100 (WideResNet-28-10), \textbf{c.} SVHN (WideResNet-16-8), and \textbf{d.} Fashion-MNIST (VGG-11).
    128 workers are used for training.
}
\label{fig:loss}
\end{figure*}

\section {Experiments}\label{sec:exp}

In this section, we present key experimental results that demonstrate efficacy of the partial averaging framework for Federated Learning.
Additional experimental results can be found in Appendix.

\begin{table*}[t]
\scriptsize
\centering
\caption{
    The classification performance comparisons using IID data.
    The learning rate is fine-tuned based on a grid search for all individual settings.
    \vspace{0.1cm}
}
\begin{tabular}{|c|c|c|c|c|c||c|c|} \hline
dataset & model & batch size (LR) & workers & epochs & avg interval & periodic avg & partial avg \\ \hline \hline
\multirow{3}{*}{CIFAR-10} & \multirow{3}{*}{ResNet20} & \multirow{3}{*}{32 (1.2)} & \multirow{15}{*}{128} & \multirow{3}{*}{300} & 2 & $91.19\pm 0.2\%$ & \textbf{91.89} $\pm 0.1\%$ \\
 & & & & & 4 & $89.80\pm 0.2\%$ & \textbf{90.58} $\pm 0.2\%$ \\ 
 & & & & & 8 & $85.70\pm 0.3\%$ & \textbf{88.17} $\pm 0.1\%$ \\ \cline{1-3}\cline{5-8}
\multirow{3}{*}{CIFAR-100} & \multirow{3}{*}{WRN28-10} & \multirow{3}{*}{32 (1.2)} & & \multirow{3}{*}{250} & 2 & $77.64\pm 0.2\%$ & \textbf{79.15} $\pm 0.1\%$ \\
 & & & & & 4 & $76.07\pm 0.2\%$ & \textbf{77.03} $\pm 0.2\%$ \\
 & & & & & 8 & $60.82\pm 0.2\%$ & \textbf{62.32} $\pm 0.2\%$ \\ \cline{1-3}\cline{5-8}
\multirow{3}{*}{SVHN} & \multirow{3}{*}{WRN16-8} & \multirow{3}{*}{64 (0.2)} & & \multirow{3}{*}{160} & 4 & $98.15\pm 0.1\%$ & \textbf{98.54}$\pm 0.1\%$ \\
 & & & & & 16 & $98.02\pm 0.2\%$ & \textbf{98.13}$\pm 0.1\%$ \\
 & & & & & 64 & $97.54\pm 0.1\%$ & \textbf{97.78}$\pm 0.1\%$ \\ \cline{1-3}\cline{5-8}
\multirow{3}{*}{Fasion-MNIST} & \multirow{3}{*}{VGG-11} & 32 (0.2) & & \multirow{3}{*}{90} & 2 & $92.33\pm 0.1\%$ & \textbf{94.01}$\pm 0.1\%$ \\ \cline{3-3}
 & & 32 (0.1) & & & 4 & $91.80\pm 0.1\%$ & \textbf{93.03}$\pm 0.1\%$ \\ \cline{3-3}
 & & 32 (0.08) & & & 8 & $90.48\pm 0.1\%$ & \textbf{92.21}$\pm 0.1\%$ \\ \cline{1-3}\cline{5-8}
\multirow{3}{*}{IMDB review} & \multirow{3}{*}{LSTM} & \multirow{3}{*}{10 (0.6)} & & \multirow{3}{*}{90} & 2 & $88.14\pm 0.1\%$ & \textbf{89.22}$\pm 0.1\%$ \\
 & & & & & 4 & $88.78\pm 0.2\%$ & \textbf{89.27}$\pm 0.1\%$ \\
 & & & & & 8 & $88.53\pm 0.2\%$ & \textbf{88.74}$\pm 0.3\%$ \\ \hline
\end{tabular}
\label{tab:classification}
\end{table*}

\subsection {Experimental Settings} \label{sec:settings}

We implemented our experiments using TensorFlow 2.4.0 \cite{tensorflow2015-whitepaper}.
All the experiments were conducted on a GPU cluster that has four compute nodes each of which has two NVIDIA V100 GPUs.
Because of the limited compute resources, we simulate the large-scale local SGD training such that all $m$ local models are distributed to $p$ processes ($m > p$), and each process sequentially trains the given $\frac{m}{p}$ local models.
When averaging the parameters, they are aggregated and summed up across the local models owned by each process first, and then reduced across all the processes using MPI communications.

We perform extensive Computer Vision experiments using popular benchmark datasets: CIFAR-10 and CIFAR-100 \cite{krizhevsky2009learning}, SVHN \cite{netzer2011reading}, Fashion-MNIST \cite{xiao2017fashion}, and Federated Extended MNIST \cite{caldas2018leaf}.
We also run Natural Language Processing (sentiment analysis) experiments using IMDB dataset \cite{maas-EtAl:2011:ACL-HLT2011}.
Due to the limited space, the details about the datasets and the model architectures are provided in Appendix.
We use momentum SGD with a coefficient of $0.9$ and apply gradual warmup \cite{goyal2017accurate} to the first 5 epochs to stabilize the training.
All the reported performance results are average accuracy across three separate runs.
Due to the limited space, we report the final accuracy only and show all the full learning curves in Appendix.

\begin{table*}[h!]
\scriptsize
\centering
\caption{
    CIFAR-10 classification results with extended training epochs.
    The partial averaging accuracy catches up with the sync SGD accuracy ($92.63 \pm 0.2\%$) faster than the periodic averaging.
    \vspace{0.1cm}
}
\begin{tabular}{|c|c|c|c|c||c|c|c|} \hline
dataset & model & \# of workers & avg interval & epochs & periodic avg. & partial avg. (proposed) \\ \hline \hline
\multirow{3}{*}{CIFAR-10} & \multirow{3}{*}{ResNet20} & \multirow{3}{*}{128} & \multirow{3}{*}{4} & 300 & $89.80\pm 0.2\%$ & $\textbf{90.58}\pm 0.2\%$ \\
 &  &  &  & 400 & $90.16\pm 0.1\%$ & $\textbf{91.70}\pm 0.2\%$ \\
 &  &  &  & 500 & $91.19\pm 0.2\%$ & $\textbf{92.20}\pm 0.1\%$ \\ \hline
\end{tabular}
\label{tab:longer}
\end{table*}

\subsection{Experiments with IID Data}

We use the hyper-parameter settings shown in the reference works, and further tune only the learning rate based on a grid search.
Table \ref{tab:classification} presents our experimental results achieved using 128 workers.
The partial averaging achieves a higher validation accuracy than the periodic averaging in all the experiments.
This comparison demonstrates that the partial averaging method effectively accelerates the local SGD for IID data.
We can also see that the accuracy consistently drops in all the experiments as the averaging interval $\tau$ increases.
While the larger interval improves the scaling efficiency by reducing the total communication cost, it can harm the statistical efficiency of local SGD.

We also present CIFAR-10 classification results with extended epochs in Table \ref{tab:longer}.
The partial averaging catches up with the synchronous SGD accuracy ($92.63 \pm 0.2\%$) faster than the periodic averaging.
We ran synchronous SGD using $128$ batch size and $0.1$ learning rate for 300 epochs.
One insight is that the degree of model discrepancy indeed strongly affects the final accuracy.
The synchronous SGD can be considered as a special case where the averaging interval is $1$.
That is, the degree of model discrepancy is always $0$, and thus synchronous SGD achieves a higher accuracy than any local SGD settings.
This indirectly explains why the partial averaging achieves a higher accuracy than the periodic averaging.
The lower the degree of model discrepancy across the workers, the higher the accuracy.

\begin{figure*}[t]
\centering
\includegraphics[width=2\columnwidth]{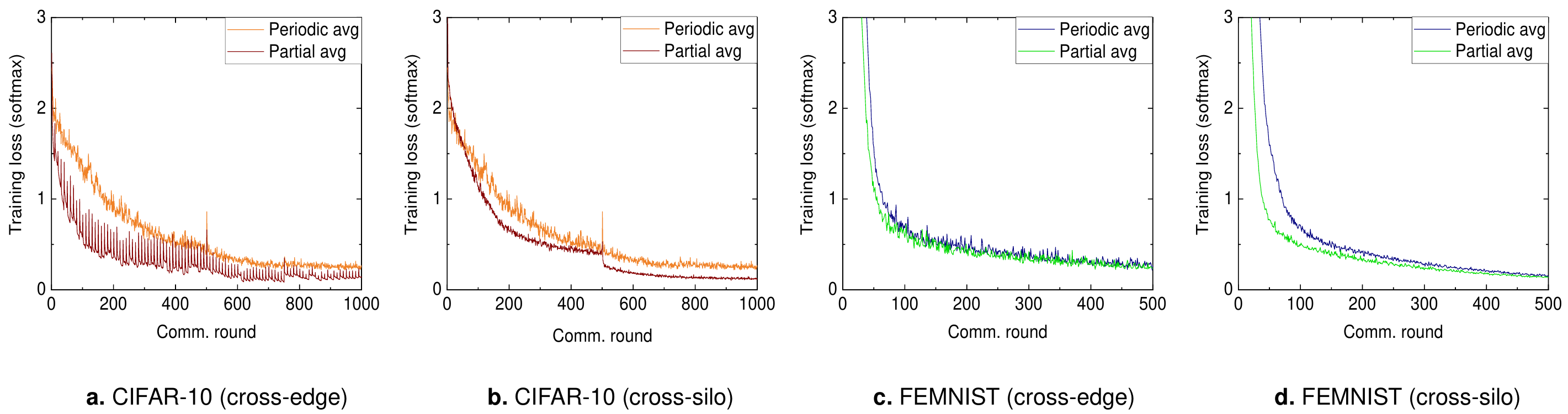}
\caption{
    (Non-IID data) The training loss comparison between the periodic averaging and the partial averaging across two different datasets: \textbf{a.} CIFAR-10 (\textit{cross-edge}), \textbf{b.} CIFAR-10 (\textit{cross-silo}), \textbf{c.} FEMNIST (\textit{cross-edge}), and \textbf{d.} FEMNIST (\textit{cross-silo}).
    In the \textit{cross-edge} settings, $25\%$ of random workers participate in training.
}
\label{fig:loss_niid}
\end{figure*}

\begin{table*}[t]
\scriptsize
\centering
\caption{
    Classification experiments using non-IID data.
    We conduct the experiments with various degrees of data heterogeneity (Dir($\alpha$)) and device selection ratio settings.
    The ResNet20 is trained for $10,000$ iterations.
    The LSTM and CNN are trained for $2,000$ iterations.
    \vspace{0.1cm}
}
\begin{tabular}{|c|c|c|c|c|c||c|c|} \hline
dataset & batch size (LR) & workers & avg interval & active ratio & Dir($\alpha$) & periodic avg & partial avg \\ \hline \hline
\multirow{9}{*}{\shortstack{CIFAR-10\\(ResNet20)}} & \multirow{8}{*}{32 (0.4)} & \multirow{9}{*}{128} & \multirow{9}{*}{10} & \multirow{3}{*}{$100\%$} & 1 & $90.38\pm 0.1\%$ & \textbf{91.54} $\pm 0.1\%$ \\
 & & & & & 0.5 & $90.18\pm 0.1\%$ & \textbf{91.56}$\pm 0.1\%$ \\
 & & & & & 0.1 & $89.92\pm 0.2\%$ & \textbf{91.31} $\pm 0.1\%$ \\
\cline{5-8}
 & & & & \multirow{3}{*}{$50\%$} & 1 & $89.98\pm 0.2\%$ & \textbf{90.61} $\pm 0.2\%$ \\
 & & & & & 0.5 & $89.51\pm 0.3\%$ & \textbf{91.02}$\pm 0.3\%$ \\
 & & & & & 0.1 & $88.99\pm 0.3\%$ & \textbf{90.64} $\pm 0.2\%$ \\
\cline{5-8}
 & & & & \multirow{3}{*}{$25\%$} & 1 & $89.32\pm 0.3\%$ & \textbf{91.00} $\pm 0.2\%$ \\
 & & & & & 0.5 & $88.73\pm 0.4\%$ & \textbf{90.16}$\pm 0.3\%$ \\
\cline{2-2}
 & 32 (0.2) & & & & 0.1 & $87.70\pm 0.4\%$ & \textbf{88.95} $\pm 0.3\%$ \\
\hline
\multirow{6}{*}{\shortstack{IMDB reviews\\(LSTM)}} & \multirow{2}{*}{10 (0.4)} & \multirow{6}{*}{128} & \multirow{6}{*}{10} & \multirow{2}{*}{$100\%$} & 1 & $88.03\pm 0.2\%$ & \textbf{88.68}$\pm 0.2\%$ \\
 & & & & & 0.5 & $87.72\pm 0.2\%$ & \textbf{88.40}$\pm 0.3\%$ \\
\cline{2-2}\cline{5-8}
 & \multirow{2}{*}{10 (0.2)} & & & \multirow{2}{*}{$50\%$} & 1 & $83.79\pm 0.3\%$ & \textbf{85.83}$\pm 0.3\%$ \\
 & & & & & 0.5 & $83.00\pm 0.2\%$ & \textbf{84.82}$\pm 0.2\%$ \\
\cline{2-2}\cline{5-8}
 & \multirow{2}{*}{10 (0.1)} & & & \multirow{2}{*}{$25\%$} & 1 & $81.13\pm 0.3\%$ & \textbf{83.40} $\pm 0.2\%$ \\
 & & & & & 0.5 & $80.02\pm 0.2\%$ & \textbf{82.01}$\pm 0.3\%$ \\
\hline
\multirow{3}{*}{FEMNIST} & \multirow{2}{*}{32 (0.1)} & \multirow{3}{*}{128} & \multirow{3}{*}{4} & $100\%$ & - & $83.93 \pm 0.4\%$ &\textbf{85.34} $\pm 0.3\%$ \\
\cline{5-8}
 & & & & $50\%$ & - & $85.27 \pm 0.3\%$ & \textbf{85.81}$\pm 0.1\%$ \\
\cline{2-2}\cline{5-8}
 & 32 (0.05) & & & $25\%$ & - & $85.73 \pm 0.2\%$ & \textbf{85.90}$\pm 0.1\%$ \\
\hline
\end{tabular}
\label{tab:noniid}
\end{table*}

\subsection {Experiments with Non-IID Data} \label{sec:noniid}
\textbf{Data Heterogeneity Settings} --
To evaluate the performance of the proposed framework in realistic Federated Learning environments, we also run experiments under two settings: non-IID data and partial device participation.
First, we generate synthetic heterogeneous data distributions based on Dirichlet's distribution.
We use concentration coefficients of $0.1$, $0.5$, and $1.0$ to evaluate the proposed framework across different degrees of data heterogeneity.
Second, we use three different device participation ratios, $25\%$ and $50\%$ (\textit{cross-edge}) and $100\%$ (\textit{cross-silo}).
For \textit{cross-edge} Federated Learning settings, we randomly select a subset of the workers for training at every communication round.
Note that, since the partial averaging method synchronizes only a subset of parameters at once, extra communications are required to send out the whole local model parameters to other workers at the end of every communication round.
To make a fair comparison with respect to the communication cost, we use a $10\%$ longer interval for the partial averaging and re-distribute the local models after every $10$ communication rounds.
Under this setting, the two averaging methods have a similar total communication cost while the partial averaging has a slightly higher degree of data heterogeneity.

\textbf{Accuracy Comparison} --
We fix all the factors that affect the training time: the number of workers, the number of training iterations, and the averaging interval, and then we tune the local batch size and learning rate.
Figure \ref{fig:loss_niid} shows the loss curves of CIFAR-10 and FEMNIST training.
Figure \ref{fig:loss_niid}\textbf{.a} and \textbf{c} show the curves for the \textit{cross-edge} settings and \textbf{b} and \textbf{d} show the curves for the \textit{cross-silo} settings.
Regardless of the ratio of participation, the partial averaging effectively accelerates the convergence of the training loss.
Due to the data heterogeneity, Federated Learning usually requires more iterations to converge than the training in centralized environments.
That is, the faster convergence likely results in achieving a higher validation accuracy within a fixed iteration budget.

Table \ref{tab:noniid} shows our best-tuned hyper-parameter settings and the accuracy results of the three different problems (CIFAR-10 and FEMNIST classifications and IMDB sentiment analysis).
Note that we set $\alpha \geq 0.5$ for IMDB because the Dirichlet's concentration coefficient lower than $0.5$ makes some workers not assigned with any training samples.
Given a fixed iteration budget, as expected, the partial averaging achieves a higher accuracy than the periodic averaging in all the experiments.
These results verifies that the partial averaging effectively mitigates the adverse impact of the model discrepancy on the global loss convergence in non-IID settings.

\section {Related Work}
\label{sec:related}

\textbf{Post-local SGD} -- Lin et al. proposed \textit{post-local} SGD in \cite{lin2018don}.
The algorithm begins the training with a single worker and then increases the number of workers once the learning rate is decayed.
This approach makes the model converge much faster than pure local SGD because the training does not suffer from the model discrepancy in the early training epochs.
However, it significantly undermines the degree of parallelism making it less practical.
The authors use up to 16 workers for training and achieve a comparable accuracy to that of synchronous SGD.

\textbf{Variance Reduced Stochastic Methods} -- 
Variance reduced stochastic methods, such as SVRG \cite{johnson2013accelerating} or SAGA \cite{defazio2014saga}, are known to improve the convergence rate of SGD.
Recently, Liang et al. successfully applied the variance reduction technique to local SGD \cite{liang2019variance}.
Despite the faster convergence thanks to the reduced stochastic variance, the variance reduction techniques are known to harm the generalization performance \cite{defazio2018ineffectiveness}.
This limitation is aligned with the fact that the convergence under a low noise condition can adversely affect the generalization performance \cite{li2019towards,lewkowycz2020large}.

We implemented VRL-SGD \cite{liang2019variance} using TensorFlow based on the open-source\footnote{https://github.com/zerolxf/VRL-SGD} of the reference work.
Due to the limited space, we provide the detailed experimental results in Appendix.
While VRL-SGD effectively accelerates the convergence of the training loss, we could observe a non-negligible gap of the validation accuracy between local SGD with the partial averaging and VRL-SGD.
Note that VRL-SGD performs extra computations to obtain average gradient deviations while our proposed model aggregation scheme does not have such computations.

\textbf{Adaptive Model Averaging Interval} -- Some researchers have proposed adaptive model averaging interval methods \cite{wang2018adaptive,haddadpour2019local}.
The common principle behind these works is that the communication cost of model averaging can be reduced by adjusting the averaging frequency based on the training progress at run-time.
The proposed partial averaging method is readily applicable to these adaptive interval methods because the proposed method is not dependent on any interval settings.
One can expect even better scaling efficiency if the largest averaging interval for each part of the model can be found.

\section {Conclusion}
\label{sec:conclusion}

We proposed a partial model averaging framework for Federated Learning.
Our analysis and experimental results demonstrate the efficacy of the partial averaging in large-scale local SGD.
The proposed framework is readily applicable to any Federated Learning applications.
Breaking the conventional assumption of periodic model averaging can considerably broaden potential design options for Federated Learning algorithms.
We consider harmonizing the partial averaging with many existing advanced Federated Learning algorithms such as FedProx, FedNova, and SCAFFOLD as a critical future work.

\textbf{Societal Impacts} --
This research work does not have any potential adverse impacts on society.
Our study aims to improve the statistical efficiency of Federated Learning algorithms making better use of the provided hardware resources.
Consequently, large-scale Federated Learning applications may finish the neural network training faster, and it can results in reducing the CO2 footprint.
The faster training also may reduce the electricity consumption.

\vspace{.2em}

\bibliography{mybib.bib}

\appendix
\onecolumn
\section{Appendix}
\subsection{Preliminaries}

Herein, we provide the proofs of all Theorems and Lemmas presented in the paper.

\textbf{Notations} -- We first define a few notations for our analysis.
\begin{itemize}
    \item $m$: the number of local models (clients)
    \item $K$: the number of total training iterations
    \item $\tau$: the model averaging interval
    \item $\mathbf{x}_{(j,k)}^{i}$: a local model partition $j$ of client $i$ at iteration $k$
    \item $\mathbf{u}_{(j,k)}$: a model partition $j$ averaged across all $m$ clients at iteration $k$
    \item $\mathbf{g}_{(j,k)}^{i}$: a stochastic gradient of client $i$ with respect to the local model partition $j$ at iteration $k$
    \item $\nabla_j F_i(\cdot)$: a local full-batch gradient of client $i$ with respect to the model partition $j$
    \item $L_{max}$: the maximum Lipschitz constant across all $\tau$ model partitions; $L_{max} = max(L_j), j \in \{1, \cdots, \tau \}$
\end{itemize}

\textbf{Vectorization} --
We further define a vectorized form of the local model partition and its gradients as follows.
\begin{align}
    & \mathbf{x}_{(j,k)} = vec(\mathbf{x}_{(j,k)}^1, \mathbf{x}_{(j,k)}^2, \cdots, \mathbf{x}_{(j,k)}^m) \label{eq:vec1}\\
    & \mathbf{g}_{(j,k)} = vec(\mathbf{g}_{(j,k)}^1, \mathbf{g}_{(j,k)}^2, \cdots, \mathbf{g}_{(j,k)}^m) \label{eq:vec2}\\
    & \mathbf{f}_{(j,k)} = vec(\nabla_j F_1(\mathbf{x}_{k}^{1}), \nabla_j F_2(\mathbf{x}_{k}^{2}), \cdots, \nabla_j F_m(\mathbf{x}_{k}^{m})) \label{eq:vec3}
\end{align}

\textbf{Averaging Matrix} --
We first define a full-averaging matrix $\mathbf{J}_j$ for each model partition $j$ as follows.
\begin{equation}
\begin{split}
    & \mathbf{J}_j = \frac{1}{m}\mathbf{1}_{m}\mathbf{1}_{m}^{\top} \otimes \mathbf{I}_{d_j}, \hspace{0.5cm} j \in \{ 1, \cdots, \tau \},\\
\end{split}
\end{equation}
where $\otimes$ indicates Kronecker product, $\mathbf{1}_{m}\in\mathbb{R}^{m}$ is a vector of ones, and $\mathbf{I}_{d_j} \in \mathbb{R}^{d_j \times d_j}$ is an identity matrix.

Then, we define a time-varying partial-averaging matrix for each model partition $\mathbf{P}_{(j,k)} \in \mathbb{R}^{md_j \times md_j}$ for $j \in \{1, \cdots, \tau \}$.
\begin{equation}
\label{eq:matrix}
\mathbf{P}_{(j,k)} =
\begin{cases}
    \mathbf{J}_j &\hspace{0.5cm}\textrm{ if } k \textrm{ mod } \tau \textrm{ is } j \\
    \mathbf{I}_{j} &\hspace{0.5cm}\textrm{ if } k \textrm{ mod } \tau \textrm{ is not } j,
\end{cases}
\end{equation}
where $\mathbf{I}_j$ is an identity matrix of size $md_j \times md_j$.

Here we present an example where $d = 3$, $\tau = 2$, and $m = 2$.
If $d_0 = 2$ and $d_1 = 1$, we have $\mathbf{P}_{(j,k)}$ as follows.
\begin{align}
    &\mathbf{P}_{(0,0)} = 
    \begin{bmatrix}
    \frac{1}{2} & 0 & \frac{1}{2} & 0 \\
    0 & \frac{1}{2} & 0 & \frac{1}{2} \\
    \frac{1}{2} & 0 & \frac{1}{2} & 0 \\
    0 & \frac{1}{2} & 0 & \frac{1}{2} \\
    \end{bmatrix},
    \mathbf{P}_{(0,1)} = 
    \begin{bmatrix}
    1 & 0 & 0 & 0 \\
    0 & 1 & 0 & 0 \\
    0 & 0 & 1 & 0 \\
    0 & 0 & 0 & 1 \\
    \end{bmatrix},\\
    &\mathbf{P}_{(1,0)} = 
    \begin{bmatrix}
    1 & 0 \\
    0 & 1 \\
    \end{bmatrix},
    \mathbf{P}_{(1,1)} = 
    \begin{bmatrix}
    \frac{1}{2} & 0 \\
    0 & \frac{1}{2} \\
    \end{bmatrix}.
\end{align}

For instance, if $j = 0$ and ($k$ mod $\tau$) is $j$, the first partition of the model is averaged by multiplying $\mathbf{P}_{(0,k)}$ by $\mathbf{x}_{(0,k)}$ as follows.
\begin{align}
    \mathbf{P}_{(0,k)}\mathbf{x}_{(0,k)} & =
    \begin{bmatrix}
    \frac{1}{2} & 0 & \frac{1}{2} & 0 \\
    0 & \frac{1}{2} & 0 & \frac{1}{2} \\
    \frac{1}{2} & 0 & \frac{1}{2} & 0 \\
    0 & \frac{1}{2} & 0 & \frac{1}{2} \\
    \end{bmatrix}
    \begin{bmatrix}
    x^{(0,0)} \\
    x^{(0,1)} \\
    x^{(1,0)} \\
    x^{(1,1)} \\
    \end{bmatrix}
    = \begin{bmatrix}
    (x^{(0,0)} + x^{(1,0)}) / 2 \\
    (x^{(0,1)} + x^{(1,1)}) / 2 \\
    (x^{(0,0)} + x^{(1,0)}) / 2 \\
    (x^{(0,1)} + x^{(1,1)}) / 2
    \end{bmatrix},
\end{align}
where $x^{(i,j)}$ indicates the parameter $j$ of worker $i$.

The averaging matrix $\mathbf{P}_{(j,k)}$ has the following properties.
\begin{enumerate}
    \item {$\mathbf{P}_{(j,k)}\mathbf{1}_{md_j} = \mathbf{1}_{md_j}, \forall j \in \{1,\cdots,\tau \}$.}
    \item {$\mathbf{J}_j \mathbf{P}_{(j,k)}$ is $\mathbf{J}_{j}, \forall k \in \{1,\cdots, K \}$.}
    \item {$\mathbf{P}_{(j,k)}\mathbf{P}_{(j,k')}$ is symmetric, $\forall k,k' \in \{1,\cdots, K \}$.}
    \item {$\mathbf{P}_{(j,k)}\mathbf{P}_{(j,k')}$ consists only of $m \times m$ diagonal blocks, $\forall k,k' \in \{1,\cdots, K \}$.}
\end{enumerate}

Using the vectorized form of the parameters (\ref{eq:vec1}) and gradients (\ref{eq:vec2}), the parameter update rule of FedAvg is
\begin{align}
    \mathbf{x}_k^i & = \mathbf{P}_k(\mathbf{x}_{k-1}^i - \eta \mathbf{g}_{k-1}^{i}),
\end{align}
where $\eta$ is the learning rate.

\section {Convergence Analysis for IID Data}
Now, we provide the proof of main Theorem under IID data settings.

\subsection {Proof of Theorem 1}

\textbf{Theorem 1.} \textit{Suppose all $m$ local models are initialized to the same point $\mathbf{u}_1$. Under Assumption $1 \sim 3$, if Algorithm 1 runs for $K$ iterations using the learning rate $\eta$ that satisfies $L_{\max}^2 \eta^2 \tau (\tau - 1) + \eta L_{\max} \leq 1$, then the average-squared gradient norm of $\mathbf{u}_k$ is bounded as follows}
\begin{align}
    \mathop{\mathbb{E}}\left[\frac{1}{K}\sum_{i=1}^{K} \| \nabla F(\mathbf{u}_k) \|^2\right] & \leq \frac{2}{\eta K} \mathop{\mathbb{E}} \left[ F(\mathbf{u}_1) - F(\mathbf{u}_{inf}) \right] + \frac{\eta}{m} \sum_{j = 1}^{\tau} L_j \sigma_j^2 + \eta^2 (\tau - 1) \sum_{j = 1}^{\tau} L_j^2 \sigma_j^2
\end{align}

\begin{proof}
Based on Lemma \ref{lemma:framework} and \ref{lemma:iid_discrepancy}, we have
\begin{align}
    \mathop{\mathbb{E}}\left[\frac{1}{K}\sum_{i=1}^{K} \| \nabla F(\mathbf{u}_k) \|^2\right] & \leq \frac{2}{\eta K} \mathop{\mathbb{E}} \left[ F(\mathbf{u}_1) - F(\mathbf{u}_{inf}) \right] + \sum_{j = 1}^{\tau} \frac{\eta L_j \sigma_j^2}{m} + \sum_{j = 1}^{\tau} \left( \frac{\eta L_j - 1}{mK} \sum_{i=1}^{m} \sum_{k = 1}^{K} \| \nabla_j F(\mathbf{x}_k^i) \|^2 \right) \nonumber \\
    & \quad + \sum_{j=1}^{\tau} L_j^2 \left( \eta^2 (\tau - 1) \sigma_j^2 + \frac{\eta^2 \tau (\tau - 1)}{mK} \sum_{i=1}^{m} \sum_{k=1}^{K} \mathop{\mathbb{E}}\left[\left\| \nabla_j F(\mathbf{x}_k^i) \right\|^2\right] \right)
\end{align}
After a minor rearrangement, we have
\begin{align}
    \mathop{\mathbb{E}}\left[\frac{1}{K}\sum_{i=1}^{K} \| \nabla F(\mathbf{u}_k) \|^2\right] & \leq \frac{2}{\eta K} \mathop{\mathbb{E}} \left[ F(\mathbf{u}_1) - F(\mathbf{u}_{inf}) \right] + \sum_{j = 1}^{\tau} \frac{\eta L_j \sigma_j^2}{m} + \sum_{j = 1}^{\tau} L_j^2 \eta^2 (\tau - 1) \sigma_j^2 \nonumber \\
    & \quad + \sum_{j = 1}^{\tau} \left( \frac{L_j^2 \eta^2 \tau (\tau - 1) + \eta L_j - 1}{mK} \sum_{i=1}^{m} \sum_{k = 1}^{K} \| \nabla_j F(\mathbf{x}_k^i) \|^2 \right)
\end{align}
If the learning rate $\eta$ satisfies $L_{\max}^2 \eta^2 \tau (\tau - 1) + \eta L_{\max} \leq 1$, we have
\begin{align}
    \mathop{\mathbb{E}}\left[\frac{1}{K}\sum_{i=1}^{K} \| \nabla F(\mathbf{u}_k) \|^2\right] & \leq \frac{2}{\eta K} \mathop{\mathbb{E}} \left[ F(\mathbf{u}_1) - F(\mathbf{u}_{inf}) \right] + \sum_{j = 1}^{\tau} \frac{\eta L_j \sigma_j^2}{m} + \sum_{j = 1}^{\tau} L_j^2 \eta^2 (\tau - 1) \sigma_j^2 \nonumber \\
    & = \frac{2}{\eta K} \mathop{\mathbb{E}} \left[ F(\mathbf{u}_1) - F(\mathbf{u}_{inf}) \right] + \frac{\eta}{m} \sum_{j = 1}^{\tau} L_j \sigma_j^2 + \eta^2 (\tau - 1) \sum_{j = 1}^{\tau} L_j^2 \sigma_j^2
\end{align}
We finish the proof.
\end{proof}

\subsection {Proof of Lemma 1}

\textbf{Lemma 1.}
\textit{(framework) Under Assumption $1 \sim 3$, if $\eta \leq \frac{1}{L}$, Algorithm 1 ensures}
\begin{align}
    \mathop{\mathbb{E}}\left[\frac{1}{K}\sum_{i=1}^{K} \| \nabla F(\mathbf{u}_k) \|^2\right] \leq \frac{2}{\eta K} \mathop{\mathbb{E}} \left[ F(\mathbf{u}_1) - F(\mathbf{u}_{inf}) \right] + \frac{\eta}{m} \sum_{j = 1}^{\tau} L_j \sigma_j^2 + \frac{1}{mK} \sum_{i = 1}^{m} \sum_{k = 1}^{K} \sum_{j = 1}^{\tau} L_j^2 \| \mathbf{u}_{(j,k)} - \mathbf{x}_{(j,k)}^i \|^2
\end{align}
\begin{proof}
Based on Assumption 1, we have
\begin{align}
    \mathop{\mathbb{E}}\left[F(\mathbf{u}_{k+1}) - F(\mathbf{u}_k)\right] & \leq -\eta \sum_{j = 1}^{\tau} \mathop{\mathbb{E}}\left[\langle \nabla_j F(\mathbf{u}_{k}), \frac{1}{m}\sum_{i=1}^{m} \mathbf{g}_{(j,k)}^{i} \rangle\right] + \sum_{j = 1}^{\tau} \frac{\eta^2 L_j}{2} \left( \mathop{\mathbb{E}}\left[\left\| \frac{1}{m}\sum_{i=1}^{m} \mathbf{g}_{(j,k)}^{i} \right\|^2\right] \right) \label{eq:iid_lipschitz}.
\end{align}

The first term on the right-hand side in (\ref{eq:iid_lipschitz}) can be re-written as follows.
\begin{align}
    & -\eta \sum_{j = 1}^{\tau} \left( \mathop{\mathbb{E}}\left[\langle \nabla_j F(\mathbf{u}_{k}), \frac{1}{m}\sum_{i=1}^{m} \mathbf{g}_{(j,k)}^{i} \rangle\right] \right) = -\eta \sum_{j = 1}^{\tau} \left( \frac{1}{m} \sum_{i=1}^{m} \langle \nabla_j F(\mathbf{u}_k), \nabla_j F(\mathbf{x}_k^i) \rangle \right) \\
    & \quad = -\eta \sum_{j = 1}^{\tau} \left( \frac{1}{2m}\sum_{i=1}^{m} \left( \| \nabla_j F(\mathbf{u}_k) \|^2 + \| \nabla_j F(\mathbf{x}_k^i) \|^2 - \| \nabla_j F(\mathbf{u}_k) - \nabla_j F(\mathbf{x}_k^i) \|^2 \right) \right) \label{eq:expand} \\
    & \quad = -\eta \sum_{j = 1}^{\tau} \left( \frac{1}{2} \| \nabla_j F(\mathbf{u}_k) \|^2 + \frac{1}{2m} \sum_{i=1}^{m} \| \nabla_j F(\mathbf{x}_k^i) \|^2 - \frac{1}{2m} \sum_{i=1}^{m} \| \nabla_j F(\mathbf{u}_k) - \nabla_j F(\mathbf{x}_k^i) \|^2 \right), \label{eq:first}
\end{align}
where (\ref{eq:expand}) holds based on a basic equality: $2a^{\top} b = \|a\|^2 + \|b\|^2 - \|a-b\|^2$.

The second term on the right-hand side in (\ref{eq:iid_lipschitz}) is bounded as follows.
\begin{align}
    &\sum_{j = 1}^{\tau} \frac{\eta^2 L_j}{2} \left( \mathop{\mathbb{E}}\left[\left\| \frac{1}{m}\sum_{i=1}^{m} \mathbf{g}_{(j,k)}^{i} \right\|^2\right] \right) \nonumber \\
    & \quad\quad = \sum_{j = 1}^{\tau} \frac{\eta^2 L_j}{2} \left( \mathop{\mathbb{E}}\left[\left\| \frac{1}{m}\sum_{i=1}^{m} \mathbf{g}_{(j,k)}^{i} - \mathop{\mathbb{E}}\left[ \frac{1}{m}\sum_{i=1}^{m} \mathbf{g}_{(j,k)}^{i} \right]\right\|^2\right] + \left\| \mathop{\mathbb{E}} \left[ \frac{1}{m}\sum_{i=1}^{m} \mathbf{g}_{(j,k)}^{i} \right] \right\|^2 \right) \label{eq:iid_split} \\
    & \quad\quad = \sum_{j = 1}^{\tau} \frac{\eta^2 L_j}{2} \left( \mathop{\mathbb{E}}\left[\left\| \frac{1}{m}\sum_{i=1}^{m} \mathbf{g}_{(j,k)}^{i} - \frac{1}{m}\sum_{i=1}^{m} \nabla_j F(\mathbf{x}_k^i) \right\|^2\right] + \left\| \frac{1}{m}\sum_{i=1}^{m} \nabla_j F(\mathbf{x}_k^i) \right\|^2 \right) \nonumber\\
    & \quad\quad = \sum_{j = 1}^{\tau} \frac{\eta^2 L_j}{2} \left( \frac{1}{m^2}\sum_{i=1}^{m} \mathop{\mathbb{E}}\left[\left\| \mathbf{g}_{(j,k)}^{i} - \nabla_j F(\mathbf{x}_k^i) \right\|^2\right] + \left\| \frac{1}{m}\sum_{i=1}^{m} \nabla_j F(\mathbf{x}_k^i) \right\|^2 \right) \label{eq:sigma}\\
    & \quad\quad \leq \sum_{j = 1}^{\tau} \frac{\eta^2 L_j}{2} \left( \frac{1}{m^2}\sum_{i=1}^{m} \mathop{\mathbb{E}}\left[\left\| \mathbf{g}_{(j,k)}^{i} - \nabla_j F(\mathbf{x}_k^i) \right\|^2\right] + \frac{1}{m}\sum_{i=1}^{m} \left\| \nabla_j F(\mathbf{x}_k^i) \right\|^2 \right) \label{eq:jensen}\\
    & \quad\quad \leq \sum_{j = 1}^{\tau} \frac{\eta^2 L_j}{2} \left( \frac{\sigma_j^2}{m} + \frac{1}{m}\sum_{i=1}^{m} \left\| \nabla_j F(\mathbf{x}_k^i) \right\|^2 \right), \label{eq:second}
\end{align}
where (\ref{eq:iid_split}) follows a basic equality: $\mathop{\mathbb{E}}[\| \mathbf{x} \|^2] = \mathop{\mathbb{E}}[\| \mathbf{x} - \mathop{\mathbb{E}}[ \mathbf{x} ] \|^2] + \| \mathop{\mathbb{E}}[ \mathbf{x} ] \|^2$ for any random vector $\mathbf{x}$.
(\ref{eq:sigma}) holds because $\mathbf{g}_{(j,k)}^{i} - \nabla_j F(\mathbf{x}_k^i)$ has $0$ mean and is independent across $i$.
(\ref{eq:jensen}) holds based on the convexity of $\ell_2$ norm and Jensen's inequality.
Then, plugging in (\ref{eq:first}) and (\ref{eq:second}) into (\ref{eq:iid_lipschitz}), we have
\begin{align}
    & \mathop{\mathbb{E}}\left[F(\mathbf{u}_{k+1}) - F(\mathbf{u}_k)\right] \nonumber \\
    & \quad\quad \leq -\eta \sum_{j = 1}^{\tau} \left( \frac{1}{2} \| \nabla_j F(\mathbf{u}_k) \|^2 + \frac{1}{2m} \sum_{i=1}^{m} \| \nabla_j F(\mathbf{x}_k^i) \|^2 - \frac{1}{2m} \sum_{i=1}^{m} \| \nabla_j F(\mathbf{u}_k) - \nabla_j F(\mathbf{x}_k^i) \|^2 \right) \nonumber \\
    & \quad\quad\quad\quad + \sum_{j = 1}^{\tau} \frac{\eta^2 L_j}{2} \left( \frac{\sigma_j^2}{m} + \frac{1}{m}\sum_{i=1}^{m} \left\| \nabla_j F(\mathbf{x}_k^i) \right\|^2 \right) \nonumber \\
    & \quad\quad = \sum_{j = 1}^{\tau} \left( - \frac{\eta}{2} \| \nabla_j F(\mathbf{u}_k \|^2 - \frac{\eta}{2m} \sum_{i=1}^{m} \| \nabla_j F(\mathbf{x}_k^i) \|^2 + \frac{\eta}{2m} \sum_{i=1}^{m} \| \nabla_j F(\mathbf{u}_k) - \nabla_j F(\mathbf{x}_k^i) \|^2 \right) \nonumber \\
    & \quad\quad\quad\quad + \sum_{j = 1}^{\tau} \left( \frac{\eta^2 L_j \sigma_j^2}{2m} + \frac{\eta^2 L_j}{2m} \sum_{i=1}^{m} \| \nabla_j F(\mathbf{x}_k^i) \|^2 \right) \label{eq:long}
\end{align}
After dividing both sides of (\ref{eq:long}) by $\frac{\eta}{2}$ and rearranging, we have
\begin{align}
    \sum_{j = 1}^{\tau} \| \nabla_j F(\mathbf{u}_k) \|^2 & \leq \frac{2}{\eta} \mathop{\mathbb{E}} \left[ F(\mathbf{u}_k) - F(\mathbf{u}_{k+1}) \right] + \sum_{j = 1}^{\tau} \frac{\eta L_j \sigma_j^2}{m} + \sum_{j = 1}^{\tau} \left( \frac{\eta L_j - 1}{m} \sum_{i=1}^{m} \| \nabla_j F(\mathbf{x}_k^i) \|^2 \right) \nonumber \\
    & \quad\quad + \sum_{j = 1}^{\tau} \left( \frac{1}{m} \sum_{i = 1}^{m} \| \nabla_j F(\mathbf{u}_k) - \nabla_j F(\mathbf{x}_k^i) \|^2 \right) \label{eq:rearrange}
\end{align}
Taking expectation on both sides of (\ref{eq:rearrange}) and averaging it over $K$ iterations, we have
\begin{align}
    \mathop{\mathbb{E}} \left[ \frac{1}{K} \sum_{k = 1}^{K} \left( \sum_{j = 1}^{\tau} \| \nabla_j F(\mathbf{u}_k) \|^2 \right) \right] & \leq \frac{2}{\eta K} \mathop{\mathbb{E}} \left[ F(\mathbf{u}_1) - F(\mathbf{u}_{k+1}) \right] + \sum_{j = 1}^{\tau} \frac{\eta L_j \sigma_j^2}{m} \nonumber \\
    & \quad\quad + \sum_{j = 1}^{\tau} \left( \frac{\eta L_j - 1}{mK} \sum_{i=1}^{m} \sum_{k = 1}^{K} \mathop{\mathbb{E}} \left[ \| \nabla_j F(\mathbf{x}_k^i) \|^2 \right] \right) \nonumber \\
    & \quad\quad + \sum_{j = 1}^{\tau} \left( \frac{1}{mK} \sum_{i = 1}^{m} \sum_{k = 1}^{K} \mathop{\mathbb{E}} \left[ \| \nabla_j F(\mathbf{u}_k) - \nabla_j F(\mathbf{x}_k^i) \|^2 \right] \right)
\end{align}
If $\eta \leq \frac{1}{L_{max}}$, then
\begin{align}
    \mathop{\mathbb{E}} \left[ \frac{1}{K} \sum_{k = 1}^{K} \left( \sum_{j = 1}^{\tau} \| \nabla_j F(\mathbf{u}_k) \|^2 \right) \right] & \leq \frac{2}{\eta K} \mathop{\mathbb{E}} \left[ F(\mathbf{u}_1) - F(\mathbf{u}_{k+1}) \right] + \sum_{j = 1}^{\tau} \frac{\eta L_j \sigma_j^2}{m} \nonumber \\
    & \quad\quad + \sum_{j = 1}^{\tau} \left( \frac{1}{mK} \sum_{i = 1}^{m} \sum_{k = 1}^{K} \mathop{\mathbb{E}} \left[ \| \nabla_j F(\mathbf{u}_k) - \nabla_j F(\mathbf{x}_k^i) \|^2 \right] \right) \nonumber \\
    & \leq \frac{2}{\eta K} \mathop{\mathbb{E}} \left[ F(\mathbf{u}_1) - F(\mathbf{u}_{k+1}) \right] + \sum_{j = 1}^{\tau} \frac{\eta L_j \sigma_j^2}{m} \nonumber \\
    & \quad\quad + \frac{1}{mK} \sum_{i = 1}^{m} \sum_{k = 1}^{K} \left( \sum_{j = 1}^{\tau} L_j^2 \mathop{\mathbb{E}} \left[ \| \mathbf{u}_{(j,k)} - \mathbf{x}_{(j,k)}^i \|^2 \right] \right) \label{eq:iid_lipscitz_late},
\end{align}
where (\ref{eq:iid_lipscitz_late}) holds based on Assumption 1.
Finally, summing up the gradients of $\tau$ model partitions, we have
\begin{align}
    \mathop{\mathbb{E}} \left[ \frac{1}{K} \sum_{k = 1}^{K} \| \nabla F(\mathbf{u}_k) \|^2 \right] & \leq \frac{2}{\eta K} \mathop{\mathbb{E}} \left[ F(\mathbf{u}_1) - F(\mathbf{u}_{k+1}) \right] + \sum_{j = 1}^{\tau} \frac{\eta L_j \sigma_j^2}{m} + \frac{1}{mK} \sum_{i = 1}^{m} \sum_{k = 1}^{K} \sum_{j = 1}^{\tau} L_j^2 \mathop{\mathbb{E}} \left[ \| \mathbf{u}_{(j,k)} - \mathbf{x}_{(j,k)}^i \|^2 \right] \nonumber \\
    & \leq \frac{2}{\eta K} \mathop{\mathbb{E}} \left[ F(\mathbf{u}_1) - F(\mathbf{u}_{inf}) \right] + \frac{\eta}{m} \sum_{j = 1}^{\tau} L_j \sigma_j^2 + \frac{1}{mK} \sum_{i = 1}^{m} \sum_{k = 1}^{K}  \sum_{j = 1}^{\tau} L_j^2 \mathop{\mathbb{E}} \left[ \| \mathbf{u}_{(j,k)} - \mathbf{x}_{(j,k)}^i \|^2 \right] \nonumber
\end{align}
We complete the proof.

\end{proof}

\subsection {Proof of Lemma 2}

\textbf{Lemma 2.}
\textit{(model discrepancy) Under Assumption $1 \sim 3$, Algorithm 1 ensures}
\begin{align}
    & \frac{1}{mK}\sum_{k=1}^{K}\sum_{i=1}^{m} \sum_{j=1}^{\tau} L_j^2 \mathop{\mathbb{E}} \left[\| \mathbf{u}_{(j,k)} - \mathbf{x}_{(j,k)}^i \|^2 \right] \nonumber \\
    & \quad\quad \leq \sum_{j=1}^{\tau} L_j^2 \left( \eta^2 (\tau - 1) \sigma_j^2 + \frac{\eta^2 \tau (\tau - 1)}{mK} \sum_{i=1}^{m} \sum_{k=1}^{K} \mathop{\mathbb{E}}\left[\left\| \nabla_j F(\mathbf{x}_k^i) \right\|^2\right] \right).\nonumber
\end{align}
\begin{proof}
The averaged distance of each partition $j$ can be re-written using the vectorized form of the parameters as follows.
\begin{align}
    \sum_{i=1}^{m} \left\| \mathbf{u}_{(j,k)}- \mathbf{x}_{(j,k)}^{i} \right\|^2 = \left\| \mathbf{J}_j \mathbf{x}_{(j,k)} - \mathbf{x}_{(j,k)} \right\|^2 = \left\| (\mathbf{J}_j - \mathbf{I}_j) \mathbf{x}_{(j,k)} \right\|^2.
\end{align}
According to the parameter update rule, we have
\begin{align}
    (\mathbf{J}_j - \mathbf{I}_j)\mathbf{x}_{(j,k)} & = (\mathbf{J}_j - \mathbf{I}_j)\mathbf{P}_{(j,k-1)} (\mathbf{x}_{(j,k-1)} - \eta \mathbf{g}_{(j,k-1)}) \nonumber \\
    & = (\mathbf{J}_j - \mathbf{I}_j)\mathbf{P}_{(j,k-1)} \mathbf{x}_{(j,k-1)} - (\mathbf{J}_j - \mathbf{P}_{(j,k-1)})\eta \mathbf{g}_{(j,k-1)},
\end{align}
where the second equality holds because $\mathbf{J}_j \mathbf{P}_j = \mathbf{J}_j$ and $\mathbf{I}_j \mathbf{P}_j = \mathbf{P}_j$.

Then, expanding the expression of $\mathbf{x}_{(j,k-1)}$, we have
\begin{align}
    (\mathbf{J}_j - \mathbf{I}_j)\mathbf{x}_{(j,k)} & = (\mathbf{J}_j - \mathbf{I}_j) \mathbf{P}_{(j,k-1)}(\mathbf{P}_{(j,k-2)}(\mathbf{x}_{(j,k-2)} - \eta \mathbf{g}_{(j,k-2)})) - (\mathbf{J}_j - \mathbf{P}_{(j,k-1)}) \eta \mathbf{g}_{(j,k-1)} \nonumber \\
    &= (\mathbf{J}_j - \mathbf{I}_j) \mathbf{P}_{(j,k-1)}\mathbf{P}_{(j,k-2)}\mathbf{x}_{(j,k-2)} - (\mathbf{J}_j - \mathbf{P}_{(j,k-1)}\mathbf{P}_{(j,k-2)})\mu \mathbf{g}_{(j,k-2)} \\
    & \quad - (\mathbf{J}_j - \mathbf{P}_{(j,k-1)})\mu \mathbf{g}_{(j,k-1)}.\nonumber
\end{align}

Repeating the same procedure for $\mathbf{x}_{(j,k-2)}, \mathbf{x}_{(j,k-3)}, \cdots, \mathbf{x}_{(j,2)}$, we have
\begin{align}
    (\mathbf{J}_j - \mathbf{I}_j)\mathbf{x}_{(j,k)} &= (\mathbf{J}_j - \mathbf{I}_j) \prod_{s=1}^{k-1}\mathbf{P}_{(j,s)}\mathbf{x}_{(j,1)} - \eta \sum_{s=1}^{k-1}(\mathbf{J}_j - \prod_{l=s}^{k-1}\mathbf{P}_{(j,l)})\mathbf{g}_{(j,s)} \nonumber\\
    &= - \eta \sum_{s=1}^{k-1}(\mathbf{J}_j - \prod_{l=s}^{k-1}\mathbf{P}_{(j,l)})\mathbf{g}_{(j,s)},
\end{align}
where the second equality holds since $\mathbf{x}_{(j,1)}^{i}$ is all the same among $m$ workers and thus $(\mathbf{J}_j - \mathbf{I}_j)\mathbf{x}_{(j,1)}$ is $0$.

Then, we have
\begin{align}
    & \frac{1}{mK}\sum_{k=1}^{K}\sum_{i=1}^{m} \sum_{j=1}^{\tau} L_j^2 \mathop{\mathbb{E}}[\| \mathbf{u}_{(j,k)} - \mathbf{x}_{(j,k)}^i \|^2] \nonumber \\
    & = \frac{1}{mK} \sum_{k=1}^{K} \sum_{j=1}^{\tau} L_j^2 \mathop{\mathbb{E}}[\| (\mathbf{J}_j - \mathbf{I}_j)\mathbf{x}_{(j,k)} \|^2] \nonumber \\ 
    & = \frac{\eta^2}{mK}\sum_{k=1}^{K} \sum_{j=1}^{\tau}L_j^2 \mathop{\mathbb{E}}[\| \sum_{s=1}^{k-1} (\mathbf{J}_j - \prod_{l=s}^{k-1}\mathbf{P}_{(j,l)}) \mathbf{g}_{(j,s)} \|^2] \nonumber \\
    & = \frac{\eta^2}{mK}\sum_{k=1}^{K} \sum_{j=1}^{\tau} L_j^2 \mathop{\mathbb{E}}\left[\left\| \sum_{s=1}^{k-1} (\mathbf{J}_j - \prod_{l=s}^{k-1}\mathbf{P}_{(j,l)})(\mathbf{g}_{(j,s)} - \mathbf{f}_{(j,s)}) + \sum_{s=1}^{k-1} (\mathbf{J}_j - \prod_{l=s}^{k-1}\mathbf{P}_{(j,l)}) \mathbf{f}_{(j,s)} \right\|^2\right] \nonumber \\
    & \leq \frac{2\eta^2}{mK} \sum_{j=1}^{\tau} L_j^2 \left( \underset{T_1}{\underbrace{ \sum_{k=1}^{K} \mathop{\mathbb{E}}\left[\left\| \sum_{s=1}^{k-1} (\mathbf{J}_j - \prod_{l=s}^{k-1}\mathbf{P}_{(j,l)}) (\mathbf{g}_{(j,s)} - \mathbf{f}_{(j,s)}) \right\|^2\right]}} +  \underset{T_2}{\underbrace{ \sum_{k=1}^{K} \mathop{\mathbb{E}}\left[\left\| \sum_{s=1}^{k-1} (\mathbf{J}_j - \prod_{l=s}^{k-1}\mathbf{P}_{(j,l)}) \mathbf{f}_{(j,s)} \right\|^2\right]}} \right), \label{eq:t1t2}
\end{align}
where the last inequality holds based on a basic inequality: $\|a+b\|^2 \leq 2\|a\|^2 + 2\|b\|^2$.
Now we focus on bounding the above two terms, $T_1$ and $T_2$, separately.

\textbf{Bounding $T_1$} --
We first partition $K$ iterations into three subsets: the first $j$ iterations, the next $K - \tau$ iterations, and the final $\tau - j$ iterations.
We bound the first $j$ iterations as follows.
\begin{align}
    \sum_{k=1}^{j} \mathop{\mathbb{E}}\left[\left\| \sum_{s=1}^{k-1} \left( \mathbf{J}_j - \prod_{l=s}^{k-1} \mathbf{P}_{(j,l)} \right) \left( \mathbf{g}_{(j,s)} - \mathbf{f}_{(j,s)} \right) \right\|^2\right] & = \sum_{k=1}^{j} \sum_{s=1}^{k-1} \mathop{\mathbb{E}}\left[\left\| \left( \mathbf{J}_j - \prod_{l=s}^{k-1} \mathbf{P}_{(j,l)} \right) \left( \mathbf{g}_{(j,s)} - \mathbf{f}_{(j,s)} \right) \right\|^2\right] \label{eq:j_jensen} \\
    & = \sum_{k=1}^{j} \sum_{s=1}^{k-1} \mathop{\mathbb{E}}\left[\left\| \left( \mathbf{J}_j - \mathbf{I}_{j} \right) \left(\mathbf{g}_{(j,s)} - \mathbf{f}_{(j,s)} \right) \right\|^2\right] \label{eq:j_reduce} \\
    & \leq \sum_{k=1}^{j} \sum_{s=1}^{k-1} \mathop{\mathbb{E}}\left[\left\| \mathbf{g}_{(j,s)} - \mathbf{f}_{(j,s)} \right\|^2 \right] \label{eq:zeroout}\\
    & = \sum_{k=1}^{j} \sum_{s=1}^{k-1} \sum_{i=1}^{m} \mathop{\mathbb{E}}\left[\left\|  \mathbf{g}_{(j,s)}^{i} - \nabla_j F_i (\mathbf{x}_{s}^i ) \right\|^2 \right] \nonumber \\
    & \leq \sum_{k=1}^{j} \sum_{s=1}^{k-1} m \sigma_j^2 = \frac{j (j - 1)}{2} m \sigma_j^2, \label{eq:j_bound}
\end{align}
where (\ref{eq:j_jensen}) holds because $\mathbf{g}_{(j,s)} - \nabla_j F(\mathbf{x}_{s})$ has a mean of $0$ and independent across $s$; (\ref{eq:j_reduce}) holds because $\prod_{l=s}^{k-1} \mathbf{P}_{(j,l)}$ is $\mathbf{I}_j$ when $k < j$; (\ref{eq:zeroout}) holds based on Lemma \ref{lemma:zeroout}.

Then, the next $K - \tau$ iterations of $T_1$ are bounded as follows.
\begin{align}
    & \sum_{k = j + 1}^{K - \tau + j} \mathop{\mathbb{E}}\left[\left\| \sum_{s=1}^{k-1} \left( \mathbf{J}_j - \prod_{l=s}^{k-1} \mathbf{P}_{(j,l)} \right) \left( \mathbf{g}_{(j,s)} - \mathbf{f}_{(j,s)} \right) \right\|^2\right] & = \sum_{k = j + 1}^{K - \tau + j} \sum_{s=1}^{k-1} \mathop{\mathbb{E}}\left[\left\| \left(\mathbf{J}_j - \prod_{l=s}^{k-1} \mathbf{P}_{(j,l)} \right) \left( \mathbf{g}_{(j,s)} - \mathbf{f}_{(j,s)} \right) \right\|^2\right]. \label{eq:middle_op}
\end{align}
Without loss of generality, we replace $k$ with $a\tau + b + j$ where $a$ is the communication round and $b$ is the local update step.
Note that these iterations are $\frac{K}{\tau} - 1$ full communication rounds.
So, (\ref{eq:middle_op}) can be re-written and bounded as follows.
\begin{align}
    & \sum_{a = 0}^{K / \tau - 2} \sum_{b = 1}^{\tau} \sum_{s = 1}^{a\tau + b + j - 1} \mathop{\mathbb{E}}\left[\left\| \left(\mathbf{J}_j - \prod_{l=s}^{k-1} \mathbf{P}_{(j,l)} \right) \left( \mathbf{g}_{(j,s)} - \mathbf{f}_{(j,s)} \right) \right\|^2 \right] \nonumber\\
    & \quad\quad = \sum_{a = 0}^{K / \tau - 2} \sum_{b = 1}^{\tau} \sum_{s = 1}^{a\tau + j} \mathop{\mathbb{E}}\left[\left\| \left(\mathbf{J}_j - \prod_{l=s}^{k-1} \mathbf{P}_{(j,l)} \right) \left( \mathbf{g}_{(j,s)} - \mathbf{f}_{(j,s)} \right) \right\|^2 \right] \nonumber \\
    & \quad\quad\quad\quad + \sum_{a = 0}^{K / \tau - 2} \sum_{b = 1}^{\tau} \sum_{s = a\tau + j + 1}^{a\tau + b + j - 1} \mathop{\mathbb{E}}\left[\left\| \left(\mathbf{J}_j - \prod_{l=s}^{k-1} \mathbf{P}_{(j,l)} \right) \left( \mathbf{g}_{(j,s)} - \mathbf{f}_{(j,s)} \right\|^2 \right) \right] \nonumber\\
    &  \quad\quad = \sum_{a = 0}^{K / \tau - 2} \sum_{b = 1}^{\tau} \sum_{s = a\tau + j + 1}^{a\tau + b + j - 1} \mathop{\mathbb{E}}\left[\left\| \left(\mathbf{J}_j - \prod_{l=s}^{k-1} \mathbf{P}_{(j,l)} \right) \left( \mathbf{g}_{(j,s)} - \mathbf{f}_{(j,s)} \right\|^2 \right) \right] \label{eq:middle_zeroout} \\
    & \quad\quad = \sum_{a = 0}^{K / \tau - 2} \sum_{b = 1}^{\tau} \sum_{s = a\tau + j + 1}^{a\tau + b + j - 1} \mathop{\mathbb{E}}\left[\left\| \mathbf{g}_{(j,s)} - \mathbf{f}_{(j,s)} \right\|^2 \right] \label{eq:zeroout_2} \\
    & \quad\quad \leq \sum_{a = 0}^{K / \tau - 2} \sum_{b = 1}^{\tau} \sum_{s = a\tau + j + 1}^{a\tau + b + j - 1} m \sigma_j^2 \nonumber\\
    & \quad\quad = \sum_{a = 0}^{K / \tau - 2} \sum_{b = 1}^{\tau} (b - 1) m \sigma_j^2 \nonumber\\
    & \quad\quad = \sum_{a = 0}^{K / \tau - 2} \frac{\tau (\tau - 1)}{2} m \sigma_j^2 = (\frac{K}{\tau} - 1)\frac{\tau (\tau - 1)}{2} m \sigma_j^2, \label{eq:middle_bound}
\end{align}
where (\ref{eq:middle_zeroout}) holds because $\mathbf{J}_j - \prod_{l=s}^{a\tau + b + j - 1} \mathbf{P}_{(j,l)}$ becomes $0$ when $s \leq a\tau + j$; (\ref{eq:zeroout_2}) holds based on Lemma \ref{lemma:zeroout}.

Finally, the last $\tau - j$ iterations are bounded as follows.
\begin{align}
    & \sum_{k = K - \tau + j + 1}^{K} \mathop{\mathbb{E}}\left[\left\| \sum_{s=1}^{k-1} \left( \mathbf{J}_j - \prod_{l=s}^{k-1} \mathbf{P}_{(j,l)} \right) \left( \mathbf{g}_{(j,s)} - \mathbf{f}_{(j,s)} \right) \right\|^2\right] \nonumber\\
    & = \sum_{k = K - \tau + j + 1}^{K} \sum_{s=1}^{k-1} \mathop{\mathbb{E}}\left[\left\| \left( \mathbf{J}_j - \prod_{l=s}^{k-1} \mathbf{P}_{(j,l)} \right) \left( \mathbf{g}_{(j,s)} - \mathbf{f}_{(j,s)} \right) \right\|^2\right] \label{eq:last_jensen} \\
\end{align}
where (\ref{eq:last_jensen}) holds because $\mathbf{g}_{(j,s)} - \nabla_j F(\mathbf{x}_{s})$ has $0$ mean and independent across $s$.
Replacing $k$ with $a\tau + b + j$, we have
\begin{align}
    & \sum_{a = K / \tau - 1}^{K / \tau - 1} \sum_{b = 1}^{\tau - j} \left(\sum_{s = 1}^{a\tau + b + j - 1} \mathop{\mathbb{E}}\left[\left\| \left( \mathbf{J}_j - \prod_{l=s}^{k-1} \mathbf{P}_{(j,l)} \right) \left( \mathbf{g}_{(j,s)} - \mathbf{f}_{(j,s)} \right) \right\|^2 \right] \right) \nonumber \\
    & \quad\quad = \sum_{a = K / \tau - 1}^{K / \tau - 1} \sum_{b = 1}^{\tau - j} \left( \sum_{s = 1}^{a\tau + j} \mathop{\mathbb{E}}\left[\left\| \left( \mathbf{J}_j - \prod_{l=s}^{k-1} \mathbf{P}_{(j,l)} \right) \left( \mathbf{g}_{(j,s)} - \mathbf{f}_{(j,s)} \right) \right\|^2 \right] \right) \nonumber \\
    & \quad\quad\quad\quad + \sum_{a = K / \tau - 1}^{K / \tau - 1} \sum_{b = 1}^{\tau - j} \left( \sum_{s = a\tau + j + 1}^{a\tau + b + j - 1} \mathop{\mathbb{E}}\left[\left\|  \left( \mathbf{J}_j - \prod_{l=s}^{k-1} \mathbf{P}_{(j,l)} \right) \left( \mathbf{g}_{(j,s)} - \mathbf{f}_{(j,s)} \right) \right\|^2 \right] \right) \nonumber\\
    & \quad\quad = \sum_{a = K / \tau - 1}^{K / \tau - 1} \sum_{b = 1}^{\tau - j} \left( \sum_{s = a\tau + j + 1}^{a\tau + b + j - 1} \mathop{\mathbb{E}}\left[\left\| \left( \mathbf{J}_j - \prod_{l=s}^{k-1} \mathbf{P}_{(j,l)} \right) \left( \mathbf{g}_{(j,s)} - \mathbf{f}_{(j,s)} \right) \right\|^2 \right] \right) \label{eq:last_short}\\
    & \quad\quad = \sum_{a = K / \tau - 1}^{K / \tau - 1} \sum_{b = 1}^{\tau - j} \left( \sum_{s = a\tau + j + 1}^{a\tau + b + j - 1} \mathop{\mathbb{E}}\left[\left\| \mathbf{g}_{(j,s)} - \mathbf{f}_{(j,s)} \right\|^2 \right] \right) \label{eq:last_zeroout} \\
    & \quad\quad \leq \sum_{a = K / \tau - 1}^{K / \tau - 1} \sum_{b = 1}^{\tau - j} \left( \sum_{s = a\tau + j + 1}^{a\tau + b + j - 1} \sigma_j^2 \right) = \sum_{b = 1}^{\tau - j} (b - 1) m \sigma_j^2 = \frac{(\tau - j)(\tau - j - 1)}{2} m \sigma_j^2, \label{eq:last_bound}
\end{align}
where (\ref{eq:last_short}) holds because $\mathbf{J}_j - \prod_{l=s}^{K - \tau + b + j - 1} \mathbf{P}_{(j,l)}$ becomes $0$ when $s \leq K - \tau + j$; (\ref{eq:last_zeroout}) holds based on Lemma \ref{lemma:zeroout}.

Based on (\ref{eq:j_bound}), (\ref{eq:middle_bound}), and (\ref{eq:last_bound}), $T_1$ is bounded as follows.
\begin{align}
    T_1 & \leq \frac{j (j - 1)}{2} m \sigma_j^2 + (\frac{K}{\tau} - 1)\frac{\tau (\tau - 1)}{2} m \sigma_j^2 + \frac{(\tau - j)(\tau - j - 1)}{2} m \sigma_j^2 \nonumber \\
    & = \frac{j (j - 1) + (\tau - j)(\tau - j - 1)}{2} m \sigma_j^2 + (\frac{K}{\tau} - 1)\frac{\tau (\tau - 1)}{2} m \sigma_j^2 \nonumber \\
    & \leq \frac{\tau (\tau - 1)}{2} m \sigma_j^2 + (\frac{K}{\tau} - 1)\frac{\tau (\tau - 1)}{2} m \sigma_j^2 \label{eq:tau}\\
    & = mK \frac{(\tau - 1)}{2} \sigma_j^2, \label{eq:t1_bound}
\end{align}
where (\ref{eq:tau}) holds because $0 < j \leq \tau$.
Here, we finish bounding $T_1$.

\textbf{Bounding $T_2$} --
Likely to $T_1$, we partition $T_2$ to three subsets and bound them separately.
We begin with the first $j$ iterations.
\begin{align}
    & \sum_{k=1}^{j} \mathop{\mathbb{E}}\left[\left\| \sum_{s=1}^{k-1} \left( \mathbf{J}_j - \prod_{l=s}^{k-1}\mathbf{P}_{(j,l)} \right) \mathbf{f}_{(j,s)} \right\|^2\right] \nonumber\\
    & \leq \sum_{k=1}^{j} (k - 1) \sum_{s=1}^{k-1} \mathop{\mathbb{E}}\left[\left\| \left( \mathbf{J}_j - \prod_{l=s}^{k-1}\mathbf{P}_{(j,l)} \right) \mathbf{f}_{(j,s)} \right\|^2\right] \label{eq:t2_j_jensen}\\
    & = \sum_{k=1}^{j} (k - 1) \sum_{s = 1}^{k - 1} \mathop{\mathbb{E}}\left[\left\| \mathbf{f}_{(j,s)} \right\|^2\right] \label{eq:t2_j_zeroout} \\
    & \leq \frac{j (j - 1)}{2} \sum_{k = 1}^{j - 1} \mathop{\mathbb{E}}\left[\left\| \mathbf{f}_{(j,k)} \right\|^2\right], \label{eq:t2_j_bound}
\end{align}
where (\ref{eq:t2_j_jensen}) holds based on the convexity of $\ell_2$ norm and Jensen's inequality; (\ref{eq:t2_j_zeroout}) holds based on Lemma \ref{lemma:zeroout}.

Then, the next $K - \tau$ iterations of $T_2$ are bounded as follows.
\begin{align}
    & \sum_{k = j + 1}^{K - \tau} \mathop{\mathbb{E}}\left[\left\| \sum_{s=1}^{k-1} (\mathbf{J}_j - \prod_{l=s}^{k-1}\mathbf{P}_{(j,l)}) \mathbf{f}_{(j,s)} \right\|^2\right] \nonumber\\
    & = \sum_{a = 0}^{K / \tau - 2} \sum_{b = 1}^{\tau} \mathop{\mathbb{E}}\left[\left\| \sum_{s=1}^{a \tau + b + j - 1} (\mathbf{J}_j - \prod_{l=s}^{a\tau + b + j - 1}\mathbf{P}_{(j,l)}) \mathbf{f}_{(j,s)} \right\|^2\right] \nonumber\\
    & = \sum_{a = 0}^{K / \tau - 2} \sum_{b = 1}^{\tau} \mathop{\mathbb{E}}\left[\left\| \sum_{s = a\tau + j + 1}^{a\tau + b + j - 1} (\mathbf{J}_j - \prod_{l=s}^{a\tau + b + j - 1}\mathbf{P}_{(j,l)}) \mathbf{f}_{(j,s)} \right\|^2\right] \label{eq:t2_middle_zeroout}\\
    & = \sum_{a = 0}^{K / \tau - 2} \sum_{b = 1}^{\tau} \left( (b - 1) \sum_{s = a\tau + j + 1}^{a\tau + b + j - 1} \mathop{\mathbb{E}}\left[\left\| (\mathbf{J}_j - \prod_{l=s}^{a\tau + b + j - 1}\mathbf{P}_{(j,l)}) \mathbf{f}_{(j,s)} \right\|^2\right] \right) \label{eq:t2_middle_jensen}\\
    & \leq \sum_{a = 0}^{K / \tau - 2} \sum_{b = 1}^{\tau} \left( (b - 1) \sum_{s = a\tau + j + 1}^{a\tau + b + j - 1} \mathop{\mathbb{E}}\left[\left\| \mathbf{f}_{(j,s)} \right\|^2 \left\| (\mathbf{J}_j - \prod_{l=s}^{a\tau + b + j - 1}\mathbf{P}_{(j,l)}) \right\|_{op}^2 \right] \right) \label{eq:t2_middle_op}\\
    & \leq \sum_{a = 0}^{K / \tau - 2} \sum_{b = 1}^{\tau} \left( (b - 1) \sum_{s = a\tau + j + 1}^{a\tau + b + j - 1} \mathop{\mathbb{E}}\left[\left\| \mathbf{f}_{(j,s)} \right\|^2 \right] \right) \label{eq:t2_middle_opout} \\
    & \leq \frac{\tau (\tau - 1)}{2} \sum_{a = 0}^{K / \tau - 2} \left( \sum_{s = a\tau + j + 1}^{a\tau + \tau + j - 1} \mathop{\mathbb{E}}\left[\left\| \mathbf{f}_{(j,s)} \right\|^2 \right] \right) \nonumber \\
    & \leq \frac{\tau (\tau - 1)}{2} \sum_{k = j + 1}^{K - \tau + j - 1} \mathop{\mathbb{E}}\left[\left\| \mathbf{f}_{(j,k)} \right\|^2 \right], \label{eq:t2_middle_bound}
\end{align}
where (\ref{eq:t2_middle_zeroout}) holds because $\mathbf{J}_j - \prod_{l=s}^{a\tau + b + j - 1}\mathbf{P}_{(j,l)}$ becomes 0 when $s \leq a\tau + j$.
(\ref{eq:t2_middle_jensen}) holds based on the convexity of $\ell_2$ norm and Jensen's inequality.
(\ref{eq:t2_middle_op}) holds based on Lemma \ref{lemma:operator}.
(\ref{eq:t2_middle_opout}) holds based on Lemma \ref{lemma:zeroout}.

Finally, the last $\tau - j$ iterations of $T_2$ are bounded as follows.
\begin{align}
    & \sum_{k = K - \tau + j + 1}^{K} \mathop{\mathbb{E}}\left[\left\| \sum_{s=1}^{k-1} \left(\mathbf{J}_j - \prod_{l=s}^{k-1}\mathbf{P}_{(j,l)} \right) \mathbf{f}_{(j,s)} \right\|^2\right] \nonumber\\
    & \quad\quad = \sum_{a = K / \tau - 1}^{K / \tau - 1} \sum_{b = 1}^{\tau - j} \left( \mathop{\mathbb{E}}\left[\left\| \sum_{s=1}^{a\tau + b + j - 1} \left( \mathbf{J}_j - \prod_{l=s}^{a\tau + b + j - 1}\mathbf{P}_{(j,l)} \right) \mathbf{f}_{(j,s)} \right\|^2 \right] \right) \nonumber \\
    & \quad\quad = \sum_{a = K / \tau - 1}^{K / \tau - 1} \sum_{b = 1}^{\tau - j} \left( \mathop{\mathbb{E}}\left[\left\| \sum_{s = a\tau + j + 1}^{a\tau + b + j - 1} \left( \mathbf{J}_j - \prod_{l=s}^{a\tau + b + j - 1}\mathbf{P}_{(j,l)} \right) \mathbf{f}_{(j,s)} \right\|^2\right] \right) \label{eq:t2_last_zeroout} \\
    & \quad\quad = \sum_{a = K / \tau - 1}^{K / \tau - 1} \sum_{b = 1}^{\tau - j} \left( (b - 1) \sum_{s = a\tau + j + 1}^{a\tau + b + j - 1} \mathop{\mathbb{E}}\left[\left\| \left( \mathbf{J}_j - \prod_{l=s}^{a\tau + b + j - 1}\mathbf{P}_{(j,l)} \right) \mathbf{f}_{(j,s)} \right\|^2\right] \right) \label{eq:t2_last_jensen} \\
    & \quad\quad \leq \sum_{a = K / \tau - 1}^{K / \tau - 1} \sum_{b = 1}^{\tau - j} \left( (b - 1) \sum_{s = a\tau + j + 1}^{a\tau + b + j - 1} \mathop{\mathbb{E}}\left[\left\| \mathbf{f}_{(j,s)} \right\|^2 \right] \right) \label{eq:t2_last_opout} \\
    & \quad\quad \leq \frac{(\tau - j)(\tau - j - 1)}{2} \sum_{a = K / \tau - 1}^{K / \tau - 1} \left( \sum_{s = a\tau + j + 1}^{a\tau + \tau - 1} \mathop{\mathbb{E}}\left[\left\| \mathbf{f}_{(j,s)} \right\|^2 \right] \right) \nonumber \\
    & \quad\quad = \frac{(\tau - j)(\tau - j - 1)}{2} \sum_{k = K - \tau + j + 1}^{K - 1} \mathop{\mathbb{E}}\left[\left\| \mathbf{f}_{(j,k)} \right\|^2 \right], \label{eq:t2_last_bound}
\end{align}
where (\ref{eq:t2_last_zeroout}) holds because $\mathbf{J}_j - \prod_{l = s}^{a\tau + b + j - 1}$ becomes $0$ when $s \leq a\tau + j$; (\ref{eq:t2_last_jensen}) holds based on the convexity of $\ell_2$ norm and Jensen's inequality; (\ref{eq:t2_last_opout}) holds based on Lemma \ref{lemma:zeroout}.

Based on (\ref{eq:t2_j_bound}), (\ref{eq:t2_middle_bound}), and (\ref{eq:t2_last_bound}), $T_2$ is bounded as follows.
\begin{align}
    T_2 & \leq \frac{j (j - 1)}{2} \sum_{k = 1}^{j - 1} \mathop{\mathbb{E}}\left[\left\| \mathbf{f}_{(j,k)} \right\|^2\right] \nonumber \\
    & \quad + \frac{\tau (\tau - 1)}{2} \sum_{k = j + 1}^{K - \tau} \mathop{\mathbb{E}}\left[\left\| \mathbf{f}_{(j,k)} \right\|^2 \right] \nonumber \\
    & \quad + \frac{(\tau - j)(\tau - j - 1)}{2} \sum_{k = K - \tau + j + 1}^{K - 1} \mathop{\mathbb{E}}\left[\left\| \mathbf{f}_{(j,k)} \right\|^2 \right] \nonumber \\
    & \leq \frac{\tau (\tau - 1)}{2} \left( \sum_{k = 1}^{K} \mathop{\mathbb{E}}\left[\left\| \mathbf{f}_{(j,k)} \right\|^2 \right] \right) \label{eq:t2_bound}
\end{align}
Here, we finish bounding $T_2$.

\textbf{Final result} --
By plugging in (\ref{eq:t1_bound}) and (\ref{eq:t2_bound}) into (\ref{eq:t1t2}), we have
\begin{align}
    & \frac{1}{mK}\sum_{k=1}^{K}\sum_{i=1}^{m} \sum_{j=1}^{\tau} L_j^2 \mathop{\mathbb{E}}[\| \mathbf{u}_{(j,k)} - \mathbf{x}_{(j,k)}^i \|^2] \nonumber \\
    & \leq \frac{2 \eta^2}{mK} \sum_{j = 1}^{\tau} L_j^2 \left( mK \frac{(\tau - 1)}{2} \sigma_j^2 + \frac{\tau (\tau - 1)}{2} \left( \sum_{k = 1}^{K} \mathop{\mathbb{E}}\left[\left\| \mathbf{f}_{(j,k)} \right\|^2 \right] \right) \right) \nonumber \\
    & = \sum_{j = 1}^{\tau} L_j^2 \left( \eta^2 (\tau - 1) \sigma_j^2 + \frac{\eta^2 \tau (\tau - 1)}{mK} \left( \sum_{k = 1}^{K} \mathop{\mathbb{E}}\left[\left\| \mathbf{f}_{(j,k)} \right\|^2 \right] \right) \right) \nonumber \\
    & = \sum_{j = 1}^{\tau} L_j^2 \left( \eta^2 (\tau - 1) \sigma_j^2 + \frac{\eta^2 \tau (\tau - 1)}{mK} \left( \sum_{i = 1}^{m} \sum_{k = 1}^{K} \mathop{\mathbb{E}}\left[\left\| \nabla_j F(\mathbf{x}_k^i)\right\|^2 \right] \right) \right)
\end{align}
Here, we complete the proof.
\end{proof}

\subsection {Proof of Other Lemmas}

\begin{lemma}
\label{lemma:operator}
Consider a real matrix $\mathbf{A} \in \mathbb{R}^{md_j \times md_j}$ and a real vector $\mathbf{b} \in \mathbb{R}^{md_j}$. If $A$ is symmetric and $\mathbf{b} \neq \mathbf{0}_{md_j}$, we have
\begin{equation}
    \| \mathbf{Ab} \| \leq \| \mathbf{A} \|_{op} \| \mathbf{b} \|
\end{equation}
\end{lemma}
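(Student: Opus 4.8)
The plan is to exploit the spectral structure of a real symmetric matrix so that the operator norm can be read off from its eigenvalues. First I would invoke the spectral theorem: since $\mathbf{A}$ is real and symmetric, there exist an orthogonal matrix $\mathbf{Q} \in \mathbb{R}^{md_j \times md_j}$ and a diagonal matrix $\mathbf{\Lambda} = \mathrm{diag}(\lambda_1, \dots, \lambda_{md_j})$ of real eigenvalues with $\mathbf{A} = \mathbf{Q}\mathbf{\Lambda}\mathbf{Q}^{\top}$. The columns of $\mathbf{Q}$ form an orthonormal eigenbasis $\{\mathbf{q}_1,\dots,\mathbf{q}_{md_j}\}$, and the induced $\ell_2$ operator (spectral) norm satisfies $\|\mathbf{A}\|_{op} = \max_i |\lambda_i|$.

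Next I would expand $\mathbf{b}$ in this eigenbasis, writing $\mathbf{b} = \sum_i c_i \mathbf{q}_i$ with $c_i = \mathbf{q}_i^{\top}\mathbf{b}$, so that $\|\mathbf{b}\|^2 = \sum_i c_i^2$ by orthonormality. Applying $\mathbf{A}$ gives $\mathbf{A}\mathbf{b} = \sum_i \lambda_i c_i \mathbf{q}_i$, whence $\|\mathbf{A}\mathbf{b}\|^2 = \sum_i \lambda_i^2 c_i^2$. Bounding each factor $\lambda_i^2$ by $\max_i \lambda_i^2 = \|\mathbf{A}\|_{op}^2$ yields $\|\mathbf{A}\mathbf{b}\|^2 \leq \|\mathbf{A}\|_{op}^2 \sum_i c_i^2 = \|\mathbf{A}\|_{op}^2 \|\mathbf{b}\|^2$, and taking square roots of the nonnegative quantities on both sides gives the claim $\|\mathbf{A}\mathbf{b}\| \leq \|\mathbf{A}\|_{op}\|\mathbf{b}\|$.

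There is essentially no obstacle here: the inequality is in fact the defining property of the induced operator norm $\|\mathbf{A}\|_{op} = \sup_{\mathbf{x}\neq\mathbf{0}} \|\mathbf{A}\mathbf{x}\|/\|\mathbf{x}\|$, from which the bound for $\mathbf{b}\neq\mathbf{0}$ is immediate and requires no symmetry assumption whatsoever. The only subtlety, should one prefer a self-contained argument over citing the definition directly, is justifying that the operator norm of a symmetric matrix equals its largest eigenvalue magnitude, which the spectral-theorem computation above supplies. The hypothesis $\mathbf{b}\neq\mathbf{0}$ serves only to make the defining ratio well posed; the inequality itself holds trivially at $\mathbf{b}=\mathbf{0}$. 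I would therefore present the short definitional proof as the main line and keep the spectral expansion as the explicit verification.
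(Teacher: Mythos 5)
Your proof is correct and its main line coincides with the paper's argument: the paper also proves the lemma directly from the definition of the operator norm, writing $\| \mathbf{Ab} \|^2 = \frac{\| \mathbf{Ab} \|^2}{\| \mathbf{b} \|^2} \| \mathbf{b} \|^2 \leq \| \mathbf{A} \|_{op}^2 \| \mathbf{b} \|^2$ and taking square roots, which is exactly your definitional route. Your supplementary remarks are also accurate: the symmetry hypothesis is never used (the inequality holds for any matrix by definition of the induced norm), and the spectral expansion is a valid self-contained verification, though neither is needed.
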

\begin{proof}
\begin{align}
    \| \mathbf{Ab} \|^2 & = \frac{\| \mathbf{Ab} \|^2}{\|\mathbf{b} \|^2} \|\mathbf{b}\|^2 \nonumber \\
    & \leq \|\mathbf{A}\|_{op}^{2}\|\mathbf{b}\|^2 \label{eq:op}
\end{align}
where (\ref{eq:op}) holds based on the definition of operator norm.
\end{proof}

\begin{lemma}
\label{lemma:zeroout}
Given an identity matrix $\mathbf{I}_j \in \mathbb{R}^{md_j \times md_j}$ and a full-averaging matrix $\mathbf{J}_j \in \mathbb{R}^{md_j \times md_j}$,
\begin{align}
    \left\| \left(\mathbf{I}_{j} - \mathbf{J}_j \right) \mathbf{x} \right\|^2 \leq \left\| \mathbf{x} \right\|^2.
\end{align}
\end{lemma}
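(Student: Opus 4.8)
The plan is to exploit the fact that $\mathbf{J}_j$ is an orthogonal projection matrix, so that $\mathbf{I}_j - \mathbf{J}_j$ is the complementary orthogonal projection, which is necessarily non-expansive. First I would verify directly from the Kronecker definition $\mathbf{J}_j = \frac{1}{m}\mathbf{1}_m\mathbf{1}_m^{\top} \otimes \mathbf{I}_{d_j}$ that $\mathbf{J}_j$ is both symmetric and idempotent. Symmetry follows from $(\mathbf{A}\otimes\mathbf{B})^{\top} = \mathbf{A}^{\top}\otimes\mathbf{B}^{\top}$ together with the symmetry of $\mathbf{1}_m\mathbf{1}_m^{\top}$ and $\mathbf{I}_{d_j}$. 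Idempotency follows from the mixed-product rule $(\mathbf{A}\otimes\mathbf{B})(\mathbf{C}\otimes\mathbf{D}) = (\mathbf{A}\mathbf{C})\otimes(\mathbf{B}\mathbf{D})$ and the scalar identity $\mathbf{1}_m^{\top}\mathbf{1}_m = m$, which give $(\frac{1}{m}\mathbf{1}_m\mathbf{1}_m^{\top})^2 = \frac{1}{m}\mathbf{1}_m\mathbf{1}_m^{\top}$ and hence $\mathbf{J}_j^2 = \mathbf{J}_j$.

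Given that $\mathbf{J}_j$ is symmetric and idempotent, I would then observe that $\mathbf{I}_j - \mathbf{J}_j$ inherits both properties, so that $(\mathbf{I}_j - \mathbf{J}_j)^{\top}(\mathbf{I}_j - \mathbf{J}_j) = (\mathbf{I}_j - \mathbf{J}_j)^2 = \mathbf{I}_j - \mathbf{J}_j$. Expanding the left-hand side of the claim as a quadratic form then yields
\begin{align}
\left\| (\mathbf{I}_j - \mathbf{J}_j)\mathbf{x} \right\|^2 = \mathbf{x}^{\top}(\mathbf{I}_j - \mathbf{J}_j)^{\top}(\mathbf{I}_j - \mathbf{J}_j)\mathbf{x} = \mathbf{x}^{\top}(\mathbf{I}_j - \mathbf{J}_j)\mathbf{x} = \left\| \mathbf{x} \right\|^2 - \mathbf{x}^{\top}\mathbf{J}_j\mathbf{x}. \nonumber
\end{align}
Since $\mathbf{J}_j$ is itself symmetric and idempotent, the cross term is $\mathbf{x}^{\top}\mathbf{J}_j\mathbf{x} = \mathbf{x}^{\top}\mathbf{J}_j^{\top}\mathbf{J}_j\mathbf{x} = \| \mathbf{J}_j\mathbf{x} \|^2 \geq 0$, so $\| (\mathbf{I}_j - \mathbf{J}_j)\mathbf{x} \|^2 = \| \mathbf{x} \|^2 - \| \mathbf{J}_j\mathbf{x} \|^2 \leq \| \mathbf{x} \|^2$, which is exactly the claim. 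Equivalently, this is the Pythagorean identity for the orthogonal decomposition $\mathbf{x} = \mathbf{J}_j\mathbf{x} + (\mathbf{I}_j - \mathbf{J}_j)\mathbf{x}$ into the averaged component and its residual.

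There is essentially no hard step here; the only point that must be checked with care is the idempotency of $\mathbf{J}_j$, which is where the specific $\frac{1}{m}$ normalization and the identity $\mathbf{1}_m^{\top}\mathbf{1}_m = m$ enter. If one prefers to avoid the Kronecker algebra entirely, the same conclusion can be reached block-wise: writing $\mathbf{x} = vec(\mathbf{x}^1, \ldots, \mathbf{x}^m)$ with $\mathbf{x}^i \in \mathbb{R}^{d_j}$, the map $\mathbf{J}_j$ replaces every block by the block-average $\bar{\mathbf{x}} = \frac{1}{m}\sum_{i=1}^m \mathbf{x}^i$, so that $\| (\mathbf{I}_j - \mathbf{J}_j)\mathbf{x} \|^2 = \sum_{i=1}^m \| \mathbf{x}^i - \bar{\mathbf{x}} \|^2$, and the standard variance-type identity $\sum_{i=1}^m \| \mathbf{x}^i - \bar{\mathbf{x}} \|^2 = \sum_{i=1}^m \| \mathbf{x}^i \|^2 - m\| \bar{\mathbf{x}} \|^2 \leq \| \mathbf{x} \|^2$ finishes the argument.
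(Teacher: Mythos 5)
Your proof is correct, and it takes a genuinely different route from the paper's. The paper proves this lemma spectrally: it eigendecomposes $\mathbf{J}_j = \mathbf{Q}\mathbf{\Lambda}\mathbf{Q}^{\top}$, argues the eigenvalues of $\mathbf{I}_j - \mathbf{J}_j$ are all $0$ or $1$, concludes $\|\mathbf{I}_j - \mathbf{J}_j\|_{op} = 1$, and then invokes its operator-norm bound $\|\mathbf{A}\mathbf{b}\| \leq \|\mathbf{A}\|_{op}\|\mathbf{b}\|$ (the paper's Lemma on operator norms). You instead verify via the Kronecker-product rules that $\mathbf{J}_j$ is symmetric and idempotent, hence an orthogonal projection, so that $\mathbf{I}_j - \mathbf{J}_j$ is the complementary projection and the claim follows from the Pythagorean identity $\|(\mathbf{I}_j - \mathbf{J}_j)\mathbf{x}\|^2 = \|\mathbf{x}\|^2 - \|\mathbf{J}_j\mathbf{x}\|^2$. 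Your route is more self-contained and arguably cleaner: it produces an exact identity (quantifying the slack as $\|\mathbf{J}_j\mathbf{x}\|^2$), needs no auxiliary operator-norm lemma, and avoids the spectral bookkeeping where the paper is actually imprecise --- the paper asserts $\mathbf{\Lambda} = \textrm{diag}\{1,1,0,\cdots,0\}$ because ``$\mathbf{J}_j$ has only two different columns,'' but the rank of $\mathbf{J}_j = \frac{1}{m}\mathbf{1}_m\mathbf{1}_m^{\top} \otimes \mathbf{I}_{d_j}$ is $d_j$, not $2$ (this slip does not invalidate the paper's conclusion that the largest eigenvalue is $1$, but your argument sidesteps it entirely). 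What the paper's approach buys in exchange is reuse: the operator-norm lemma it leans on here is also invoked at other points of the appendix (e.g., when bounding $T_2$), so routing this lemma through it keeps the toolset uniform. Your closing block-wise variance identity is also a nice sanity check, since it makes transparent that the quantity being bounded is exactly the within-cluster dispersion $\sum_{i=1}^m \|\mathbf{x}^i - \bar{\mathbf{x}}\|^2$.
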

\begin{proof}
Since $\mathbf{J}_j$ is a real symmetric matrix, it can be decomposed into $\mathbf{J}_j = \mathbf{Q \Lambda Q^{\top}}$, where $\mathbf{Q}$ is a orthogonal eigenvector matrix and $\mathbf{\Lambda}$ is a diagonal eigenvalue matrix.
By the definition, $\mathbf{J}_j$ the sum of every column is $1$, and thus its eigenvalue is either $1$ or $0$.
Because $\mathbf{J}_j$ has only two different columns, $\mathbf{\Lambda} = \textrm{diag}\{ 1, 1, 0, \cdots, 0 \}$.
By the definition of the identity matrix $\mathbf{I}_j$, it can be decomposed into $\mathbf{Q \Lambda_i Q^{\top}}$, where $\mathbf{\Lambda}_i = \textrm{diag}\{1, 1, \cdots, 1 \}$.
Then, we have
\begin{align}
    \mathbf{I}_j - \mathbf{J}_j = \mathbf{Q (\Lambda_i - \Lambda) Q^{\top}}.
\end{align}
Thus, the eigenvalue matrix of $\mathbf{I}_j - \mathbf{J}_j$ is $\mathbf{\Lambda_i - \Lambda} = \textrm{diag}\{0, 0, 1, \cdots, 1 \}$.
By the definition of operator norm, 
\begin{align}
    \left\| \left( \mathbf{I}_j - \mathbf{J}_j \right) \right\|_{op} = \sqrt{ \lambda_{max}\left( (\mathbf{I}_j - \mathbf{J}_j)^{\top}(\mathbf{I}_j - \mathbf{J}_j) \right) } = \sqrt{ \lambda_{max}\left( \mathbf{I}_j - \mathbf{J}_j \right) } = 1
\end{align}
Finally, based on Lemma \ref{lemma:operator}, it follows
\begin{align}
    \left\| \left( \mathbf{I}_j - \mathbf{J}_j \right) \mathbf{x} \right\|^2 \leq \left\| \mathbf{I}_j - \mathbf{J}_j \right\|^2_{op} \left\| \mathbf{x} \right\|^2 = \left\| \mathbf{x} \right\|^2
\end{align}

\end{proof}

\section {Convergence Analysis for Non-IID Data}

We provide the proofs of Theorems and Lemmas under non-IID data settings.

\subsection {Preliminaries}
In addition to the conventional assumptions including the smoothness of each local objective function, the unbiased local stochastic gradients, and the bounded local variance, we highlight the following assumption on the bounded dissimilarity of the gradients across clients for non-IID analysis.

\textbf{Assumption 4.}
\textit{(Bounded Dissimilarity). There exist constants $\beta_j^2 \geq 1$ and $\kappa_j^2 \geq 0$ such that $\frac{1}{m} \sum_{i = 1}^{m} \| \nabla_j F_i (\mathbf{x}) \|^2 \leq \beta_j^2 \| \frac{1}{m} \sum_{i = 1}^{m} \nabla_j F_i (\mathbf{x}) \|^2 + \kappa_j^2$.
If the data is IID, $\beta_j^2 = 1$ and $\kappa_j^2 = 0$.}

\subsection {Proof of Theorem 2}
\textbf{Theorem 2.} \textit{Suppose all $m$ local models are initialized to the same point $\mathbf{u}_1$. Under Assumption $1 \sim 4$, if Algorithm 1 runs for $K$ iterations and the learning rate satisfies $\eta \leq \frac{1}{L_{\max} } \min \left\{1, \frac{1}{ \sqrt{2\tau (\tau - 1) (2\beta^2 + 1)}} \right\}$, the average-squared gradient norm of $\mathbf{u}_k$ is bounded as follows}
\begin{align}
    \mathop{\mathbb{E}}\left[\frac{1}{K}\sum_{i=1}^{K} \| \nabla F(\mathbf{u}_k) \|^2\right] & \leq \frac{4}{\eta K}\left( \mathop{\mathbb{E}}\left[ F(\mathbf{u}_{1}) - F(\mathbf{u}_{inf}) \right] \right) + \frac{2\eta}{m} \sum_{j = 1}^{\tau} L_j \sigma_j^2 \nonumber \\
    & \quad\quad + 3 \eta^2 (\tau - 1) \sum_{j = 1}^{\tau} L_j^2 \sigma_j^2 + 6 \eta^2 \tau (\tau - 1) \sum_{j = 1}^{\tau} L_j^2 \kappa_j^2.
\end{align}

\begin{proof}
Based on Lemma \ref{lemma:framework} and \ref{lemma:niid_discrepancy}, we have
\begin{align}
    \frac{1}{K} \sum_{k=1}^{K} \mathop{\mathbb{E}}\left[ \left\| \nabla F(\mathbf{u}_k) \right\|^2 \right] & \leq \frac{2}{\eta K}\left( \mathop{\mathbb{E}}\left[ F(\mathbf{u}_{1}) - F(\mathbf{u}_{inf}) \right] \right) + \frac{\eta}{m} \sum_{j = 1}^{\tau} L_j \sigma_j^2 \nonumber \\
    & \quad\quad + \sum_{j = 1}^{\tau} L_j^2 \left( \frac{\eta^2 (\tau - 1) \sigma_j^2}{1 - A_j} + \frac{A_j \beta_j^2}{K L_j^2 (1 - A_j)} \sum_{k=1}^{K} \mathop{\mathbb{E}} \left[ \left\| \nabla_j F(\mathbf{u}_{k}) \right\|^2 \right] + \frac{A_j \kappa_j^2}{L_j^2 (1 - A_j)} \right), \nonumber
\end{align}
where $A_j = 2 \eta^2 \tau (\tau - 1) L_j^2$.
After re-writing the left-hand side and a minor rearrangement, we have
\begin{align}
    \frac{1}{K} \sum_{k=1}^{K} \sum_{j = 1}^{\tau} \mathop{\mathbb{E}}\left[ \left\| \nabla_j F(\mathbf{u}_k) \right\|^2 \right] & \leq \frac{2}{\eta K}\left( \mathop{\mathbb{E}}\left[ F(\mathbf{u}_{1}) - F(\mathbf{u}_{inf}) \right] \right) + \frac{\eta}{m} \sum_{j = 1}^{\tau} L_j \sigma_j^2 \nonumber \\
    & \quad\quad + \frac{1}{K} \sum_{k = 1}^{K} \sum_{j = 1}^{\tau} \frac{A_j \beta_j^2}{1 - A_j} \mathop{\mathbb{E}} \left[ \left\| \nabla_j F(\mathbf{u}_{k}) \right\|^2 \right] \nonumber \\
    & \quad\quad + \sum_{j = 1}^{\tau} L_j^2 \left( \frac{\eta^2 (\tau - 1) \sigma_j^2}{1 - A_j} + \frac{A_j \kappa_j^2}{L_j^2 (1 - A_j)} \right). \nonumber
\end{align}
By moving the third term on the right-hand side to the left-hand side, we have
\begin{align}
    \frac{1}{K} \sum_{k=1}^{K} \sum_{j = 1}^{\tau} \left(1 - \frac{A_j \beta_j^2}{1 - A_j} \right) \mathop{\mathbb{E}}\left[ \left\| \nabla_j F(\mathbf{u}_k) \right\|^2 \right] & \leq \frac{2}{\eta K}\left( \mathop{\mathbb{E}}\left[ F(\mathbf{u}_{1}) - F(\mathbf{u}_{inf}) \right] \right) + \frac{\eta}{m} \sum_{j = 1}^{\tau} L_j \sigma_j^2 \nonumber \\
    & \quad\quad + \sum_{j = 1}^{\tau} L_j^2 \left( \frac{\eta^2 (\tau - 1) \sigma_j^2}{1 - A_j} + \frac{A_j \kappa_j^2}{L_j^2 (1 - A_j)} \right). \label{eq:niid_almost}
\end{align}
If $A_j \leq \frac{1}{2 \beta_j^2 + 1}$, then $\frac{A_j \beta_j^2}{1 - A_j} \leq \frac{1}{2}$.
Therefore, (\ref{eq:niid_almost}) can be simplified as follows.
\begin{align}
    \frac{1}{K} \sum_{k=1}^{K} \sum_{j = 1}^{\tau} \mathop{\mathbb{E}}\left[ \left\| \nabla_j F(\mathbf{u}_k) \right\|^2 \right] & \leq \frac{4}{\eta K}\left( \mathop{\mathbb{E}}\left[ F(\mathbf{u}_{1}) - F(\mathbf{u}_{inf}) \right] \right) + \frac{2\eta}{m} \sum_{j = 1}^{\tau} L_j \sigma_j^2 \nonumber \\
    & \quad\quad + 2 \sum_{j = 1}^{\tau} L_j^2 \left( \frac{\eta^2 (\tau - 1) \sigma_j^2}{1 - A_j} \right) + 2 \sum_{j = 1}^{\tau} \frac{A_j \kappa_j^2}{1 - A_j}. \nonumber
\end{align}

The learning rate condition $A_j \leq \frac{1}{2 \beta_j^2 + 1}$ also ensures that $\frac{1}{1 - A_j} \leq 1 + \frac{1}{2 \beta_j^2}$.
Based on Assumption 4, $\frac{1}{2 \beta_j^2} \leq \frac{2}{3}$, and thus $\frac{1}{1 - A_j} \leq \frac{2}{3}$.
Therefore, we have
\begin{align}
    \frac{1}{K} \sum_{k=1}^{K} \sum_{j = 1}^{\tau} \mathop{\mathbb{E}}\left[ \left\| \nabla_j F(\mathbf{u}_k) \right\|^2 \right] & \leq \frac{4}{\eta K}\left( \mathop{\mathbb{E}}\left[ F(\mathbf{u}_{1}) - F(\mathbf{u}_{inf}) \right] \right) + \frac{2\eta}{m} \sum_{j = 1}^{\tau} L_j \sigma_j^2 \nonumber \\
    & \quad\quad + 3 \sum_{j = 1}^{\tau} L_j^2 \eta^2 (\tau - 1) \sigma_j^2 + 6 \sum_{j = 1}^{\tau} \eta^2 \tau (\tau - 1) L_j^2 \kappa_j^2 \nonumber \\
    & = \frac{4}{\eta K}\left( \mathop{\mathbb{E}}\left[ F(\mathbf{u}_{1}) - F(\mathbf{u}_{inf}) \right] \right) + \frac{2\eta}{m} \sum_{j = 1}^{\tau} L_j \sigma_j^2 \nonumber \\
    & \quad\quad + 3 \eta^2 (\tau - 1) \sum_{j = 1}^{\tau} L_j^2 \sigma_j^2 + 6 \eta^2 \tau (\tau - 1) \sum_{j = 1}^{\tau} L_j^2 \kappa_j^2. \nonumber
\end{align}
We complete the proof.
\end{proof}

\textbf{Learning rate constraints} --
We have two learning rate constraints as follows.
\begin{align*}
    \eta &\leq \frac{1}{L_{\max}} \quad\quad &\text{Lemma \ref{lemma:framework}}\\
    2 \eta^2 \tau (\tau - 1) L_j^2 &\leq \frac{1}{2\beta_j^2 + 1} & \text{Theorem 2} \\
\end{align*}
By merging the two constraints, we can have a single learning rate constraint as follows.
\begin{equation}
    \eta \leq \frac{1}{L_{\max} } \min \left\{1, \frac{1}{ \sqrt{2\tau (\tau - 1) (2\beta^2 + 1)}} \right\}
\end{equation}

\subsection {Proof of Lemma 3}

\textbf{Lemma 3.}
\textit{(model discrepancy) Under Assumption $1 \sim 4$, local SGD with the partial model averaging ensures}
\begin{align}
    & \frac{1}{mK} \sum_{k = 1}^{K} \sum_{i=1}^{m} \mathop{\mathbb{E}} \left[ \left\| \mathbf{u}_{(j,k)} - \mathbf{x}_{(j,k)}^{i} \right\|^2 \right] \nonumber \\
    & \quad\quad \leq \frac{\eta^2 (\tau - 1) \sigma_j^2}{1 - A_j} + \frac{A_j \beta_j^2}{K L_j^2 (1 - A_j)} \sum_{k=1}^{K} \mathop{\mathbb{E}} \left[ \left\| \nabla_j F(\mathbf{u}_{k}) \right\|^2 \right] + \frac{A_j \kappa_j^2}{L_j^2 (1 - A_j)}
\end{align}
where $A_j = 2 \eta^2 \tau (\tau - 1) L_j^2$.

\begin{proof}
We begin with re-writing the weighted average of the squared distance using the vectorized form of the local models as follows.
\begin{align}
    \frac{1}{m} \sum_{i=1}^{m} \left\| \mathbf{u}_{(j,k)} - \mathbf{x}_{(j,k)}^{i} \right\|^2 = \left\| \mathbf{J}_j \mathbf{x}_{(j,k)} - \mathbf{x}_{(j,k)} \right\|^2 = \left\| (\mathbf{J}_j - \mathbf{I}_j)\mathbf{x}_{(j,k)} \right\|^2 \label{eq:niid_discrepancy}
\end{align}

Then, according to the parameter update rule, we have
\begin{align}
    (\mathbf{J}_j - \mathbf{I}_j)\mathbf{x}_{(j,k)} & = (\mathbf{J}_j - \mathbf{I}_j)\mathbf{P}_{(j,k-1)}(\mathbf{x}_{(j,k-1)} - \eta \mathbf{g}_{(j,k-1)})\\
    & = (\mathbf{J}_j - \mathbf{I}_j) \mathbf{P}_{(j,k-1)} \mathbf{x}_{(j,k-1)} - (\mathbf{J}_j - \mathbf{P}_{(j,k-1)}) \eta \mathbf{g}_{(j,k-1)},
\end{align}
where the second equality holds because $\mathbf{J}_j \mathbf{P}_j = \mathbf{J}_j$ and $\mathbf{I}_j \mathbf{P}_j = \mathbf{P}_j$.

Then, expanding the expression of $\mathbf{x}_{(j,k-1)}$, we have
\begin{align}
    (\mathbf{J}_j - \mathbf{I}_j) \mathbf{x}_{(j,k)} & = (\mathbf{J}_j - \mathbf{I}_j) \mathbf{P}_{(j,k-1)} (\mathbf{P}_{(j,k-2)} (\mathbf{x}_{(j,k-2)} - \eta \mathbf{g}_{(j,k-2)})) - (\mathbf{J}_j - \mathbf{P}_{(j,k-1)}) \eta \mathbf{g}_{(j,k-1)} \nonumber \\
    & = (\mathbf{J}_j - \mathbf{I}_j) \mathbf{P}_{(j,k-1)} \mathbf{P}_{(j,k-2)} \mathbf{x}_{(j,k-2)} - (\mathbf{J}_j - \mathbf{P}_{(j,k-1)} \mathbf{P}_{(j,k-2)}) \eta \mathbf{g}_{(j,k-2)} - (\mathbf{J}_j - \mathbf{P}_{(j,k-1)}) \eta \mathbf{g}_{(j,k-1)}. \nonumber
\end{align}

Repeating the same procedure for $\mathbf{x}_{(j,k-2)}$, $\mathbf{x}_{(j,k-3)}$, $\cdots$, $\mathbf{x}_{(j,2)}$, we have
\begin{align}
    (\mathbf{J}_j - \mathbf{I}_j) \mathbf{\hat{x}}_{(j,k)} & = (\mathbf{J}_j - \mathbf{I}_j) \prod_{s=1}^{k-1} \mathbf{P}_{(j,s)} \mathbf{x}_{(j,1)} - \eta \sum_{s=1}^{k-1} (\mathbf{J}_j - \prod_{l=s}^{k-1}\mathbf{P}_{(j,l)}) \mathbf{g}_{(j,s)} \nonumber \\
    & = - \eta \sum_{s=1}^{k-1} (\mathbf{J}_j - \prod_{l=s}^{k-1} \mathbf{P}_{(j,l)}) \mathbf{g}_{(j,s)}, \label{eq:niid_distance}
\end{align}
where (\ref{eq:niid_distance}) holds because $\mathbf{x}_{(j,1)}^i$ is the same across all the workers and thus $(\mathbf{J}_j - \mathbf{I}_j)\mathbf{x}_{(j,1)} = 0$.

Based on (\ref{eq:niid_distance}), we have
\begin{align}
    & \frac{1}{mK} \sum_{k = 1}^{K} \sum_{i=1}^{m} \mathop{\mathbb{E}} \left[ \left\| \mathbf{u}_{(j,k)} - \mathbf{x}_{(j,k)}^{i} \right\|^2 \right] \nonumber \\
    & \quad\quad = \frac{1}{K} \sum_{k = 1}^{K} \left( \mathop{\mathbb{E}} \left[ \left\| \left( \mathbf{J}_j - \mathbf{I}_j \right) \mathbf{x}_{(j,k)} \right\|^2 \right] \right) \nonumber \\
    & \quad\quad = \frac{1}{K} \sum_{k = 1}^{K} \left( \eta^2 \mathop{\mathbb{E}} \left[ \left\| \sum_{s=1}^{k-1} \left( \mathbf{J}_j - \prod_{l=s}^{k-1} \mathbf{P}_{(j,l)} \right) \mathbf{g}_{(j,s)} \right\|^2 \right] \right) \nonumber \\
    & \quad\quad = \frac{1}{K} \sum_{k = 1}^{K} \left( \eta^2 \mathop{\mathbb{E}} \left[ \left\| \sum_{s=1}^{k-1} \left( \mathbf{J}_j - \prod_{l=s}^{k-1} \mathbf{P}_{(j,l)} \right) \left( \mathbf{g}_{(j,s)} - \mathbf{f}_{(j,s)} \right) + \sum_{s=1}^{k-1} \left( \mathbf{J}_j - \prod_{l=s}^{k-1} \mathbf{P}_{(j,l)} \right) \mathbf{f}_{(j,s)} \right\|^2 \right] \right) \nonumber \\
    & \quad\quad \leq \frac{2 \eta^2}{K} \left( \underset{T_3}{\underbrace{ \sum_{k = 1}^{K} \mathop{\mathbb{E}} \left[ \left\| \sum_{s=1}^{k-1} \left( \mathbf{J}_j - \prod_{l=s}^{k-1} \mathbf{P}_{(j,l)} \right) \left( \mathbf{g}_{(j,s)} - \mathbf{f}_{(j,s)} \right) \right\|^2 \right] }} + \underset{T_4}{\underbrace{ \sum_{k = 1}^{K} \mathop{\mathbb{E}} \left[ \left\| \sum_{s=1}^{k-1} \left( \mathbf{J}_j - \prod_{l=s}^{k-1} \mathbf{P}_{(j,l)} \right) \mathbf{f}_{(j,s)} \right\|^2 \right] }} \right) \label{eq:niid_t3t4}
\end{align}
where (\ref{eq:niid_t3t4}) holds based on the convexity of $\ell_2$ norm and Jensen's inequality.
Now, we focus on bounding $T_3$ and $T_4$, separately.

\textbf{Bounding $T_3$} --
We first partition $K$ iterations into three subsets: the first $j$ iterations, the next $K - \tau$ iterations, and the final $\tau - j$ iterations.
We bound the first $j$ iterations of $T_3$ as follows.
\begin{align}
    \sum_{k = 1}^{j} \mathop{\mathbb{E}} \left[ \left\| \sum_{s=1}^{k-1} \left(\mathbf{J}_j - \prod_{l=s}^{k-1} \mathbf{P}_{(j,l)} \right) \left( \mathbf{g}_{(j,s)} - \mathbf{f}_{(j,s)} \right) \right\|^2 \right] & \leq \sum_{k = 1}^{j} \sum_{s = 1}^{k - 1} \mathop{\mathbb{E}} \left[ \left\| \left( \mathbf{J}_j - \prod_{l=s}^{k-1} \mathbf{P}_{(j,l)} \right) \left( \mathbf{g}_{(j,s)} - \mathbf{f}_{(j,s)} \right) \right\|^2 \right] \label{eq:niid_t3_jensen} \\
    & \leq \sum_{k = 1}^{j} \sum_{s = 1}^{k - 1} \mathop{\mathbb{E}} \left[ \left\| \left( \mathbf{J}_j - \mathbf{I}_j \right) \left( \mathbf{g}_{(j,s)} - \mathbf{f}_{(j,s)} \right) \right\|^2 \right] \label{eq:niid_t3_zeroout} \\
    & \leq \sum_{k = 1}^{j} \sum_{s = 1}^{k - 1} \mathop{\mathbb{E}} \left[ \left\| \mathbf{g}_{(j,s)} - \mathbf{f}_{(j,s)} \right\|^2 \right] \label{eq:niid_t3_opout} \\
    & = \sum_{k = 1}^{j} \sum_{s = 1}^{k - 1} \frac{1}{m} \sum_{i = 1}^{m} \mathop{\mathbb{E}} \left[ \left\| \mathbf{g}_{(j,s)}^{i} - \nabla_j F_i(\mathbf{x}_{s}^i) \right\|^2 \right] \nonumber \\
    & \leq \sum_{k = 1}^{j} \sum_{s = 1}^{k - 1} \sigma_j^2 \nonumber \\
    & = \frac{j (j - 1)}{2} \sigma_j^2, \label{eq:niid_t3_j_bound}
\end{align}
where (\ref{eq:niid_t3_zeroout}) holds because $\prod_{l=s}^{k-1} \mathbf{P}_{(j,l)}$ is $\mathbf{I}_j$ when $k < j$;
(\ref{eq:niid_t3_opout}) holds based on Lemma \ref{lemma:zeroout}.

Then, the next $K - \tau$ iterations of $T_3$ can be bounded as follows.
\begin{align}
    & \sum_{k = j + 1}^{K - \tau + j} \mathop{\mathbb{E}} \left[ \left\| \sum_{s=1}^{k-1} \left( \mathbf{J}_j - \prod_{l=s}^{k-1} \mathbf{P}_{(j,l)} \right) \left( \mathbf{g}_{(j,s)} - \mathbf{f}_{(j,s)} \right) \right\|^2 \right] \nonumber \\
    & \quad\quad = \sum_{k = j + 1}^{K - \tau + j} \sum_{s = 1}^{k - 1} \mathop{\mathbb{E}} \left[ \left\| \left( \mathbf{J}_j - \prod_{l=s}^{k-1} \mathbf{P}_{(j,l)} \right) \left( \mathbf{g}_{(j,s)} - \mathbf{f}_{(j,s)} \right) \right\|^2 \right] \label{eq:niid_t3_middle_jensen}
\end{align}

Replacing $k$ with $a\tau + b + j$, (\ref{eq:niid_t3_middle_jensen}) can be re-written and bounded as follows.
\begin{align}
    & \sum_{a = 0}^{K / \tau - 2} \sum_{b = 1}^{\tau} \sum_{s = 1}^{a\tau + b + j - 1} \mathop{\mathbb{E}} \left[ \left\| \left( \mathbf{J}_j - \prod_{l=s}^{k-1} \mathbf{P}_{(j,l)} \right) \left(\mathbf{g}_{(j,s)} - \mathbf{f}_{(j,s)} \right) \right\|^2 \right] \nonumber\\
    & \quad\quad = \sum_{a = 0}^{K / \tau - 2} \sum_{b = 1}^{\tau} \sum_{s = 1}^{a\tau + j} \mathop{\mathbb{E}}\left[ \left\| \left( \mathbf{J}_j - \prod_{l=s}^{k-1} \mathbf{P}_{(j,l)} \right) \left( \mathbf{g}_{(j,s)} - \mathbf{f}_{(j,s)} \right) \right\|^2 \right] \nonumber \\
    & \quad\quad\quad\quad + \sum_{a = 0}^{K / \tau - 2} \sum_{b = 1}^{\tau} \sum_{s = a\tau + j + 1}^{a\tau + b + j - 1} \mathop{\mathbb{E}}\left[ \left\| \left( \mathbf{J}_j - \prod_{l=s}^{k-1} \mathbf{P}_{(j,l)} \right) \left( \mathbf{g}_{(j,s)} - \mathbf{f}_{(j,s)} \right) \right\|^2 \right] \nonumber\\
    & \quad\quad = \sum_{a = 0}^{K / \tau - 2} \sum_{b = 1}^{\tau} \sum_{s = a\tau + j + 1}^{a\tau + b + j - 1} \mathop{\mathbb{E}}\left[ \left\| \left( \mathbf{J}_j - \prod_{l=s}^{k-1} \mathbf{P}_{(j,l)} \right) \left( \mathbf{g}_{(j,s)} - \mathbf{f}_{(j,s)} \right) \right\|^2 \right] \label{eq:niid_t3_middle_zeroout} \\
    & \quad\quad \leq \sum_{a = 0}^{K / \tau - 2} \sum_{b = 1}^{\tau} \sum_{s = a\tau + j + 1}^{a\tau + b + j - 1} \mathop{\mathbb{E}}\left[ \left\| \mathbf{g}_{(j,s)} - \mathbf{f}_{(j,s)} \right\|^2 \right] \label{eq:niid_t3_middle_opout} \\
    & \quad\quad = \sum_{a = 0}^{K / \tau - 2} \sum_{b = 1}^{\tau} \sum_{s = a\tau + j + 1}^{a\tau + b + j - 1} \frac{1}{m} \sum_{i = 1}^{m} \mathop{\mathbb{E}}\left[ \left\| \mathbf{g}_{(j,s)}^{i} - \nabla_j F_i (\mathbf{x}_{s}^{i}) \right\|^2 \right] \nonumber\\
    & \quad\quad \leq \sum_{a = 0}^{K / \tau - 2} \sum_{b = 1}^{\tau} \sum_{s = a\tau + j + 1}^{a\tau + b + j - 1} \sigma_j^2 \label{eq:niid_t3_middle_sigma} \\
    & \quad\quad = \sum_{a = 0}^{K / \tau - 2} \sum_{b = 1}^{\tau} (b - 1) \sigma_j^2 \nonumber \\
    & \quad\quad = \sum_{a = 0}^{K / \tau - 2} \frac{\tau (\tau - 1)}{2} \sigma_j^2 \nonumber \\
    & \quad\quad = \left( \frac{K}{\tau} - 1 \right) \frac{\tau (\tau - 1)}{2} \sigma_j^2, \label{eq:niid_t3_middle_bound}
\end{align}
where (\ref{eq:middle_zeroout}) holds because $\mathbf{J}_j - \prod_{l=s}^{a\tau + b + j - 1} \mathbf{P}_{(j,l)}$ becomes $0$ when $s \leq a\tau + j$;
(\ref{eq:niid_t3_middle_opout}) holds based on Lemma \ref{lemma:zeroout}.
(\ref{eq:niid_t3_middle_sigma}) holds based on Assumption 6.

Finally, the last $\tau - j$ iterations are bounded as follows.
\begin{align}
    & \sum_{k = K - \tau + j + 1}^{K} \mathop{\mathbb{E}} \left[ \left\| \sum_{s=1}^{k-1} \left( \mathbf{J}_j - \prod_{l=s}^{k-1} \mathbf{P}_{(j,l)} \right) \left( \mathbf{g}_{(j,s)} - \mathbf{f}_{(j,s)} \right) \right\|^2 \right] \nonumber\\
    & \quad\quad = \sum_{k = K - \tau + j + 1}^{K} \sum_{s=1}^{k-1} \mathop{\mathbb{E}}\left[\left\| \left( \mathbf{J}_j - \prod_{l=s}^{k-1} \mathbf{P}_{(j,l)} \right) \left( \mathbf{g}_{(j,s)} - \mathbf{f}_{(j,s)} \right) \right\|^2\right] \label{eq:niid_t3_last_jensen}
\end{align}
where (\ref{eq:niid_t3_last_jensen}) holds because $\mathbf{g}_{(j,s)} - \mathbf{f}_{(j,s)}$ has a mean of 0 and independent across $s$.

By replacing $k$ with $a\tau + b + j$ in (\ref{eq:niid_t3_last_jensen}), we have
\begin{align}
    & \sum_{a = K / \tau - 1}^{K / \tau - 1} \sum_{b = 1}^{\tau - j} \left(\sum_{s = 1}^{a\tau + b + j - 1} \mathop{\mathbb{E}}\left[\left\| \left( \mathbf{J}_j - \prod_{l=s}^{k-1} \mathbf{P}_{(j,l)} \right) \left( \mathbf{g}_{(j,s)} - \mathbf{f}_{(j,s)} \right) \right\|^2 \right] \right) \nonumber \\
    & \quad\quad = \sum_{a = K / \tau - 1}^{K / \tau - 1} \sum_{b = 1}^{\tau - j} \left( \sum_{s = 1}^{a\tau + j} \mathop{\mathbb{E}}\left[\left\| \left( \mathbf{J}_j - \prod_{l=s}^{k-1} \mathbf{P}_{(j,l)} \right) \left( \mathbf{g}_{(j,s)} - \mathbf{f}_{(j,s)} \right) \right\|^2 \right] \right) \nonumber \\
    & \quad\quad\quad\quad + \sum_{a = K / \tau - 1}^{K / \tau - 1} \sum_{b = 1}^{\tau - j} \left( \sum_{s = a\tau + j + 1}^{a\tau + b + j - 1} \mathop{\mathbb{E}}\left[\left\| \left( \mathbf{J}_j - \prod_{l=s}^{k-1} \mathbf{P}_{(j,l)} \right) \left( \mathbf{g}_{(j,s)} - \mathbf{f}_{(j,s)} \right) \right\|^2 \right] \right) \nonumber \\
    & \quad\quad = \sum_{a = K / \tau - 1}^{K / \tau - 1} \sum_{b = 1}^{\tau - j} \left( \sum_{s = a\tau + j + 1}^{a\tau + b + j - 1} \mathop{\mathbb{E}}\left[\left\| \left( \mathbf{J}_j - \prod_{l=s}^{k-1} \mathbf{P}_{(j,l)} \right) \left( \mathbf{g}_{(j,s)} - \mathbf{f}_{(j,s)} \right) \right\|^2 \right] \right) \label{eq:niid_t3_last_zeroout} \\
    & \quad\quad = \sum_{a = K / \tau - 1}^{K / \tau - 1} \sum_{b = 1}^{\tau - j} \left( \sum_{s = a\tau + j + 1}^{a\tau + b + j - 1} \mathop{\mathbb{E}}\left[\left\| \mathbf{g}_{(j,s)} - \mathbf{f}_{(j,s)} \right\|^2 \right] \right) \label{eq:niid_t3_last_opout}\\
    & \quad\quad = \sum_{a = K / \tau - 1}^{K / \tau - 1} \sum_{b = 1}^{\tau - j} \left( \sum_{s = a\tau + j + 1}^{a\tau + b + j - 1} \frac{1}{m} \sum_{i = 1}^{m} \mathop{\mathbb{E}}\left[\left\| (\mathbf{g}_{(j,s)}^{i} - \nabla_j F_i(\mathbf{x}_{(j,s)}^i) \right\|^2 \right] \right) \nonumber\\
    & \quad\quad \leq \sum_{a = K / \tau - 1}^{K / \tau - 1} \sum_{b = 1}^{\tau - j} \sum_{s = a\tau + j + 1}^{a\tau + b + j - 1} \sigma_j^2 \label{eq:niid_t3_last_sigma} \\
    & \quad\quad = \sum_{b = 1}^{\tau - j} (b - 1) \sigma_j^2 = \frac{(\tau - j)(\tau - j - 1)}{2} \sigma_j^2, \label{eq:niid_t3_last_bound}
\end{align}
where (\ref{eq:niid_t3_last_zeroout}) holds because $\mathbf{J}_j - \prod_{l=s}^{K - \tau + b + j - 1} \mathbf{P}_{(j,l)}$ becomes $0$ when $s \leq K - \tau + j$;
(\ref{eq:niid_t3_last_opout}) holds based on Lemma \ref{lemma:zeroout};
(\ref{eq:niid_t3_last_sigma}) holds based on Assumption 6.

Summing up (\ref{eq:niid_t3_j_bound}), (\ref{eq:niid_t3_middle_bound}), and (\ref{eq:niid_t3_last_bound}), we have
\begin{align}
    T_3 & \leq \frac{j (j - 1)}{2} \sigma_j^2 + (\frac{K}{\tau} - 1)\frac{\tau (\tau - 1)}{2} \sigma_j^2 + \frac{(\tau - j)(\tau - j - 1)}{2} \sigma_j^2 \nonumber \\
    & = \frac{j (j - 1) + (\tau - j)(\tau - j - 1)}{2} \sigma_j^2 + (\frac{K}{\tau} - 1)\frac{\tau (\tau - 1)}{2} \sigma_j^2 \nonumber \\
    & \leq \frac{\tau (\tau - 1)}{2} m \sigma_j^2 + (\frac{K}{\tau} - 1)\frac{\tau (\tau - 1)}{2} \sigma_j^2 \label{eq:niid_t3_tau}\\
    & = K \frac{(\tau - 1)}{2} \sigma_j^2, \label{eq:niid_t3_bound}
\end{align}
where (\ref{eq:niid_t3_tau}) holds because $0 < j \leq \tau$.
Here, we finish bounding $T_3$.

\textbf{Bounding $T_4$} --
Likely to $T_3$, we partition $T_4$ to three subsets and bound them separately.
The $T_4$ at the first $j$ iterations can be bounded as follows.
\begin{align}
    \sum_{k = 1}^{j} \mathop{\mathbb{E}} \left[ \left\| \sum_{s=1}^{k-1} \left( \mathbf{J}_j - \prod_{l=s}^{k-1} \mathbf{P}_{(j,l)} \right) \mathbf{f}_{(j,s)} \right\|^2 \right] & \leq \sum_{k = 1}^{j} (k - 1) \sum_{s=1}^{k-1} \mathop{\mathbb{E}} \left[ \left\| \left( \mathbf{J}_j - \prod_{l=s}^{k-1} \mathbf{P}_{(j,l)} \right) \mathbf{f}_{(j,s)} \right\|^2 \right] \nonumber \\
    & = \sum_{k = 1}^{j} (k - 1) \sum_{s=1}^{k-1} \mathop{\mathbb{E}} \left[ \left\| \left( \mathbf{J}_j - \mathbf{I}_j \right) \mathbf{f}_{(j,s)} \right\|^2 \right] \label{eq:niid_t4_j_op} \\
    & \leq \sum_{k = 1}^{j} (k - 1) \sum_{s=1}^{k-1} \mathop{\mathbb{E}} \left[ \left\| \mathbf{f}_{(j,s)} \right\|^2 \right] \label{eq:niid_t4_j_opout} \\
    & = \frac{j (j - 1)}{2} \sum_{k=1}^{j-1} \mathop{\mathbb{E}} \left[ \left\| \mathbf{f}_{(j,k)} \right\|^2 \right], \label{eq:niid_t4_j_bound}
\end{align}
where (\ref{eq:niid_t4_j_op}) holds because $\prod_{l=1}^{k-1} \mathbf{P}_{(j,l)}$ is $\mathbf{I}_j$ when $k < j$; (\ref{eq:niid_t4_j_opout}) holds based on Lemma \ref{lemma:zeroout}.

Then, the next $K - \tau$ iterations of $T_2$ are bounded as follows.
\begin{align}
    \sum_{k = j + 1}^{K - \tau} \mathop{\mathbb{E}}\left[\left\| \sum_{s=1}^{k-1} \left( \mathbf{J}_j - \prod_{l=s}^{k-1}\mathbf{P}_{(j,l)} \right) \mathbf{f}_{(j,s)} \right\|^2\right] & = \sum_{a = 0}^{K / \tau - 2} \sum_{b = 1}^{\tau} \mathop{\mathbb{E}}\left[\left\| \sum_{s=1}^{a \tau + b + j - 1} \left( \mathbf{J}_j - \prod_{l=s}^{a\tau + b + j - 1}\mathbf{P}_{(j,l)} \right) \mathbf{f}_{(j,s)} \right\|^2\right] \nonumber\\
    & = \sum_{a = 0}^{K / \tau - 2} \sum_{b = 1}^{\tau} \mathop{\mathbb{E}}\left[\left\| \sum_{s = a\tau + j + 1}^{a\tau + b + j - 1} \left( \mathbf{J}_j - \prod_{l=s}^{a\tau + b + j - 1}\mathbf{P}_{(j,l)} \right) \mathbf{f}_{(j,s)} \right\|^2\right] \label{eq:t4_middle_zeroout}\\
    & = \sum_{a = 0}^{K / \tau - 2} \sum_{b = 1}^{\tau} \left( (b - 1) \sum_{s = a\tau + j + 1}^{a\tau + b + j - 1} \mathop{\mathbb{E}}\left[ \left\| \left( \mathbf{J}_j - \prod_{l=s}^{a\tau + b + j - 1}\mathbf{P}_{(j,l)} \right) \mathbf{f}_{(j,s)} \right\|^2\right] \right) \label{eq:t4_middle_jensen}\\
    & = \sum_{a = 0}^{K / \tau - 2} \sum_{b = 1}^{\tau} \left( (b - 1) \sum_{s = a\tau + j + 1}^{a\tau + b + j - 1} \mathop{\mathbb{E}}\left[ \left\| \left( \mathbf{J}_j - \mathbf{I}_j \right) \mathbf{f}_{(j,s)} \right\|^2\right] \right) \nonumber \\
    & \leq \sum_{a = 0}^{K / \tau - 2} \sum_{b = 1}^{\tau} \left( (b - 1) \sum_{s = a\tau + j + 1}^{a\tau + b + j - 1} \mathop{\mathbb{E}}\left[ \left\| \mathbf{f}_{(j,s)} \right\|^2\right] \right) \label{eq:t4_middle_short}\\
    & \leq \frac{\tau (\tau - 1)}{2} \sum_{a = 0}^{K / \tau - 2} \left( \sum_{s = a\tau + j + 1}^{a\tau + \tau + j - 1} \mathop{\mathbb{E}}\left[\left\| \mathbf{f}_{(j,s)} \right\|^2 \right] \right) \nonumber \\
    & \leq \frac{\tau (\tau - 1)}{2} \sum_{k = j + 1}^{K - \tau + j - 1} \mathop{\mathbb{E}}\left[\left\| \mathbf{f}_{(j,k)} \right\|^2 \right], \label{eq:t4_middle_bound}
\end{align}
where (\ref{eq:t4_middle_zeroout}) holds because $\mathbf{J}_j - \prod_{l=s}^{a\tau + b + j - 1}\mathbf{P}_{(j,l)}$ becomes 0 when $s \leq a\tau + j$;
(\ref{eq:t4_middle_jensen}) holds based on the convexity of $\ell_2$ norm and Jensen's inequality;
(\ref{eq:t4_middle_short}) holds based on Lemma \ref{lemma:zeroout}.

Finally, the last $\tau - j$ iterations of $T_4$ are bounded as follows.
\begin{align}
    \sum_{k = K - \tau + j + 1}^{K} \mathop{\mathbb{E}}\left[\left\| \sum_{s=1}^{k-1} \left( \mathbf{J}_j - \prod_{l=s}^{k-1}\mathbf{P}_{(j,l)} \right) \mathbf{f}_{(j,s)} \right\|^2\right] & = \sum_{a = K / \tau - 1}^{K / \tau - 1} \sum_{b = 1}^{\tau - j} \left( \mathop{\mathbb{E}}\left[\left\| \sum_{s=1}^{a\tau + b + j - 1} \left( \mathbf{J}_j - \prod_{l=s}^{a\tau + b + j - 1}\mathbf{P}_{(j,l)} \right) \mathbf{f}_{(j,s)} \right\|^2\right] \right) \nonumber \\
    & = \sum_{a = K / \tau - 1}^{K / \tau - 1} \sum_{b = 1}^{\tau - j} \left( \mathop{\mathbb{E}}\left[\left\| \sum_{s = a\tau + j + 1}^{a\tau + b + j - 1} \left( \mathbf{J}_j - \prod_{l=s}^{a\tau + b + j - 1}\mathbf{P}_{(j,l)} \right) \mathbf{f}_{(j,s)} \right\|^2\right] \right) \label{eq:t4_last_zeroout} \\
    & = \sum_{a = K / \tau - 1}^{K / \tau - 1} \sum_{b = 1}^{\tau - j} \left( (b - 1) \sum_{s = a\tau + j + 1}^{a\tau + b + j - 1} \mathop{\mathbb{E}}\left[\left\| \left( \mathbf{J}_j - \prod_{l=s}^{a\tau + b + j - 1}\mathbf{P}_{(j,l)} \right) \mathbf{f}_{(j,s)} \right\|^2\right] \right) \label{eq:t4_last_jensen} \\
    & = \sum_{a = K / \tau - 1}^{K / \tau - 1} \sum_{b = 1}^{\tau - j} \left( (b - 1) \sum_{s = a\tau + j + 1}^{a\tau + b + j - 1} \mathop{\mathbb{E}}\left[\left\| \left( \mathbf{J}_j - \mathbf{I}_j \right) \mathbf{f}_{(j,s)} \right\|^2 \right] \right) \nonumber \\
    & \leq \sum_{a = K / \tau - 1}^{K / \tau - 1} \sum_{b = 1}^{\tau - j} \left( (b - 1) \sum_{s = a\tau + j + 1}^{a\tau + b + j - 1} \mathop{\mathbb{E}}\left[\left\| \mathbf{f}_{(j,s)} \right\|^2 \right] \right) \label{eq:t4_last_opout} \\
    & \leq \frac{(\tau - j)(\tau - j - 1)}{2} \sum_{a = K / \tau - 1}^{K / \tau - 1} \left( \sum_{s = a\tau + j + 1}^{a\tau + \tau - 1} \mathop{\mathbb{E}}\left[\left\| \mathbf{f}_{(j,s)} \right\|^2 \right] \right) \nonumber \\
    & = \frac{(\tau - j)(\tau - j - 1)}{2} \sum_{k = K - \tau + j + 1}^{K - 1} \mathop{\mathbb{E}}\left[\left\| \mathbf{f}_{(j,k)} \right\|^2 \right], \label{eq:t4_last_bound}
\end{align}
where (\ref{eq:t4_last_zeroout}) holds because $\mathbf{J}_j - \prod_{l = s}^{a\tau + b + j - 1}$ becomes $0$ when $s \leq a\tau + j$;
(\ref{eq:t4_last_jensen}) holds based on the convexity of $\ell_2$ norm and Jensen's inequality;
(\ref{eq:t4_last_opout}) holds based on Lemma \ref{lemma:zeroout}.

Based on (\ref{eq:niid_t4_j_bound}), (\ref{eq:t4_middle_bound}), and (\ref{eq:t4_last_bound}), $T_4$ is bounded as follows.
\begin{align}
    T_4 & \leq \frac{j (j - 1)}{2} \sum_{k=1}^{j-1} \mathop{\mathbb{E}} \left[ \left\| \mathbf{f}_{(j,k)} \right\|^2 \right] + \frac{\tau (\tau - 1)}{2} \sum_{k = j + 1}^{K - \tau + j - 1} \mathop{\mathbb{E}}\left[\left\| \mathbf{f}_{(j,k)} \right\|^2 \right] \nonumber \\
    & \quad\quad + \frac{(\tau - j)(\tau - j - 1)}{2} \sum_{k = K - \tau + j + 1}^{K - 1} \mathop{\mathbb{E}}\left[\left\| \mathbf{f}_{(j,k)} \right\|^2 \right] \nonumber \\
    & \leq \frac{\tau (\tau - 1)}{2} \left( \sum_{k=1}^{j-1} \mathop{\mathbb{E}} \left[ \left\| \mathbf{f}_{(j,k)} \right\|^2 \right] + \sum_{k = j + 1}^{K - \tau + j - 1} \mathop{\mathbb{E}}\left[\left\| \mathbf{f}_{(j,k)} \right\|^2 \right] + \sum_{k = K - \tau + j + 1}^{K - 1} \mathop{\mathbb{E}}\left[\left\| \mathbf{f}_{(j,k)} \right\|^2 \right] \right) \label{eq:niid_tau} \\
    & \leq \frac{\tau (\tau - 1)}{2} \left( \sum_{k=1}^{K} \mathop{\mathbb{E}} \left[ \left\| \mathbf{f}_{(j,k)} \right\|^2 \right] \right) \nonumber \\
    & = \frac{\tau (\tau - 1)}{2m} \left( \sum_{k=1}^{K} \sum_{i = 1}^{m} \mathop{\mathbb{E}} \left[ \left\| \nabla_j F_i(\mathbf{x}_{k}^{i}) \right\|^2 \right] \right), \label{eq:niid_t4_bound}
\end{align}
where (\ref{eq:niid_tau}) holds because $0 < j \leq \tau$.
Here, we finish bounding $T_4$.

By plugging in (\ref{eq:niid_t3_bound}) and (\ref{eq:niid_t4_bound}) into (\ref{eq:niid_t3t4}), we have
\begin{align}
    & \frac{1}{mK} \sum_{k = 1}^{K} \sum_{i=1}^{m} \mathop{\mathbb{E}} \left[ \left\| \mathbf{u}_{(j,k)} - \mathbf{x}_{(j,k)}^{i} \right\|^2 \right] \nonumber \\
    & \quad\quad \leq \frac{2 \eta^2}{K} \left( K\frac{(\tau - 1)}{2} \sigma_j^2 + \frac{\tau (\tau - 1)}{2m} \left( \sum_{k=1}^{K} \sum_{i = 1}^{m} \mathop{\mathbb{E}} \left[ \left\| \nabla_j F_i(\mathbf{x}_{k}^i) \right\|^2 \right] \right) \right) \nonumber \\
    & \quad\quad = \eta^2 (\tau - 1) \sigma_j^2 + \frac{\eta^2 \tau (\tau - 1)}{mK} \left( \sum_{k=1}^{K} \sum_{i = 1}^{m} \mathop{\mathbb{E}} \left[ \left\| \nabla_j F_i(\mathbf{x}_{k}^{i}) \right\|^2 \right] \right) \label{eq:niid_before}
\end{align}

The local gradient term on the right-hand side in (\ref{eq:niid_before}) can be rewritten using the following inequality.
\begin{align}
    \mathop{\mathbb{E}} \left[ \left\| \nabla_j F_i(\mathbf{x}_{k}^{i}) \right\|^2 \right] & = \mathop{\mathbb{E}} \left[ \left\| \nabla_j F_i(\mathbf{x}_{k}^{i}) - \nabla_j F_i(\mathbf{u}_{k}) + \nabla_j F_i(\mathbf{u}_{k}) \right\|^2 \right] \nonumber \\
    & \leq 2 \mathop{\mathbb{E}} \left[ \left\| \nabla_j F_i(\mathbf{x}_{k}^{i}) - \nabla_j F_i(\mathbf{u}_{k}) \right\|^2 \right] + 2 \mathop{\mathbb{E}} \left[ \left\| \nabla_j F_i(\mathbf{u}_{k}) \right\|^2 \right] \label{eq:niid_localgrad_jensen} \\
    & \leq 2 L_j^2 \mathop{\mathbb{E}} \left[ \left\| \mathbf{u}_{(j,k)} - \mathbf{x}_{(j,k)}^{i} \right\|^2 \right] + 2 \mathop{\mathbb{E}} \left[ \left\| \nabla_j F_i(\mathbf{u}_{k}) \right\|^2 \right], \label{eq:niid_localgrad}
\end{align}
where (\ref{eq:niid_localgrad_jensen}) holds based on the convexity of $\ell_2$ norm and Jensen's inequality.

Plugging in (\ref{eq:niid_localgrad}) into (\ref{eq:niid_before}), we have
\begin{align}
    & \frac{1}{mK} \sum_{k = 1}^{K} \sum_{i=1}^{m} \mathop{\mathbb{E}} \left[ \left\| \mathbf{u}_{(j,k)} - \mathbf{x}_{(j,k)}^{i} \right\|^2 \right] \nonumber \\
    & \quad\quad \leq \eta^2 (\tau - 1) \sigma_j^2 + \frac{2 \eta^2 \tau (\tau - 1) L_j^2}{mK} \sum_{k=1}^{K} \sum_{i = 1}^{m} \mathop{\mathbb{E}} \left[ \left\| \mathbf{u}_{(j,k)} - \mathbf{x}_{(j,k)}^{i} \right\|^2 \right] \nonumber \\
    & \quad\quad\quad\quad + \frac{2 \eta^2 \tau (\tau - 1)}{mK} \sum_{k=1}^{K} \sum_{i = 1}^{m} \mathop{\mathbb{E}} \left[ \left\| \nabla_j F_i(\mathbf{u}_{k}) \right\|^2 \right]
\end{align}
After a minor rearranging, we have
\begin{align}
    & \frac{1}{mK} \sum_{k = 1}^{K} \sum_{i=1}^{m} \mathop{\mathbb{E}} \left[ \left\| \mathbf{u}_{(j,k)} - \mathbf{x}_{(j,k)}^{i} \right\|^2 \right] \nonumber \\
    & \quad\quad \leq \frac{\eta^2 (\tau - 1) \sigma_j^2}{1 - 2\eta^2 \tau (\tau - 1) L_j^2} + \frac{2 \eta^2 \tau (\tau - 1)}{mK (1 - 2\eta^2 \tau (\tau - 1) L_j^2)} \sum_{k=1}^{K} \sum_{i = 1}^{m} \mathop{\mathbb{E}} \left[ \left\| \nabla_j F_i(\mathbf{u}_{k}) \right\|^2 \right] \label{eq:niid_rearrange}
\end{align}
Let us define $A_j = 2\eta^2 \tau (\tau - 1) L_j^2$. Then (\ref{eq:niid_rearrange}) is simplified as follows.
\begin{align}
    & \frac{1}{mK} \sum_{k = 1}^{K} \sum_{i=1}^{m} \mathop{\mathbb{E}} \left[ \left\| \mathbf{u}_{(j,k)} - \mathbf{x}_{(j,k)}^{i} \right\|^2 \right] \nonumber \\
    & \quad\quad \leq \frac{\eta^2 (\tau - 1) \sigma_j^2}{1 - A_j} + \frac{A_j}{mK L_j^2 (1 - A_j)} \sum_{k=1}^{K} \sum_{i = 1}^{m} \mathop{\mathbb{E}} \left[ \left\| \nabla_j F_i(\mathbf{u}_{k}) \right\|^2 \right] \nonumber
\end{align}
Based on Assumption 4, we have
\begin{align}
    & \frac{1}{mK} \sum_{k = 1}^{K} \sum_{i=1}^{m} \mathop{\mathbb{E}} \left[ \left\| \mathbf{u}_{(j,k)} - \mathbf{x}_{(j,k)}^{i} \right\|^2 \right] \nonumber \\
    & \quad\quad \leq \frac{\eta^2 (\tau - 1) \sigma_j^2}{1 - A_j} + \frac{A_j \beta_j^2}{K L_j^2 (1 - A_j)} \sum_{k=1}^{K} \mathop{\mathbb{E}} \left[ \left\| \nabla_j F(\mathbf{u}_{k}) \right\|^2 \right] + \frac{A_j \kappa_j^2}{L_j^2 (1 - A_j)} \nonumber
\end{align}
Here, we complete the proof.
\end{proof}

\section {Additional Experimental Results}
In this section, we provide detailed experimental settings and additional experimental results that support our proposed algorithm.

\subsection {Datasets and Models}

\textbf{CIFAR-10 and CIFAR-100} -- CIFAR-10 and CIFAR-100 is benchmark image datasets for classification. Both datasets have 50K training samples and 10K validation samples. Each sample is a $32 \times 32$ RGB image. We use ResNet-20 and Wide-ResNet-28-10 for CIFAR-10 and CIFAR-100 classification, respectively.
We apply weight decay using a parameter of $0.0001$ for ResNet-20 and $0.0005$ for Wide-ResNet-28-10. 

\textbf{SVHN} -- SVHN is an image dataset that consists of 73K training samples and 26K test samples. It also has additional 530K training samples. Each sample is a $32 \times 32$ RGB image. We use Wide-ResNet-16-8 for classification experiments.
We apply weight decay using a parameter of $0.0001$. 

\textbf{Fashion-MNIST} -- Fashion-MNIST is an image dataset that has 50K training samples and 10K test samples. Each sample is a $28 \times 28$ gray image. We use VGG-11 for classification experiments.
We apply weight decay using a parameter of $0.0001$. 

\textbf{IMDB review} -- IMDB consists of 50K movie reviews for natural language processing. For IMDB sentiment analysis experiments, we use a LSTM model that consists of one embedding layer followed by one bidirectional LSTM layer of size $256$. We also applied dropout with a probability of $0.3$ to both layers. The maximum number of words in the embedding layer is $10,000$ and the output dimension is $256$.
We do not apply weight decay for LSTM training.

\textbf{Federated Extended MNIST} -- FEMNIST consists of $805,263$ pictures of hand-written digits and characters.
The data is intrinsically heterogeneous such that $3,550$ writers provide different numbers of pictures.
We use a CNN that consists of 2 convolution layers and 2 fully-connected layers.
We provide the reference to the model architecture in the main paper.
When training the model, we use a random $10\%$ of the writer's samples only.

\subsection {Experimental Results}

\subsubsection {Local Model Re-distribution}
When re-distributing the models to a new set of active workers, there are two available design options.
First, the aggregated local models can be fully averaged and then distributed to other active workers.
This option slightly sacrifices the statistical efficiency due to the full averaging while the local data privacy is better protected.
Second, the aggregated local models can be re-distributed to other active workers without averaging.
This option provides a good statistical efficiency while potentially having a privacy issue.
In our experiments, we found that both options outperforms the periodic averaging.
All the performance results reported in the main paper are obtained using the second option.

Here we compare the performance of these two design options in Table \ref{tab:design_options}.
We set $\tau$ to 11 and re-distribute the local models to new active workers after every 110 iterations so that the total communication cost is the same as the periodic averaging setting.
First, interestingly, the second design option provides the better accuracy than the first design option in most of the settings.
This result demonstrates that the full model averaging likely harms the statistical efficiency regardless of the frequency.
Second, both design options consistently outperforms the periodic averaging.
In this work, we simply choose a random subset of workers as the new active workers.
Studying the impact of different device selection schemes on the convergence properties and the accuracy can be an interesting future work.

\subsubsection {Model Partitioning} \label{sec:partition}

When synchronizing a part of model in Algorithm 1, the model can be partitioned in many different ways.
Table \ref{tab:partition_options_iid} and \ref{tab:partition_options_niid} show the CIFAR-10 classification performance comparison between \textit{channel-partition} and \textit{layer-partition} for IID and non-IID data settings, respectively.
We do not see a large difference between the two different partitioning methods.
This result demonstrates that, because every parameter is guaranteed to be averaged after every $\tau$ iterations, the order of synchronizations does not strongly affect the performance.

\subsubsection{Learning Curves of IID Data Experiments}
We present the training loss and validation accuracy curves collected in our experiments.

\textbf{CIFAR-10} --
Figure \ref{fig:cifar10} shows the learning curves of ResNet-20 training on CIFAR-10.
The averaging interval $\tau$ is set to 2, 4, and 8 (\textbf{a}, \textbf{b}, \textbf{c}).
The learning rate is decayed by a factor of 10 after 150 and 225 epochs.
We clearly see that the partial averaging makes the training loss converge faster.
In addition, the partial averaging achieves a higher validation accuracy than the periodic averaging after the same number of training epochs.
The performance gap between the periodic averaging and the partial averaging becomes more significant as $\tau$ increases.

\textbf{CIFAR-100} --
Figure \ref{fig:cifar100} shows the learning curves of WideResNet-28-10 training on CIFAR-100.
The hyper-parameter settings are shown in Table 1.
The averaging interval $\tau$ is set to 2, 4, and 8 (\textbf{a}, \textbf{b}, \textbf{c}).
The learning rate is decayed by a factor of 10 after 120 and 185 epochs.
Overall, the two different averaging methods show a significant performance difference.
When the averaging interval is large (8), both averaging methods show a significant drop of accuracy but the partial averaging still outperforms the periodic averaging.

\begin{table}[t]
\scriptsize
\centering
\caption{
    CIFAR-10 (ResNet20) classification accuracy comparison between two design options: \textbf{(1)}: average the aggregated local models before re-distributing to new active workers, \textbf{(2)}: re-distribute the local models to new active workers without averaging.
}
\begin{tabular}{|c|c|c|c|c|c||c|c|} \hline
dataset & batch size (LR) & workers & avg interval & active ratio & Dir($\alpha$) & design \textbf{(1)} & design \textbf{(2)} \\ \hline \hline
\multirow{9}{*}{\shortstack{CIFAR-10\\(ResNet20)}} & \multirow{8}{*}{32 (0.4)} & \multirow{9}{*}{128} & \multirow{9}{*}{11} & \multirow{3}{*}{$100\%$} & 1 & $90.74\pm 0.1\%$ & \textbf{91.54} $\pm 0.1\%$ \\
\cline{6-8}
 & & & & & 0.5 & $90.53\pm 0.1\%$ & \textbf{91.43}$\pm 0.1\%$ \\
\cline{6-8}
 & & & & & 0.1 & $90.39\pm 0.2\%$ & \textbf{91.08} $\pm 0.1\%$ \\
\cline{5-8}
 & & & & \multirow{3}{*}{$50\%$} & 1 & \textbf{90.69}$\pm 0.1\%$ & 90.64 $\pm 0.2\%$ \\
\cline{6-8}
 & & & & & 0.5 & $90.23\pm 0.2\%$ & \textbf{91.02}$\pm 0.3\%$ \\
\cline{6-8}
 & & & & & 0.1 & $89.89\pm 0.2\%$ & \textbf{90.17}$\pm 0.2\%$ \\
\cline{5-8}
 & & & & \multirow{3}{*}{$25\%$} & 1 & $89.64\pm 0.3\%$ & \textbf{91.00} $\pm 0.2\%$ \\
\cline{6-8}
 & & & & & 0.5 & $89.39\pm 0.3\%$ & \textbf{90.16}$\pm 0.3\%$ \\
\cline{2-2}\cline{6-8}
 & 32 (0.2) & & & & 0.1 & $88.32\pm 0.2\%$ & \textbf{88.95} $\pm 0.3\%$ \\
\hline
\end{tabular}
\label{tab:design_options}
\end{table}

\begin{table}[t]
\scriptsize
\centering
\caption{
    CIFAR-10 classification performance comparison between \textit{channel-partition} and \textit{layer-partition} (IID data).
    We do not see any meaningful difference between the two partitioning methods.
    \vspace{0.2cm}
}
\begin{tabular}{|c|c|c|c|c||c|c|} \hline
dataset & model & \# of workers & epochs & avg interval & channel-partition & layer-partition \\ \hline \hline
\multirow{3}{*}{CIFAR-10} & \multirow{3}{*}{ResNet20} & \multirow{3}{*}{128} & \multirow{3}{*}{300} & 2 & \textbf{91.89}$\pm 0.1\%$ & $91.81\pm 0.1\%$ \\
 &  &  &  & 4 & $90.56\pm 0.2\%$ & \textbf{90.58}$\pm 0.2\%$ \\
 &  &  &  & 8 & \textbf{87.13}$\pm 0.1\%$ & $87.10\pm 0.1\%$ \\ \hline
\end{tabular}
\label{tab:partition_options_iid}
\end{table}

\begin{table}[t]
\scriptsize
\centering
\caption{
    CIFAR-10 (ResNet20) classification accuracy comparison between two model partitioning options: \textbf{(1)}: \textit{channel-partitioning}, \textbf{(2)}: \textit{layer-partitioning}.
    \vspace{0.2cm}
}
\begin{tabular}{|c|c|c|c|c|c||c|c|} \hline
dataset & batch size (LR) & workers & avg interval & active ratio & Dir($\alpha$) & \textit{layer-partitioning} & \textit{channel-partitioning} \\ \hline \hline
\multirow{9}{*}{\shortstack{CIFAR-10\\(ResNet20)}} & \multirow{8}{*}{32 (0.4)} & \multirow{9}{*}{128} & \multirow{9}{*}{11} & \multirow{3}{*}{$100\%$} & 1 & $91.21\pm 0.1\%$ & \textbf{91.54} $\pm 0.1\%$ \\
\cline{6-8}
 & & & & & 0.5 & \textbf{91.56}$\pm 0.2\%$ & $91.43\pm 0.1\%$ \\
\cline{6-8}
 & & & & & 0.1 & \textbf{91.31}$\pm 0.1\%$ & $91.08\pm 0.1\%$ \\
\cline{5-8}
 & & & & \multirow{3}{*}{$50\%$} & 1 & \textbf{90.66}$\pm 0.1\%$ & $90.61\pm 0.2\%$ \\
\cline{6-8}
 & & & & & 0.5 & $90.97\pm 0.2\%$ & \textbf{91.02}$\pm 0.3\%$ \\
\cline{6-8}
 & & & & & 0.1 & $90.09\pm 0.3\%$ & \textbf{90.64}$\pm 0.2\%$ \\
\cline{5-8}
 & & & & \multirow{3}{*}{$25\%$} & 1 & $90.48\pm 0.3\%$ & \textbf{91.00}$\pm 0.2\%$ \\
\cline{6-8}
 & & & & & 0.5 & $89.39\pm 0.3\%$ & \textbf{90.02}$\pm 0.2\%$ \\
\cline{2-2}\cline{6-8}
 & 32 (0.2) & & & & 0.1 & $88.32\pm 0.2\%$ & \textbf{88.92}$\pm 0.1\%$ \\
\hline
\end{tabular}
\label{tab:partition_options_niid}
\end{table}

\textbf{SVHN} --
Figure \ref{fig:svhn} shows the learning curves of WideResNet-16-8 training on SVHN.
The hyper-parameter settings are shown in Table 1.
The averaging interval $\tau$ is set to 4, 16, and 64 (\textbf{a}, \textbf{b}, \textbf{c}).
The learning rate is decayed by a factor of 10 after 80 and 120 epochs.
We could use a relatively longer averaging interval than the other experiments without much losing the performance due to the large number of training samples.
The partial averaging slightly outperforms the periodic averaging in all the settings.

\begin{figure}[t]
\centering
\includegraphics[width=\columnwidth]{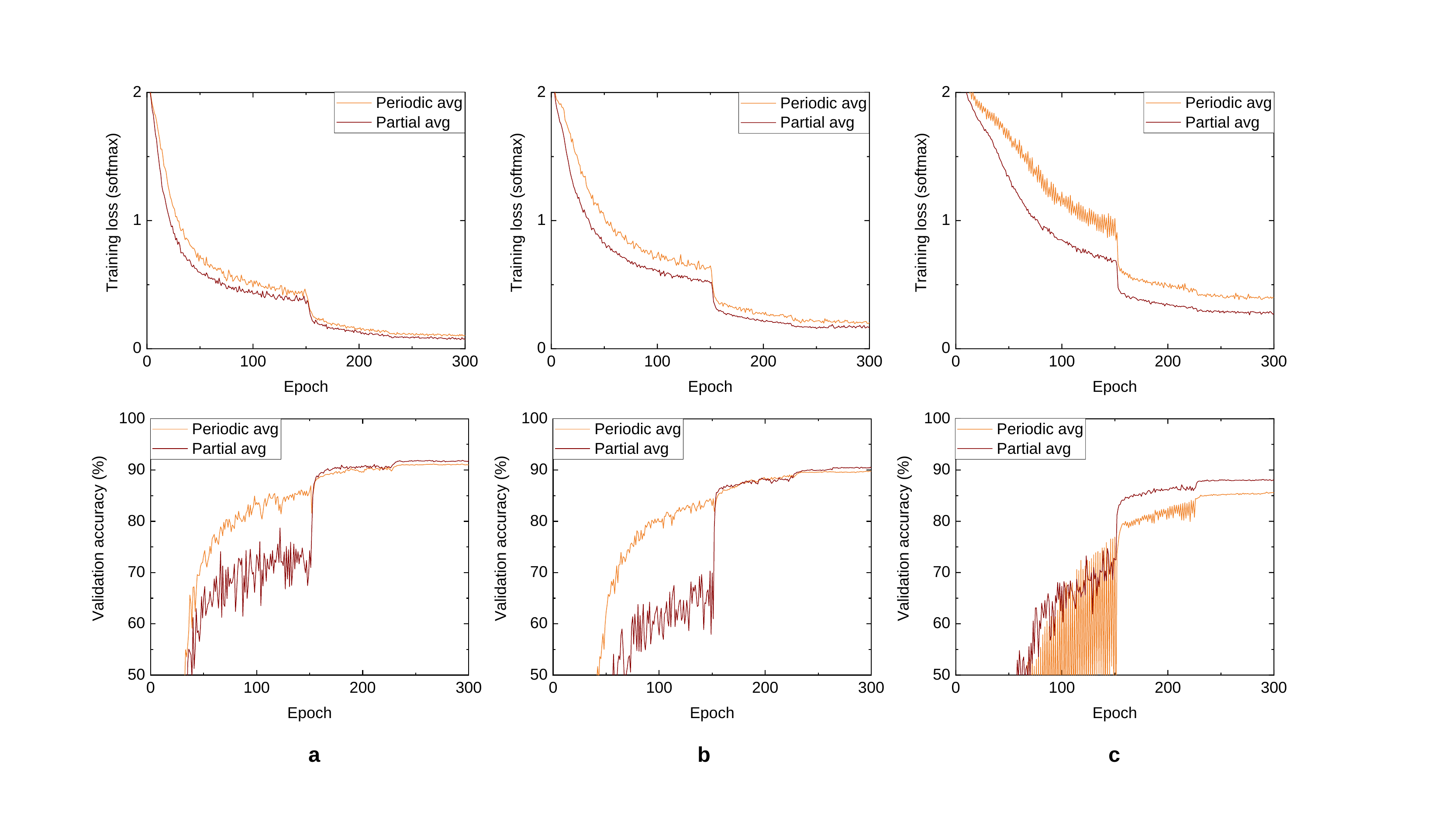}
\caption{
    The learning curves of ResNet-20 (CIFAR-10) training.
    The number of workers is 128 and the hyper-parameters are shown in Table 1.
    The top charts are the training loss and the bottom charts are the validation accuracy.
    The averaging interval $\tau$ is set to 2, 4, and 8 (\textbf{a}, \textbf{b}, and \textbf{c}).
}
\label{fig:cifar10}
\end{figure}

\begin{figure}[t]
\centering
\includegraphics[width=\columnwidth]{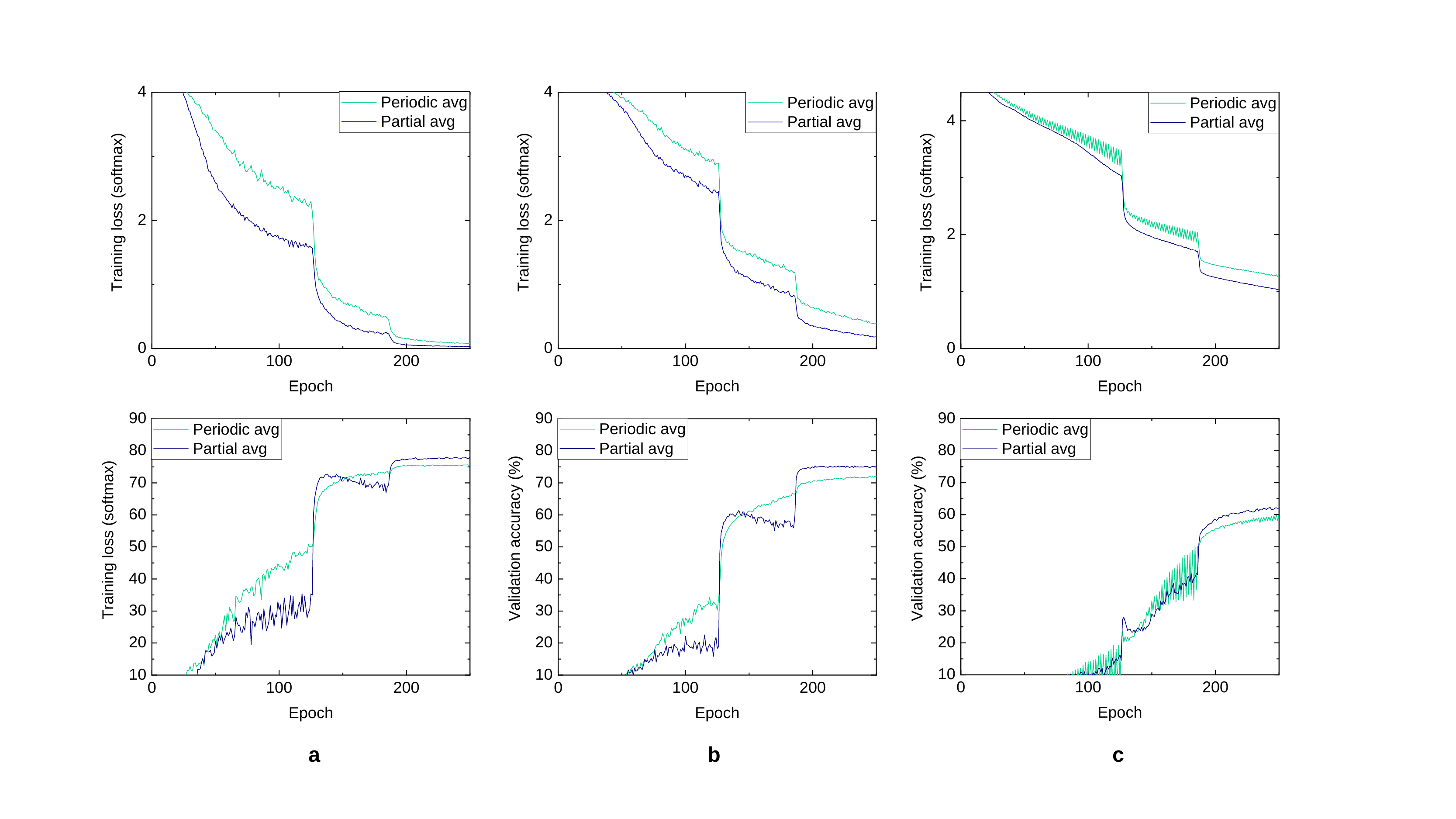}
\caption{
    The learning curves of WideResNet-28-10 (CIFAR-100) training.
    The number of workers is 128 and the hyper-parameters are shown in Table 1.
    The top charts are the training loss and the bottom charts are the validation accuracy.
    The averaging interval $\tau$ is set to 2, 4, and 8 (\textbf{a}, \textbf{b}, and \textbf{c}).
}
\label{fig:cifar100}
\end{figure}

\begin{figure}[t]
\centering
\includegraphics[width=\columnwidth]{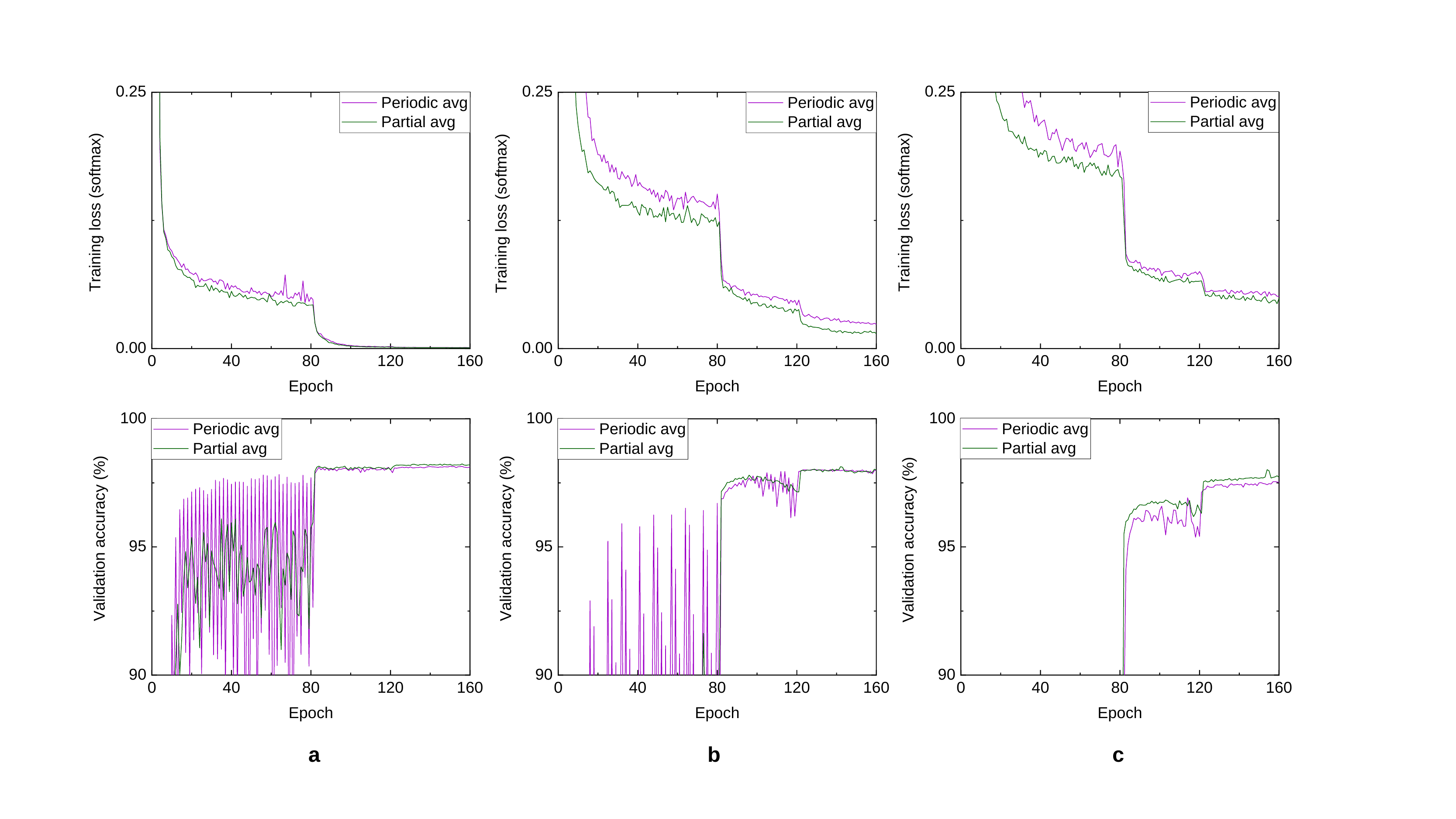}
\caption{
    The learning curves of WideResNet-16-8 (SVHN) training.
    The number of workers is 128 and the hyper-parameters are shown in Table 1.
    The top charts are the training loss and the bottom charts are the validation accuracy.
    The averaging interval $\tau$ is set to 4, 16, and 64 (\textbf{a}, \textbf{b}, and \textbf{c}).
}
\label{fig:svhn}
\end{figure}

\begin{figure}[t]
\centering
\includegraphics[width=\columnwidth]{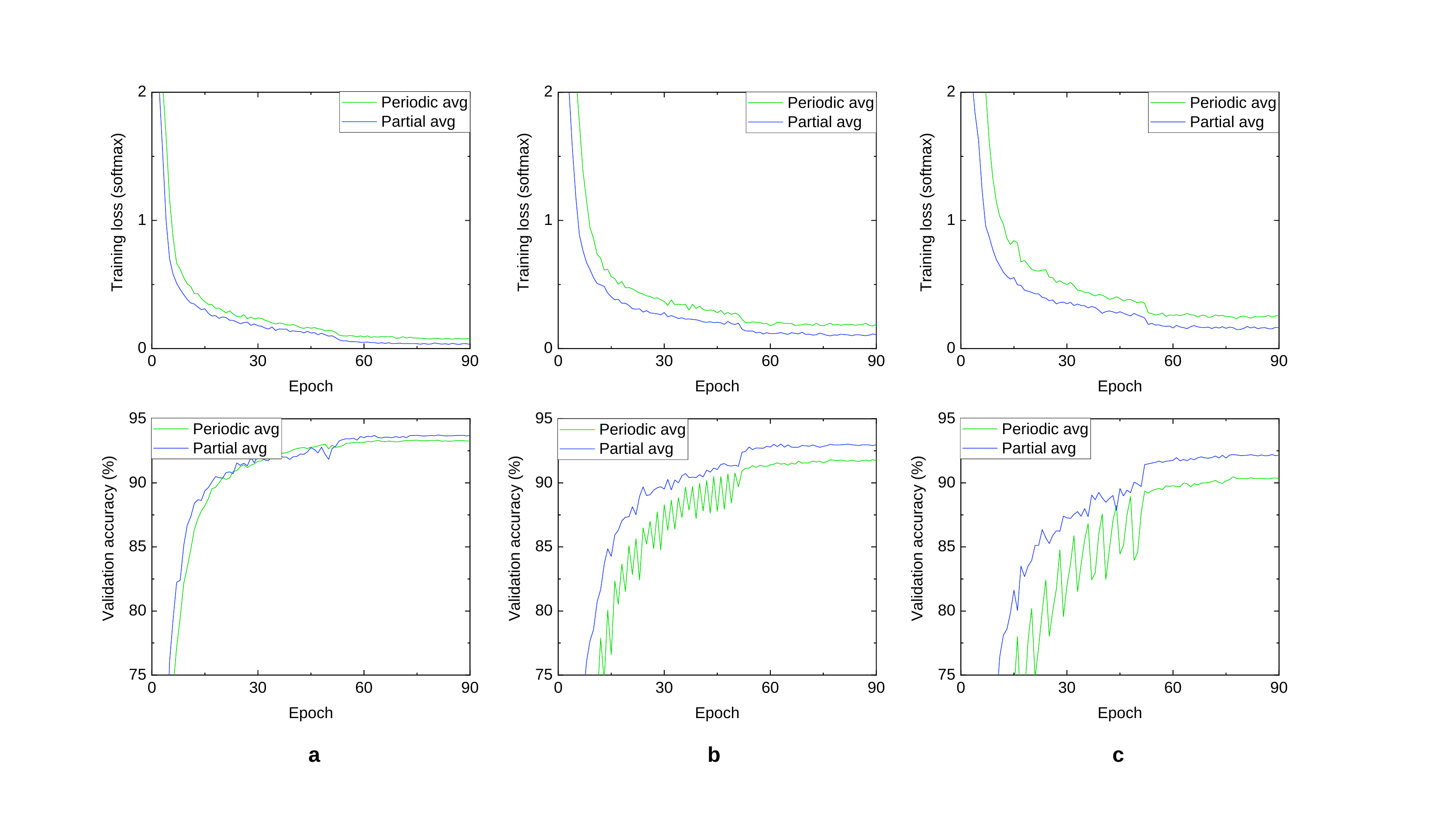}
\caption{
    The learning curves of VGG-11 (Fashion-MNIST) training.
    The number of workers is 128 and the hyper-parameters are shown in Table 1.
    The top charts are the training loss and the bottom charts are the validation accuracy.
    The averaging interval $\tau$ is set to 2, 4, and 8 (\textbf{a}, \textbf{b}, and \textbf{c}).
}
\label{fig:fmnist}
\end{figure}

\textbf{Fashion-MNIST} --
Figure \ref{fig:fmnist} shows the learning curves of VGG-11 training on Fashion-MNIST.
The hyper-parameter settings are shown in Table 1.
The averaging interval $\tau$ is set to 2, 4, and 8 (\textbf{a}, \textbf{b}, \textbf{c}).
The learning rate is decayed by a factor of 10 after 50 and 75 epochs.
The partial averaging consistently outperforms the periodic averaging for all the different averaging interval settings.

\textbf{IMDB reviews} --
Figure \ref{fig:imdb} shows the learning curves of LSTM training on IMDB.
The hyper-parameter settings are shown in Table 1.
The averaging interval $\tau$ is set to 2, 4, and 8 (\textbf{a}, \textbf{b}, \textbf{c}).
The learning rate is decayed by a factor of 10 after 60 and 80 epochs.
Although the final accuracy is not significantly different between the two averaging methods, the partial averaging accuracy is still consistently higher than that of the periodic averaging.

\begin{figure}[t]
\centering
\includegraphics[width=\columnwidth]{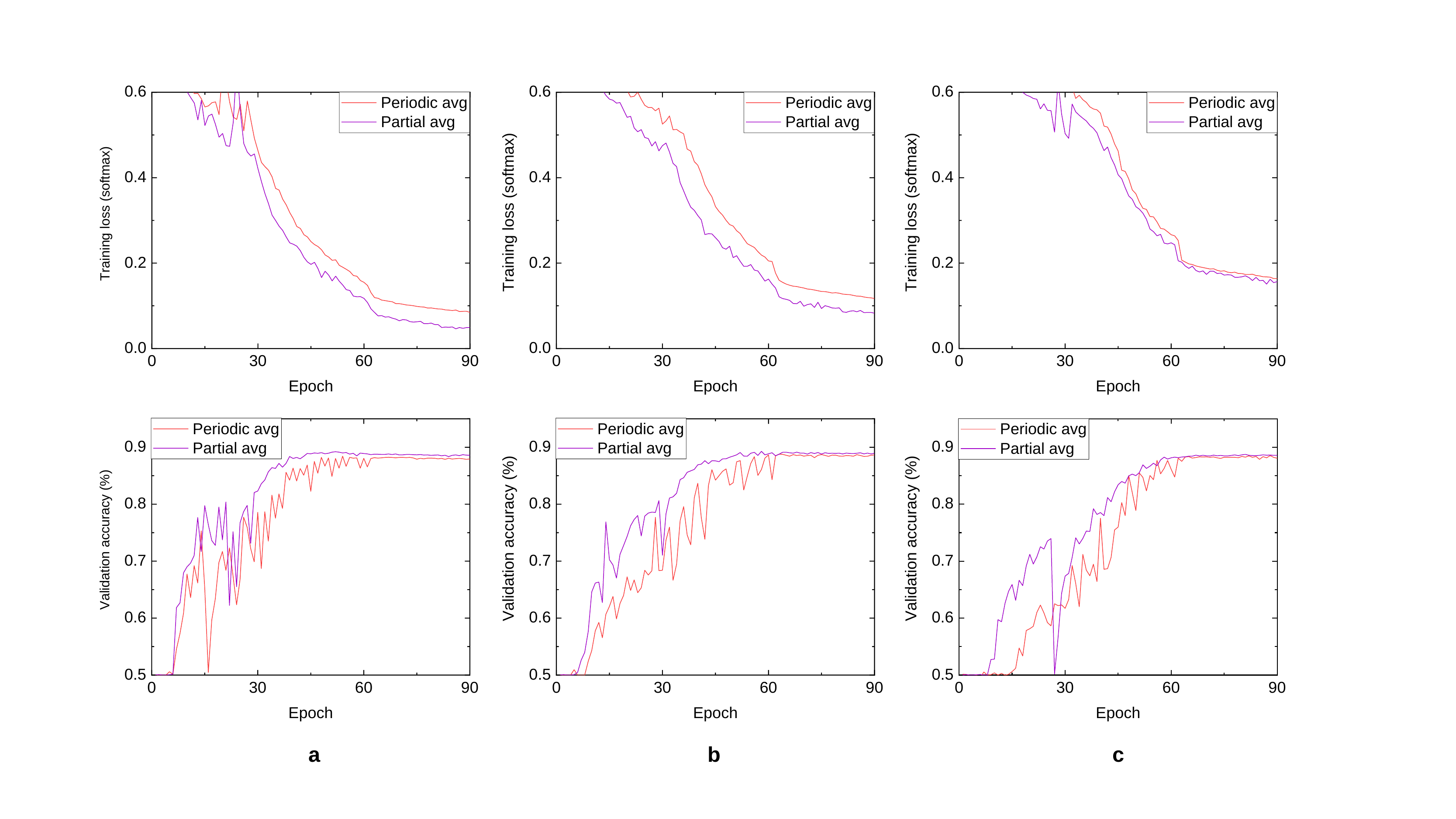}
\caption{
    The learning curves of LSTM (IMDB) training.
    The number of workers is 128 and the hyper-parameters are shown in Table 1.
    The top charts are the training loss and the bottom charts are the validation accuracy.
    The averaging interval $\tau$ is set to 2, 4, and 8 (\textbf{a}, \textbf{b}, and \textbf{c}).
}
\label{fig:imdb}
\end{figure}

\subsubsection {Learning Curves of non-IID Data Experiments}
Here, we present the learning curves for non-IID data experiments.
We summarize two key observations on the learning curves as follows.
First, the partial averaging provides smoother training loss curves than the periodic full averaging as well as a faster convergence.
Especially when the averaging interval is large ($\tau=8$), the periodic averaging curves fluctuate significantly while the partial averaging curves are stable.
Second, the validation curves show noticeable differences.
The partial averaging shows a sharp increase of validation curves when the learning rate is decayed.
It has been known that the high degree of noise in the early training can improve the generalization performance.
This pattern of validation curves is well aligned with the presented final accuracy.

\textbf{CIFAR-10} --
Figure \ref{fig:cifar10_noniid} shows the learning curves of ResNet20 training on CIFAR-10 under more realistic Federated Learning settings.
\textbf{a, b, c}: Training loss curves with activation ratio of $25\%$, $50\%$, and $100\%$, respectively.
\textbf{d, e, f}: Validation accuracy curves with activation ratio of $25\%$, $50\%$, and $100\%$, respectively.
The three columns correspond to Dirichlet's concentration parameters of $0.1$, $0.5$, and $1.0$, respectively.
The learning rate is decayed by a factor of 10 after 5000 and 7500 iterations.
We see that the partial averaging shows a faster convergence of training loss as well as a higher accuracy in all the experiment.

\begin{figure}[t]
\centering
\includegraphics[width=0.9\columnwidth]{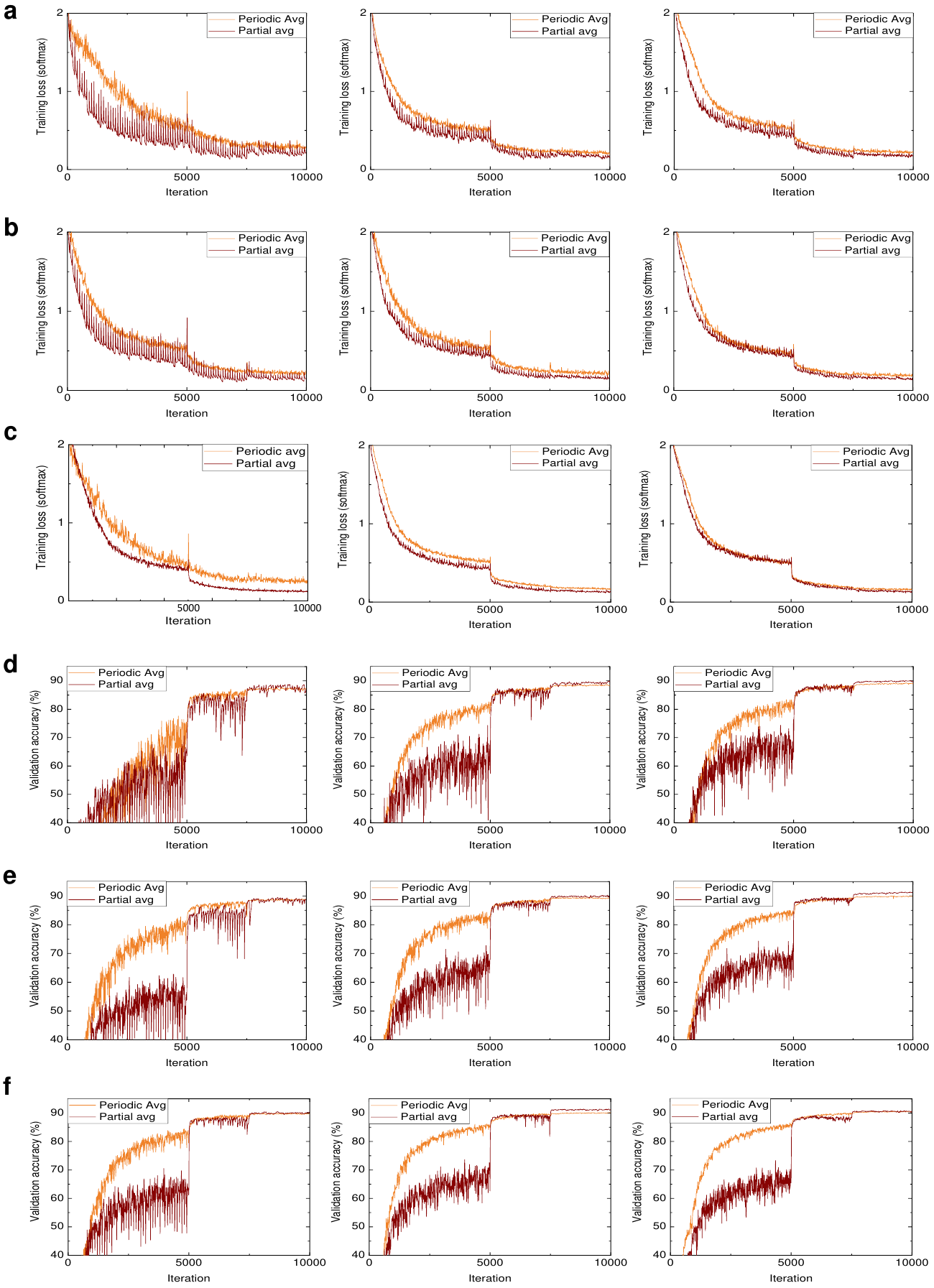}
\caption{
    The learning curves of CIFAR-10 with various degrees of data heterogeneity and ratios of the active workers. \textbf{a, b, c}: Training loss curves with activation ratio of $25\%$, $50\%$, and $100\%$, respectively. \textbf{d, e, f}: Validation accuracy curves with activation ratio of $25\%$, $50\%$, and $100\%$, respectively. The three columns correspond to Dirichlet's concentration parameters of $0.1$, $0.5$, and $1.0$, respectively.
}
\label{fig:cifar10_noniid}
\end{figure}

\textbf{IMDB} --
Figure \ref{fig:imdb_noniid} shows the learning curves of LSTM training on IMDB under more realistic Federated Learning settings.
\textbf{a, b, c}: Training loss curves with activation ratio of $25\%$, $50\%$, and $100\%$, respectively.
\textbf{d, e, f}: Validation accuracy curves with activation ratio of $25\%$, $50\%$, and $100\%$, respectively.
The two columns correspond to Dirichlet's concentration parameters of $0.5$ and $1.0$, respectively.
The learning rate is decayed by a factor of 10 after 1500 and 1800 iterations.
Likely to CIFAR-10, the partial averaging shows superior classification performance than the periodic averaging.
The performance gap is even larger than that of the same IMDB sentiment analysis with IID settings.

\textbf{FEMNIST} --
Figure \ref{fig:femnist} shows the learning curves of CNN training on FEMNIST.
Because the data distribution is already heterogeneous across the workers, we adjust the ratio of device activation.
\textbf{a, b, c} show the learning curves with $25\%$, $50\%$, and $100\%$ activation ratios, respectively.
The partial model averaging achieves the higher accuracy in all the settings.
Especially, the training loss curves show a significant difference between the two model averaging methods.

\begin{figure}[t]
\centering
\includegraphics[width=0.7\columnwidth]{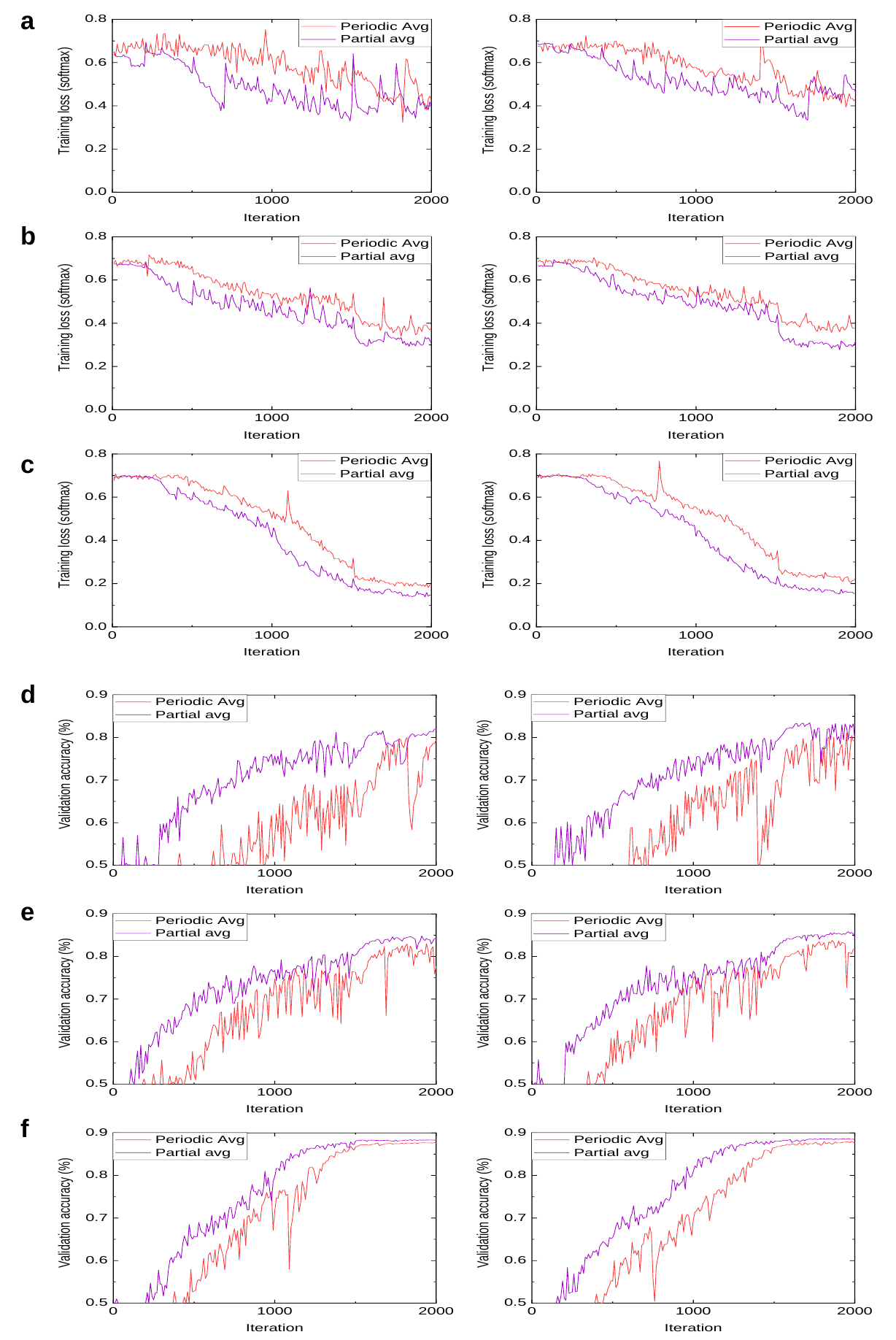}
\caption{
    The learning curves of IMDB with various degrees of data heterogeneity and ratios of the active workers. \textbf{a, b, c}: Training loss curves with activation ratio of $25\%$, $50\%$, and $100\%$, respectively. \textbf{d, e, f}: Validation accuracy curves with activation ratio of $25\%$, $50\%$, and $100\%$, respectively. The two columns correspond to Dirichlet's concentration parameters of $0.5$ and $1.0$, respectively.
}
\label{fig:imdb_noniid}
\end{figure}

\begin{figure}[t]
\centering
\includegraphics[width=0.7\columnwidth]{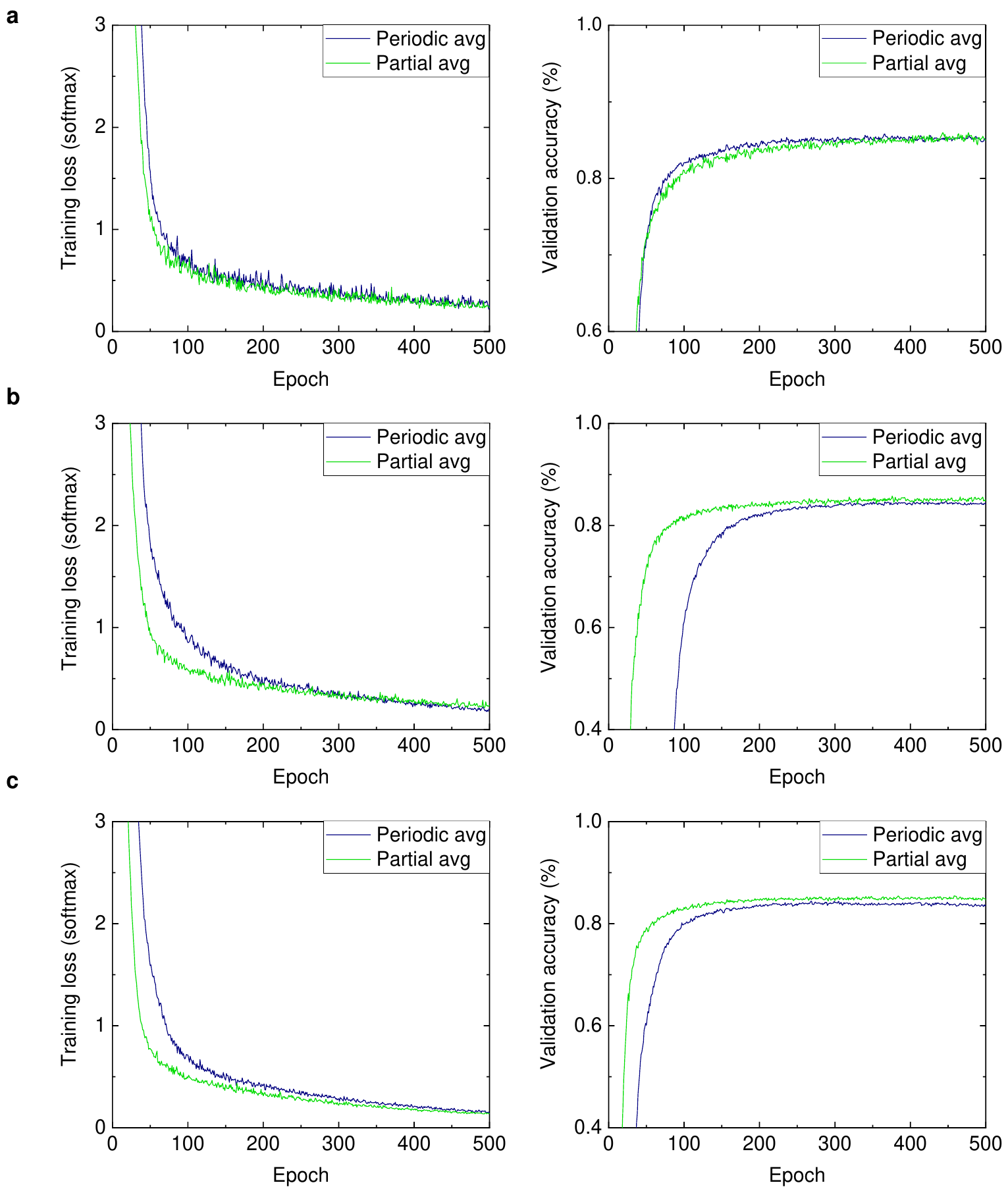}
\caption{
    The learning curves of FEMNIST with different ratios of the active workers.
    \textbf{a, b, c}: learning curves with $25\%$, $50\%$, and $100\%$ of random devices, respectively.
    The left charts are the training loss curves and the right charts are the validation accuracy curves.
}
\label{fig:femnist}
\end{figure}

\end{document}